\title{Adversarially-Robust TD Learning with Markovian Data: Finite-Time Rates and Fundamental Limits}
\newcommand{\mc}{\mathcal}
\newtheorem{definition}{Definition}
\newtheorem{theorem}{Theorem}
\newtheorem{lemma}{Lemma}
\newtheorem{assumption}{Assumption}
\DeclarePairedDelimiter\ceil{\lceil}{\rceil}
\DeclarePairedDelimiter\floor{\lfloor}{\rfloor}
\DeclarePairedDelimiterX{\norm}[1]{\lVert}{\rVert}{#1}
\definecolor{mygreen}{rgb}{0.0, 0.5, 0.0}
\definecolor{winered}{rgb}{0.8,0,0}
\definecolor{myblue}{rgb}{0,0,0.8}
\author{%
  David S.~Hippocampus\thanks{Use footnote for providing further information
    about author (webpage, alternative address)---\emph{not} for acknowledging
    funding agencies.} \\
  Department of Computer Science\\
  Cranberry-Lemon University\\
  Pittsburgh, PA 15213 \\
  \texttt{hippo@cs.cranberry-lemon.edu} \\
}
\begin{document}

%

%

\twocolumn[

\aistatstitle{Adversarially-Robust TD Learning with Markovian Data: Finite-Time Rates and Fundamental Limits}

\aistatsauthor{ Sreejeet Maity \And Aritra Mitra}

\aistatsaddress{ North Carolina State University \And North Carolina State University} ]

\begin{abstract}
One of the most basic problems in reinforcement learning (RL) is policy evaluation: estimating the long-term return, i.e., value function, corresponding to a given fixed policy. The celebrated Temporal Difference (TD) learning algorithm addresses this problem, and recent work has investigated finite-time convergence guarantees for this algorithm and variants thereof. However, these guarantees hinge on the reward observations being always generated from a well-behaved (e.g., sub-Gaussian) true reward distribution. Motivated by harsh, real-world environments where such an idealistic assumption may no longer hold, we revisit the policy evaluation problem from the perspective of \emph{adversarial robustness}. In particular, we consider a Huber-contaminated reward model where an adversary can arbitrarily corrupt each reward sample with a small probability $\epsilon$. Under this observation model, we first show that the adversary can cause the vanilla TD algorithm to converge to any arbitrary value function. We then develop a novel algorithm called \texttt{Robust-TD} and prove that its finite-time guarantees match that of vanilla \texttt{TD} with linear function approximation up to a small $O(\epsilon)$ term that captures the effect of corruption. We complement this result with a minimax lower bound, revealing that such an additive corruption-induced term is unavoidable. To our knowledge, these results are the first of their kind in the context of adversarial robustness of stochastic approximation schemes driven by Markov noise. The key new technical tool that enables our results is an analysis of the Median-of-Means estimator with corrupted, time-correlated data that might be of independent interest to the literature on robust statistics.  
\end{abstract}

\section{Introduction} 
In recent years, a significant body of research has focused on understanding the effects of adversarial corruption on deep learning~\citep{goodfellow, madry}. While this line of work has contributed significantly to the design of reliable and trustworthy machine-learning models, the developments have primarily catered to supervised learning~\citep{javanmard}. Much less is understood when an adversary poisons data arriving in an online manner in the context of reinforcement learning (RL). Arguably, one of the most fundamental problems in RL is that of \emph{policy evaluation}, where a learner unaware of the true underlying model of a Markov Decision Process (MDP) seeks to evaluate the long-term return (i.e., the value function) associated with a given fixed policy. To do so, at each time step, it plays an action based on the policy to be evaluated, observes as data a reward, and transitions to a new state. Importantly, the rewards are always generated based on the (unknown) reward functions of the MDP. Departing from this paradigm, we consider a scenario where a small fraction of the reward samples can be \emph{arbitrarily} corrupted by a powerful adversary possessing complete knowledge of the MDP. One would ideally like to obtain guarantees on value function estimation that \emph{degrade gracefully} with the corruption fraction. Whether this is possible is a hitherto unexplored question that we resolve in this paper.

To provide context, in the absence of adversarial corruption, the classical Temporal Difference (TD) learning algorithm of~\cite{sutton1988} solves the policy evaluation problem. An asymptotic analysis of \texttt{TD}(0) - the simplest TD learning algorithm - with linear function approximation was provided in the seminal work of~\cite{tsitsiklisroy}. More recently, a growing body of work has provided finite-time guarantees for TD learning with linear function approximation~\citep{dalal, bhandari_finite, srikant, patil2023,  mitra2024simple}, and more general nonlinear stochastic approximation (SA) schemes~\citep{chenQ, chen2022Auto, Qu, chen2024lyapunov}. The guarantees in each of the papers above assume that the rewards are always drawn from true reward distributions linked to the underlying MDP. Moreover, the rewards are either assumed to be deterministic or generated from light-tailed sub-Gaussian distributions. 

\textbf{Motivation.} Unfortunately, such assumptions do not adequately capture harsh, real-world environments. For instance, in large-scale, complex systems such as the power grid~\citep{kosut} or multi-robot networks~\citep{gil}, data is collected via imperfect sensors prone to unexpected failures and adversarial attacks. Motivated by the need to safeguard against such attacks that are common in cyber-physical  systems~\citep{dibaji}, we consider a reward contamination model where, at each time step, with probability $1-\epsilon$, the reward is generated from a true reward distribution, and with probability $\epsilon$, from an arbitrary error distribution controlled by an adversary. Here, $\epsilon$ captures the power of the adversary. Our data poisoning model is directly inspired by the Huber model from robust statistics~\citep{huber, huber2}. Similar reward contamination models have also been widely studied in the context of multi-armed bandits~\citep{junadv, lykouris, liuadv, guptaadv, kapoor}. However, beyond bandits, when it comes to \emph{SA schemes in RL}, no prior work has provided a finite-time analysis of the effects of such attacks. Given this premise, we ask:

\textit{Is it possible to perform accurate value function estimation under the Huber-contaminated reward model? If so, what are the fundamental limits on performance imposed by this attack model?} 

The main difficulty in answering these questions arises from the need to deal with two different forms of uncertainty: the lack of knowledge of the MDP, and the uncertainty injected by the adversary. Furthermore, other than requiring the true reward distributions to have finite first and second moments, we make no assumptions of sub-Gaussanity. This makes it particularly challenging for the learner to distinguish between time-correlated, potentially heavy-tailed clean rewards (inliers) and adversarial outliers. 

\textbf{Our Contributions.} In this paper, we systematically study adversarial robustness in the context of policy evaluation with linear function approximation. Our specific contributions are as follows. 

$\bullet$ \textbf{Vulnerability of \texttt{TD}}. We start with a simple result (Theorem~\ref{thm:vulnerability}) showing that under the Huber-contaminated reward model, an adversary can cause the vanilla \texttt{TD}(0) algorithm to converge to any arbitrary point. This result directly motivates the need for adversarially robust variants of \texttt{TD}(0). 

$\bullet$ \textbf{Robust Mean Estimation with Markov Data}. A key ingredient in our algorithmic development is that of robust mean estimation. While the literature on robust statistics has made significant advances in this regard~\citep{lai, chenrobust}, the results we know of all assume independent and identically (i.i.d.) distributed inliers. The same is true for some recent papers in RL~\citep{zhuanghv, zhu2024hv} that consider heavy-tailed i.i.d. rewards with no corruption. Since the reward samples in our setting are generated based on a Markov chain, we cannot directly appeal to such existing work. To overcome this difficulty, we provide the \emph{first analysis of the Median-of-Means (\texttt{MoM}) estimator under Huber contamination and Markovian data.} In particular, our analysis carefully exploits the ergodicity of the underlying Markov chain along with a novel coupling argument. We also note that while the popular \texttt{MoM} scheme was known to be robust to heavy-tailed data, the fact that it is also robust to adversarial corruption appears to be new. As such, we believe that our main result in this context, namely Theorem~\ref{theorem:RobustMean}, and its analysis in Appendix~\ref{app:proofrobustmeanest}, might be of independent interest to robust statistics. 

$\bullet$ \textbf{Robust-TD Algorithm}. On the algorithmic front, our main contribution is the development of an adversarially robust variant of \texttt{TD}(0) called \texttt{Robust-TD}. \texttt{Robust-TD} relies on two main new ideas: (i) a robust mean estimation step that uses historical data to construct robust empirical TD update directions; and (ii) a dynamic thresholding step that provides a second layer of safety by accounting for low-probability events where the robust mean estimation guarantees might fail to hold. As we discuss in detail in Section~\ref{sec:robustTD}, the design of the thresholding radius is a very delicate task: unless designed carefully, one may not achieve the near-optimal rates in this paper.

$\bullet$ \textbf{Main Convergence Result.} Our main convergence result for \texttt{Robust-TD} establishes a mean-square error bound of the form $\tilde{O}(\bar{\tau}_{mix}/T) + O(\epsilon)$, where $\bar{\tau}_{mix}$ is the mixing time of the underlying Markov chain, $T$ is the number of iterations, and $\epsilon$ is the corruption probability. For a specific statement of this result, see Theorem~\ref{thm:main_result}. When $\epsilon=0$, our result is consistent with prior finite-time guarantees for \texttt{TD}(0) with linear function approximation~\citep{bhandari_finite, srikant}. Thus, \texttt{Robust-TD} is provably robust to adversarial reward contamination, and its guarantees match that of vanilla \texttt{TD}(0) up to a small $O(\epsilon)$ term. Establishing this result is non-trivial as we need to contend with the complex interplay between Markovian noise, adversarial perturbations, and function approximation. We elaborate on these challenges in Sections~\ref{sec:model} and~\ref{sec:results}. To our knowledge, Theorem~\ref{thm:main_result} is the first result of its kind for stochastic approximation schemes in RL driven by Markov noise and subject to adversarial outliers. 

$\bullet$ \textbf{Minimax Lower Bound.} To complement the upper-bound in Theorem~\ref{thm:main_result}, we provide an algorithm-independent lower bound in Theorem~\ref{thm:lowerbnd}. The main message conveyed by this result is that the additive $O(\epsilon)$ term is \emph{unavoidable}, and captures the fundamental price of adversarial contamination. 

Overall, our algorithmic and theoretical contributions above provide a fairly complete characterization of the effects of Huber-reward-contamination on policy evaluation in general, and TD learning in particular. 


\textbf{Related Work.} We discuss the most relevant works on \emph{adversarial robustness in RL} below. A more elaborate survey appears in Appendix~\ref{app:add_lit}. Data corruption in online finite-horizon episodic RL problems is studied by~\cite{lykourisRL} and~\cite{wei2022model}, where the notion of performance is measured by cumulative regret. Our setting is \emph{fundamentally different} in that we consider an infinite horizon, discounted single-trajectory setting, where performance is captured by the mean-squared error w.r.t. the solution to the projected Bellman equation. Furthermore, our algorithm builds on stochastic approximation and differs significantly from the Upper-Confidence-Based (UCB)/Action-Elimination type algorithms employed in~\cite{lykourisRL, wei2022model}. Corruption-robust algorithms in the offline setting are considered in~\cite{zhang2022corruption}, where data tuples are collected offline in an i.i.d. manner, and the true rewards are assumed to be sub-Gaussian. In sharp contrast, data arrives sequentially in our setting, and, as such, \emph{we need to contend with corruption in heavy-tailed Markovian data} - a much more challenging setting. Different from the SA problem we consider here, outlier-robust policy gradient (PG) algorithms have been explored in~\cite{zhang2021robust}, where the issue of Markovian sampling does not arise. Finally, a very recent work~\citep{cayci2024} considers heavy-tailed rewards \emph{with no adversarial corruption} in TD learning. The analysis in their paper requires a strong realizability assumption and relies on a projection step in the algorithm to control the iterates; we require neither, making it much more challenging to tackle both heavy-tailed data and adversarial perturbations. Moreover, our proposed algorithm differs considerably from that in~\cite{cayci2024}. \textbf{In summary, our work is the first to study the topic of adversarial reward corruption in the context of TD learning with function approximation and Markovian data.}  As such, we do not focus on other potential forms of attack. In fact, as we shall see, the reward attack model we consider here is rich enough to merit non-trivial and subtle algorithmic ideas and analysis techniques. 

\section{Model and Problem Formulation}
\label{sec:model}
\vspace{-2.5 mm}
We start by reviewing the essentials of policy evaluation with linear function approximation, and then proceed to set up our problem of interest. 

\textbf{The Policy Evaluation Problem.} We consider a Markov Decision Process (MDP) denoted by $\mc{M}=(\mc{S},\mc{A},\mc{P},\mc{R},\gamma)$,  where $\mc{S}$ is a finite state space of size $m$, $\mc{A}$ is a finite action space, $\mc{P}$ is a set of action-dependent Markov transition kernels, $\mc{R}$ is a reward function, and $\gamma \in (0,1)$ is the discount factor. We consider deterministic policies, where each deterministic policy $\mu: \mc{S} \rightarrow \mc{A}$ maps a state to an action.\footnote{The assumption of deterministic policies is only for simplicity of exposition. Our results can be easily extended to account for stochastic policies.} A fixed policy $\mu$ induces a Markov reward process (MRP) characterized by a transition matrix $P_{\mu}$, and a reward function $R_{\mu}:\mc{S} \rightarrow \mathbb{R}$. Here, $P_{\mu}(s,s')$ denotes the probability of transitioning from state $s$ to state $s'$ under the action $\mu(s)$. Associated with each state $s \in \mc{S}$ is a conditional reward distribution $\mc{D}_{\mu}(\cdot|s)$: whenever action $\mu(s)$ is played in state $s$, a noisy \emph{random} reward $r(s)$ drawn from $\mc{D}_{\mu}(\cdot|s)$ is observed, such that $\mathbb{E}_{r(s) \sim \mc{D}_{\mu}(\cdot|s)}[r(s)]= R_\mu(s)$, and $\mathbb{E}_{r(s) \sim  \mc{D}_{\mu}(\cdot|s)}[(r(s)-R_{\mu}(s))^2] \leq \rho^2$, where $\rho$ is assumed to be finite. In words, upon playing $\mu(s)$ in state $s$, the observed noisy reward has mean $R_\mu(s)$ and variance upper-bounded by $\rho^2.$ We assume that the mean reward at each state is uniformly bounded, i.e., $\exists \bar{r} > 0$ such that $|R_{\mu}(s)| \leq \bar{r}, \forall s \in \mc{S}.$
The long-term value of a state $s$ in the MRP induced by $\mu$ is captured by a value function $V_{\mu}(s)$. Formally, $V_{\mu}(s)$ is the discounted expected cumulative reward obtained by playing policy $\mu$ starting from initial state $s$:
\begin{equation}
    V_{\mu}(s) = \mathbb{E}\left[ \sum_{t=0}^{\infty}\gamma^t R_{\mu}(s_t) | s_0 = s \right],
\label{eqn:v_cum}
\end{equation}
where $s_t$ represents the state of the Markov chain (induced by $\mu$) at time $t$, when initiated from $s_0=s$. The primary goal of this paper is to study the \emph{policy evaluation} problem, i.e., the problem of evaluating the value function $V_{\mu}$ corresponding to a given policy $\mu$, when $R_{\mu}$ and $P_{\mu}$ are \emph{unknown} to the learner. 

\textbf{Linear Function Approximation.} In practice, the size of the state space $\mc{S}$ can be  extremely large. This renders the task of estimating $V_\mu$ \textit{exactly} (based on observations of rewards and state transitions) intractable. One common approach to tackle this difficulty is to consider a parametric approximation $\hat{V}_\theta$ of $V_\mu$ in the linear subspace spanned by a set $\{\phi_k\}_{k\in [K]}$ of $K \ll m$  basis vectors, where $\phi_k =[\phi_k(1), \ldots, \phi_k(m)]^{\top} \in \mathbb{R}^{m}$. Specifically, we have 
$ \hat{V}_\theta(s) = \sum_{k=1}^{K} \theta(k)\phi_k(s),$ 
where $\theta = [\theta(1), \ldots, \theta(K)]^{\top} \in \mathbb{R}^{K}$ is a weight vector. Let $\Phi \in \mathbb{R}^{m \times K}$ be a matrix with $\phi_k$ as its $k$-th column; we will denote the $s$-th row of $\Phi$ by $\phi(s) \in \mathbb{R}^{K}$, and refer to it as the feature vector for state $s$. Compactly, $\hat{V}_\theta=\Phi \theta$, and for each $s\in \mc{S}$, we have that $\hat{V}_\theta(s) = \langle \phi(s), \theta \rangle$. As is standard~\citep{bhandari_finite, srikant}, we assume that the columns of $\Phi$ are linearly independent, and that the feature vectors are normalized, i.e., for each $s \in \mc{S}$, $\Vert \phi(s) \Vert^2_2 \leq 1.$ 

\textbf{The \texttt{TD}(0) Algorithm.} Given the above setup, the goal is to find the best parameter vector $\theta^*$ that minimizes the distance (in a suitable norm) between $\hat{V}_{\theta}$ and $V_\mu$. To achieve this, we will focus on the classical \texttt{TD}(0) algorithm~\cite{sutton1988} within the family of TD learning algorithms. At each time-step $t=0, 1, \ldots$, this algorithm receives as observation a data tuple $X_t=(s_t, s_{t+1}, r_t=r(s_t))$  comprising of the current state $s_t$, the next state $s_{t+1}$ reached by playing action $\mu(s_t)$, and the instantaneous reward $r_t \sim  \mc{D}_{\mu}(\cdot|s_t)$. Next, we define the \texttt{TD}(0) update direction $g_t(\theta)=g(X_t,\theta)$ as:
\begin{equation}
g_t(\theta) \triangleq \left(r_t + \gamma \langle \phi(s_{t+1}), \theta \rangle -  \langle \phi(s_{t}), \theta \rangle\right) \phi(s_t), \forall \theta \in \mathbb{R}^K. 
\nonumber
\end{equation}
The \texttt{TD}(0) update to the current parameter $\theta_t$ then takes the following form: 
\begin{equation}
    \theta_{t+1}=\theta_t + \alpha_t g_t(\theta_t),
\label{eqn:TD(0)update}
\end{equation}
where $\alpha_t \in (0,1)$ is the step-size/learning rate. Under some mild technical conditions, it was shown in \cite{tsitsiklisroy} that the iterates generated by \texttt{TD}(0) converge almost surely to the unique solution $\theta^*$ of the projected Bellman equation $\Pi_D \mc{T}_\mu (\Phi \theta^*) = \Phi \theta^*$. Here, $D$ is a diagonal matrix with entries given by the elements of the stationary distribution $\pi$ of the kernel $P_{\mu}$. Moreover, $\Pi_D(\cdot)$ is the projection operator onto the subspace spanned by $\{\phi_k\}_{k\in [K]}$ with respect to the inner product $\langle \cdot, \cdot \rangle_D$.

\textbf{Policy Evaluation with a Corrupted Reward Model.} We depart from the standard policy evaluation setting reviewed above by considering an observation model where an adversary can \emph{arbitrarily} perturb a small fraction $\epsilon \in [0, 1/2)$ of the rewards as per the classical Huber contamination model in robust statistics~\citep{huber, huber2, lai, chenrobust}. Specifically, at each time-step $t$, a reward $\tilde{r}_t$ is generated as follows. First, a Bernoulli random variable $Z_t$ that takes value $1$ with probability $(1-\epsilon)$ and value $0$ with probability $\epsilon$ is generated independently of all prior history. If $Z_t = 1$, then $\tilde{r}_t$ is sampled from the true reward distribution $\mc{D}_\mu(\cdot|s_t)$. If $Z_t=0$, then $\tilde{r}_t$ is generated from an unconstrained and unknown error distribution $\mc{Q}$ controlled by an adversary. Compactly, we have $\tilde{r}_t \sim (1-\epsilon) \mc{D}_\mu(\cdot|s_t) + \epsilon \mc{Q}$, where we use $(1-\epsilon) \mc{P}_1 + \epsilon \mc{P}_2$ to denote the mixture of two distributions $\mc{P}_1$ and $\mc{P}_2$. Here, $\epsilon$ is the proportion of contamination and captures the power of the adversary. The distribution $\mc{Q}$ could potentially be both state- and time-dependent; furthermore, the adversarial bias injected when $\tilde{r}_t$ is sampled from $\mc{Q}$ is allowed to be \emph{arbitrary}. Let us note that under the corrupted observation model, the learner is presented with a modified sequence $\{\tilde{X}_t\}$ of observations, where $\tilde{X}_t = (s_t, s_{t+1}, \tilde{r}_t).$ Given this setup, we are interested in providing precise answers to the following questions.

\begin{itemize}
\item[Q1.] \emph{Under the corrupted observation model, can we still hope to obtain a reliable estimate of the value function $V_{\mu}?$ If so, how can this be achieved algorithmically?}
\item[Q2.] \emph{What are the fundamental limits on policy evaluation imposed by the Huber-contaminated reward model?}
\end{itemize}

\textbf{Technical Challenges.} As it turns out, answering the above questions is quite non-trivial due to several technical challenges that we outline below.

$\bullet$ \textbf{Noisy Heavy-Tailed Rewards.} Even in the absence of corruption, note that our observation model allows the rewards to be noisy/random. Furthermore, we do not assume that the true reward distributions are sub-Gaussian; instead, we only require them to have finite mean and variance. This is unlike recent works on TD learning~\citep{bhandari_finite, srikant}, where the rewards are assumed to be deterministic (conditioned on the state). The possibility of \emph{heavy-tailed} uncorrupted rewards, in tandem with the lack of knowledge of the reward function $R_{\mu}$, \emph{significantly complicates the learner's task of distinguishing between clean and corrupted data}. 

$\bullet$ \textbf{Temporal Correlations.} In the standard robust statistics literature~\citep{huber, lai, chenrobust}, the inliers (i.e., clean data) are generated i.i.d from an unknown distribution. However, the observations in our setting are all part of one single Markovian trajectory and, as such, exhibit \textit{temporal correlations}. Even in the non-adversarial setting, obtaining finite-time results under Markovian data is known to be highly non-trivial. \emph{Moreover, contending with data that is simultaneously temporally correlated and adversarially contaminated has not been previously explored before}, thereby requiring the development of novel algorithmic and analysis techniques - this is one of the most challenging aspects of our problem. 

Additionally, the function approximation setting we consider here is much harder to tackle relative to a tabular setting. Despite the complex interplay between the challenges listed above, we will provide a precise finite-time analysis of a robust variant of \texttt{TD}(0) to be developed later in Section~\ref{sec:robustTD}. In the next section, we justify the need for such a development. 

\section{Motivation: Vulnerability of \texttt{TD}(0)}
\label{sec:vulnerability}
\vspace{-2mm}
The purpose of this section is to formally establish that the basic \texttt{TD}(0) algorithm is not robust to reward-poisoning attacks. To proceed, we make the following assumption that is standard in the analysis of RL algorithms~\citep{tsitsiklisroy, bhandari_finite,srikant}. 

\begin{assumption} \label{ass:MC} The Markov chain induced by the policy $\mu$ is aperiodic and irreducible. 
\end{assumption}
Under the above assumption, when the rewards are uncorrupted, \texttt{TD}(0) converges to $\theta^* = - \bar{A}^{-1} \bar{b}$, where $\bar{A} = \Phi^{\top} D \left(\gamma P_{\mu} - I \right) \Phi$, and $\bar{b} =  \Phi^{\top} D R_{\mu}$ are the steady-state versions of $A_t$ and $b_t$, respectively, and ${R}_{\mu} \in \mathbb{R}^{m}$ is a reward vector stacking up the mean rewards for the different states~\citep{tsitsiklisroy}. To isolate the effect of Huber contamination, it suffices to consider a \emph{noiseless} reward model where, whenever in state $s \in \mc{S}$, with probability $1-\epsilon$, the learner observes the true mean reward $R_{\mu}(s).$\footnote{We emphasize here that the assumption of noiseless rewards is only made for this motivating section. In the sequel, when we consider the problem of defending against reward contamination, we will  work under the more general heavy-tailed noisy reward model described in Section~\ref{sec:model}.} To capture corruption, consider the following error model: for each state $s\in\mc{S}$, whenever in state $s$, with probability $\epsilon$, the learner receives a bounded, deterministic signal $C(s)$. Let $C \in \mathbb{R}^{m}$ be the corrupted reward vector that stacks up $C(s)$ for each $s \in \mc{S}$. We then have the following simple result that characterizes the limit point of \texttt{TD}(0) under the above reward contamination model; see Appendix~\ref{app:proofThm1} for its proof. 

\begin{theorem} (\textbf{Vulnerability of \texttt{TD}(0)}) 
\label{thm:vulnerability}
Suppose Assumption~\ref{ass:MC} holds, and the step-size sequence $\{\alpha_t\}$ of \texttt{TD}(0) is chosen to satisfy $\sum_{t=0}^{\infty} \alpha_t = \infty$ and $\sum_{t=0}^{\infty} \alpha^2_t < \infty$. Then, under the Huber-contaminated reward model described above, the iterates of vanilla \texttt{TD}(0) converge with probability $1$ to $\tilde{\theta}^*$, where 
\begin{equation}
\tilde{\theta}^* = (1-\epsilon) \theta^* + \epsilon (- \bar{A}^{-1} \Phi^{\top} D C).
\label{eqn:corruptedTDfixedpt}
\end{equation}
Furthermore, for every point $w \in \mathbb{R}^K$, there exists a corresponding choice of corrupted reward vector $C_w$ that ensures $\tilde{\theta}^* = w.$
\end{theorem}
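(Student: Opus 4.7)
The plan is to reduce the corrupted \texttt{TD}(0) dynamics to an uncorrupted \texttt{TD}(0) dynamics with a modified (but still bounded) reward function, and then invoke the classical Tsitsiklis--Van Roy almost-sure convergence theorem. First, I would observe that since, at each step $t$, the choice between the true reward $R_\mu(s_t)$ and the corrupted signal $C(s_t)$ is made via an independent Bernoulli draw, the observed reward $\tilde{r}_t$ is conditionally (on $s_t$) a bounded random variable with mean $\tilde{R}_\mu(s_t) \triangleq (1-\epsilon) R_\mu(s_t) + \epsilon C(s_t)$ and bounded conditional variance. Consequently, the recursion \eqref{eqn:TD(0)update} driven by $\tilde{r}_t$ is structurally identical to a standard \texttt{TD}(0) iteration run on the same Markov chain induced by $\mu$, but with the effective reward function $\tilde{R}_\mu$ in place of $R_\mu$. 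Importantly, the matrix $\bar{A}$ depends only on $\Phi$, $D$, and $P_\mu$, none of which are affected by the reward corruption.

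Given the reduction above, I would directly apply the asymptotic convergence result of \cite{tsitsiklisroy} under Assumption~\ref{ass:MC} and the Robbins--Monro step-size conditions given in the theorem statement. This gives convergence w.p.~$1$ to $\tilde{\theta}^* = -\bar{A}^{-1}\tilde{b}$, where
\begin{equation*}
\tilde{b} = \Phi^{\top} D \tilde{R}_\mu = (1-\epsilon)\Phi^{\top} D R_\mu + \epsilon \Phi^{\top} D C = (1-\epsilon)\bar{b} + \epsilon \Phi^{\top} D C.
\end{equation*}
Substituting $\theta^* = -\bar{A}^{-1}\bar{b}$ and using linearity of $-\bar{A}^{-1}$ yields equation \eqref{eqn:corruptedTDfixedpt}.

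For the second part, I would show that the linear map $\Psi : C \mapsto -\bar{A}^{-1}\Phi^{\top} D C$ from $\mathbb{R}^m$ to $\mathbb{R}^K$ is surjective. Given any target $w \in \mathbb{R}^K$, one then sets $C_w$ to be any preimage of $(w - (1-\epsilon)\theta^*)/\epsilon$ under $\Psi$, whence $\tilde{\theta}^* = w$. Since $\bar{A}$ is invertible, surjectivity of $\Psi$ reduces to showing that $\Phi^{\top} D$ has rank $K$. This follows because $D$ is a strictly positive diagonal matrix (by irreducibility of the chain, the stationary distribution $\pi$ is supported on all of $\mc{S}$) and $\Phi$ has linearly independent columns by assumption; hence $\Phi^{\top} D^2 \Phi$ is invertible, and an explicit right-inverse gives, e.g., $C_w = -\,\epsilon^{-1} D\Phi (\Phi^{\top} D^2 \Phi)^{-1} \bar{A}\bigl(w - (1-\epsilon)\theta^*\bigr)$.

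I do not anticipate any deep obstacle in this argument; the main subtlety is verifying that the mixture reward model still fits the technical preconditions of the Tsitsiklis--Van Roy analysis, which boils down to the boundedness of $\tilde{R}_\mu$ (immediate, since both $R_\mu$ and $C$ are bounded) and the fact that $\tilde{r}_t$ is a deterministic function of $(s_t, Z_t)$ with $Z_t$ drawn independently of the Markov trajectory.
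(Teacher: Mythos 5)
Your proposal is correct and follows essentially the same route as the paper: reduce the corrupted dynamics to standard \texttt{TD}(0) with effective mean reward $(1-\epsilon)R_\mu + \epsilon C$, invoke the Tsitsiklis--Van Roy convergence result to obtain $\tilde{\theta}^* = -\bar{A}^{-1}\Phi^\top D\tilde{R}$, and then exhibit an explicit preimage $C_w$ via a right-inverse of $\Phi^\top D$. The only (immaterial) difference is the choice of right-inverse: the paper uses $C_w = \epsilon^{-1}D^{-1}\Phi(\Phi^\top\Phi)^{-1}\bar{A}((1-\epsilon)\theta^*-w)$ while you use $-\epsilon^{-1}D\Phi(\Phi^\top D^2\Phi)^{-1}\bar{A}(w-(1-\epsilon)\theta^*)$; both are valid.
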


\textbf{Discussion.} The above result reveals that under the standard conditions for the convergence of $\texttt{TD}(0)$, the limit point of \texttt{TD}(0) with reward contamination is a convex combination of the true solution $\theta^*$ and a vector $- \bar{A}^{-1} \Phi^{\top} D C$ that can be controlled by the adversary by tuning $C$. The result also tells us that by carefully designing $C$, the adversary can cause the perturbed limit point $\tilde{\theta}^*$ to be any arbitrary point in $\mathbb{R}^K$. 

\begin{wrapfigure}[14]{R}[0pt]{0.3\textwidth}
 \vspace{-20pt}
 \begin{center}
  \includegraphics[scale=0.3]{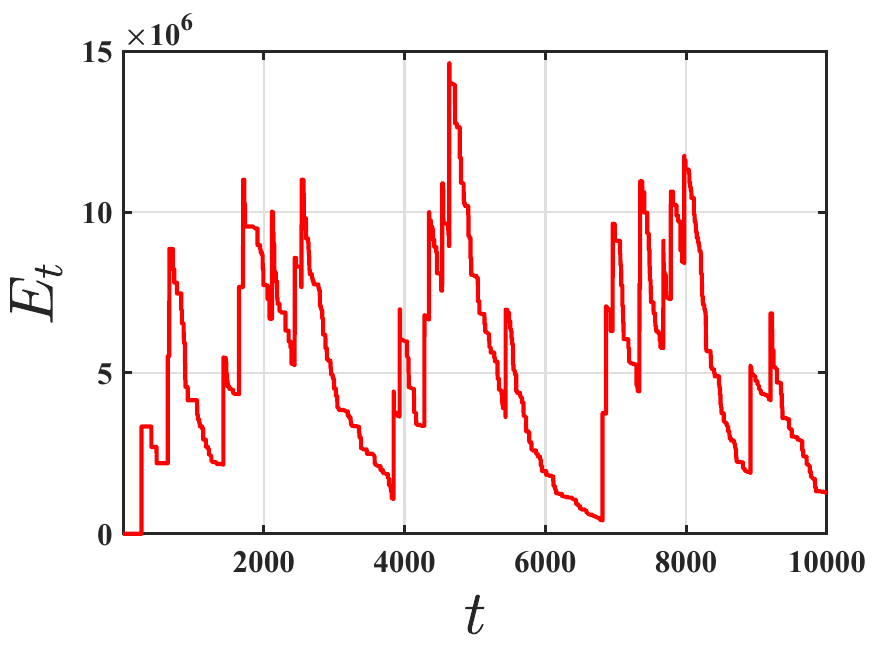} 
\end{center}
\caption{Plot of mean-square error $E_t$ showing the effect of reward corruption on \texttt{TD}(0), with corruption probability $\epsilon =0.001$.} 
\label{fig:vulnerability}
  \vspace{-3mm}
\end{wrapfigure}

The key takeaway from the above result is that even when the corruption fraction $\epsilon$ is small (but non-zero), the adversary can cause the true limit point $\theta^*$ of \texttt{TD}(0) to get perturbed to \emph{any} point in $\mathbb{R}^K$. To complement this result, we illustrate in Fig.~\ref{fig:vulnerability} a scenario where, even when the corruption fraction $\epsilon$ is merely $0.001$, the mean-square error of \texttt{TD}(0) can be large. 
Motivated by the finding from Theorem~\ref{thm:vulnerability}, we will systematically design a robust version of \texttt{TD}(0) in the sequel. As our first step towards this goal, we develop a robust univariate mean estimator for time-correlated data in the next section.  
\vspace{-3mm}
\section{Robust Markovian Mean Estimation}
\label{sec:RobustMeanEst}
\vspace{-3 mm}
This section investigates the problem of robust mean estimation given \emph{dependent} data samples generated by a Markov chain and corrupted as per the Huber attack model. The resulting developments will provide the key technical tools needed to tackle the robust TD learning problem. To provide context, suppose we are given $N$ i.i.d. real-valued random variables $X_1, X_2, \ldots, X_N$ with finite mean $\mathbb{E}[X_1]= \bar{X}$, and finite variance $\rho^2$. The goal is to construct an estimator $\hat{X}_N = \hat{X}_N(X_1, \ldots, X_N)$, i.e., a measurable function of $X_1, \ldots, X_N$, that provides a high-probability estimate of the mean $\bar{X}$. It is well known that the empirical mean fails to provide optimal error rates in this setting~\citep{lugosiHT}. Instead, a simple yet powerful estimator known as the Median-of-Means (\texttt{MoM}) estimator yields the following optimal rate: given any $\delta \in (0,1)$,  with probability (w.p.) at least $1-\delta$, the \texttt{MoM} estimate $\hat{X}_N$ satisfies: $|\hat{X}_N - \bar{X}| \leq O\left(\rho \sqrt{ {\log(1/\delta)}/{N}}\right)$~\citep{lugosiHT}. We depart from this setting by considering the \emph{dependent-data} observation model below. 

\textbf{The Setting.} Consider an ergodic, time-homogeneous, and stationary Markov chain $X_1, X_2, \ldots$ with stationary distribution $\pi$. The Markov chain takes values over a finite state space $\mc{X}.$ Let $F:\mc{X} \mapsto \mathbb{R}$ be a bounded function such that $|F(x)| \leq \psi, \forall x \in \mc{X}$, and define $\bar{f} \triangleq \mathbb{E}_{X \sim \pi}[F(X)]$. Now consider a noisy observation model characterized by a set of conditional distributions $\{\mc{D}(\cdot|x), x \in \mc{X}\}$, such that whenever in state $x$, the learner gets to observe a random sample $f(x)$ with mean $\mathbb{E}_{f(x) \sim \mc{D}(\cdot|x)}[f(x)] = F(x)$ and finite variance $\rho^2.$ Our goal is to obtain a high-probability estimate of $\bar{f}= \mathbb{E}_{X_i \sim \pi, f(X_i) \sim \mc{D}(\cdot|X_i)}[f(X_i)], i \in [N]$ for the following observation model. 

\textbf{Noisy and Corrupted Markovian Data:} The learner observes $N$ noisy and Huber-contaminated samples $\tilde{f}_1, \ldots, \tilde{f}_N$, where $\tilde{f}_i \sim (1-\epsilon)\mc{D}(\cdot| X_i) + \epsilon \mc{Q}, i \in [N]$, and $\mc{Q}$ is an unknown and unconstrained \emph{adversarial} error distribution.   

\emph{For the model above, can we expect mean-estimation bounds similar to those known under i.i.d data?} In what follows, we will provide an answer in the affirmative by developing a simple variant of the \texttt{MoM} estimator. Our estimator and its guarantees will depend on the notion of a mixing time $\tau_{mix}$. To define this object, let $\mathbb{P}(X_{t+1} \in \cdot| X_1 =X) $ denote the conditional distribution of $X_{t+1}$ given that $X_1 = X.$ Next, following~\cite{dorfman}, define $d_{mix}(t) \triangleq \sup_{X \in \mc{X}} D_{TV}\left(\mathbb{P}(X_{t+1} \in \cdot| X_1 =X), \pi \right)$, where $D_{TV}(\mc{P}_1, \mc{P}_2)$ denotes the total variation distance between two probability measures $\mc{P}_1$ and $\mc{P}_2$. We then define the mixing time $\tau_{mix}$ as follows: 
\begin{equation}
\tau_{mix}(\eta) \triangleq \inf\{t: d_{mix}(t) \leq \eta\}, \, \tau_{mix} \triangleq \tau_{mix}(1/4). 
\label{eqn:mix1}
\end{equation}
With the above model and notation in place, we are now ready to describe our estimator called \texttt{RUMEM}: Robust Univariate Mean Estimator for Markovian Data. 

\textbf{Description of \texttt{RUMEM}}: The algorithm takes as input a data set $\mc{S}=\{\tilde{f}_1, \ldots, \tilde{f}_N\}$ generated as per the noisy and corrupted Markovian data model, a confidence parameter $\delta$, the mixing time $\tau_{mix}$, and the corruption fraction $\epsilon.$ It then performs the following operations. 

\textbf{1) Subsampling.} The first step is to create a subsampled set $\mc{S}_{sub}=\{\tilde{f}_1, \tilde{f}_{\tau+1}, \ldots, \tilde{f}_{(n-1)\tau+1}\}$ by selecting every $\tau$-th element from the original data set $\mc{S}$; here, $n=\floor{{(N-1)}/{\tau}}+1$. The rationale behind this step is to create a data set comprising approximately independent samples by choosing the parameter $\tau$ appropriately; we will specify this choice in the statement of Theorem~\ref{theorem:RobustMean}. 

\textbf{2) Partitioning the Subsampled Set}: Next, the subsampled set $\mc{S}_{sub}$ is split into $L$ equal buckets denoted $\{\mc{B}_1, \ldots, \mc{B}_L\}$, where each bucket $\mc{B}_i$ has size given by $\floor{{n}/{L}}$.

\textbf{3) Calculating the Median-of-Means Estimate}: Let the mean of the samples in the $i$-th bucket $\mc{B}_i$ be denoted $\hat{\mu}_i$. The algorithm then returns $\hat{\mu}=\texttt{Median}\{\hat{\mu}_1, \ldots, \hat{\mu}_L\}.$ Other than the sub-sampling step, the rest of \texttt{RUMEM} is  the same as a standard \texttt{MoM} estimator; \emph{as such, we claim no novelty in the design of \texttt{RUMEM}. Instead, our main contribution here lies in the analysis of \texttt{RUMEM} under heavy-tailed, corrupted, and Markovian data.} It is this analysis that guides the choice of the two main parameters in the algorithm, namely the sub-sampling gap $\tau$ and the number of buckets $L$. The main result of this section is as follows. 

\begin{theorem}\label{theorem:RobustMean} (\textbf{Performance of \texttt{RUMEM}}) Consider the Huber-contaminated Markov data model described in this section with corruption fraction $\epsilon$. Let \texttt{RUMEM} be run on this data set with parameters chosen as follows:
\begin{equation}\begin{aligned}
&\tau = \ceil{\log_2\left(6N/\delta\right)\tau_{mix}}; \hspace{1mm} \epsilon^\prime =\epsilon + \frac{32}{3n}\log(24/\delta); \\
&L = \ceil{12\epsilon^\prime n + \frac{256}{7}\log\left(\frac{N}{\delta}\right)}.\footnote{Unless specified, we will use $\log(\cdot)$ to denote the natural logarithm.}\end{aligned}
\label{eqn:paramchoices}
\end{equation}
There exists a universal constant $C \geq 1$ such that given any $\delta \in (0,1)$, if $N \ge \max\{2, 4L\tau\}$, then with probability at least $1-\delta$, the output $\hat{\mu}$ of \texttt{RUMEM} satisfies 
\begin{equation}
\left\lvert \hat{\mu} - \bar{f} \right\rvert \leq C \max\{\psi, \rho\} \left( \sqrt{\epsilon} + \sqrt{\frac{\tau}{N}\log\left(\frac{N}{\delta}\right)} \right).
\label{eqn:RUMEMbnd}
\end{equation} 
\end{theorem}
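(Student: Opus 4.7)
The plan is to reduce the corrupted Markovian mean-estimation problem to a corrupted i.i.d. mean-estimation problem via a coupling argument, and then apply a standard Median-of-Means analysis adapted for Huber contamination. The subsampling step is what enables the reduction, the partition/median step is what provides robustness to both corruption and heavy tails, and the parameters $\tau$, $L$ are tuned so that the failure probabilities of the three components each come in at $O(\delta)$.

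The first and hardest step is the coupling. Consider the subsampled chain $X_1, X_{\tau+1}, \ldots, X_{(n-1)\tau+1}$. By the definition of $\tau_{mix}$ and standard submultiplicativity of TV-mixing, our choice $\tau = \lceil \log_2(6N/\delta)\, \tau_{mix}\rceil$ ensures $d_{mix}(\tau) \le \delta/(6N)$. Using a Berbee-style maximal coupling applied sequentially across the $n$ subsampled blocks, one can construct on a common probability space a truly i.i.d.\ sequence $Y_1,\ldots,Y_n \sim \pi$ such that the subsampled sequence equals $(Y_1,\ldots,Y_n)$ except on an event $\mathcal{E}_{\mathrm{cpl}}$ with $\mathbb{P}(\mathcal{E}_{\mathrm{cpl}}) \le n\cdot\delta/(6N) \le \delta/6$. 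The same coupling extends to the \emph{noisy} samples $f(X_i)$ and their Huber-contaminated versions $\tilde f_i$, because the additive noise and the Bernoulli corruption gate $Z_i$ are drawn independently of the chain's randomness. Thus, off $\mathcal{E}_{\mathrm{cpl}}$, the subsampled data set $\mathcal{S}_{sub}$ is distributionally identical to $n$ Huber-contaminated i.i.d.\ draws from the noisy stationary distribution, whose mean is precisely $\bar f$ and whose variance is at most $O(\max\{\psi,\rho\}^2)$ (combining the bound on $F$ with the conditional variance $\rho^2$).

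Conditioned on this good coupling event, I would analyze MoM along the standard lines but with care for corruption. First, let $K_c$ be the number of corrupted samples in $\mathcal{S}_{sub}$. Since $K_c \sim \mathrm{Bin}(n,\epsilon)$, a Bernstein/Chernoff bound gives $K_c \le \epsilon'\, n$ with probability at least $1-\delta/6$, where $\epsilon' = \epsilon + \frac{32}{3n}\log(24/\delta)$. Call a bucket $\mathcal{B}_j$ \emph{corrupted} if it contains at least one corrupted sample; the number of corrupted buckets is at most $K_c \le \epsilon' n \le L/12$ by the choice of $L$. For each uncorrupted bucket, the bucket mean has variance at most $O(\max\{\psi,\rho\}^2\, L/n)$, so by Chebyshev the bucket mean lies within $r := 2\max\{\psi,\rho\}\sqrt{L/n}$ of $\bar f$ with probability at least $3/4$. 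Let $G$ be the number of uncorrupted buckets whose means land in this ball; $\mathbb{E}[G] \ge (3/4)(L - \epsilon' n) \ge (11/16)L$. A Hoeffding bound on the independent indicators (independent because the buckets partition i.i.d.\ data) yields $G > L/2$ with probability at least $1 - \delta/6$, provided $L \gtrsim \log(N/\delta)$, which is exactly the second term in the definition of $L$. When $G > L/2$, more than half of the bucket means lie in a ball of radius $r$ around $\bar f$, so the median must also lie in that ball, giving $|\hat\mu - \bar f| \le r = O\bigl(\max\{\psi,\rho\}\sqrt{L/n}\bigr)$.

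To finish, I would substitute $L = O(\epsilon' n + \log(N/\delta))$ and $n \asymp N/\tau$ into the bound $\max\{\psi,\rho\}\sqrt{L/n}$, yielding the two terms $\sqrt{\epsilon'}$ and $\sqrt{(\log(N/\delta))/n}$; since $\epsilon' = \epsilon + O(\log(1/\delta)/n)$, the first term splits into the desired $\sqrt{\epsilon}$ plus an extra $\sqrt{\log(1/\delta)/n}$ absorbed into the second term, which becomes $\sqrt{\tau \log(N/\delta)/N}$. A final union bound over the coupling failure, the corruption-count deviation, and the Hoeffding concentration on $G$ gives total failure probability $\le \delta/2 < \delta$. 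The main obstacle is the coupling step: care must be taken that the coupling is constructed jointly with the independent corruption/noise randomness so that the \emph{corrupted} subsample inherits the i.i.d. structure, and that the subsampling gap $\tau$ is just large enough that $n \cdot d_{mix}(\tau)$ remains $O(\delta)$ without inflating $\tau$ (and hence the final rate) beyond $\tau_{mix}\log(N/\delta)$.
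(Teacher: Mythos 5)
Your proposal is correct and follows essentially the same route as the paper's proof: subsample at gap $\tau$, couple the subsampled chain to an i.i.d.\ copy (the paper invokes the coupling lemma of Nagaraj et al., giving exactly your $(n-1)\,d_{mix}(\tau)$ failure bound), control the corrupted-sample count by Bernstein, apply Chebyshev to each clean bucket, concentrate the number of good buckets, and conclude via the median argument. The one place you are slightly informal is the Hoeffding step on the good-bucket indicators: the number of uncorrupted buckets is itself random, so the paper first restricts to the Bernstein event, conditions on the number of good buckets taking a fixed value $j \ge 7L/8$, and unions over $j$ before applying Hoeffding --- a fix your argument admits but should make explicit.
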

\textbf{Discussion.} To appreciate the bound in Theorem~\ref{theorem:RobustMean}, consider first the case when the data is generated in an i.i.d. manner; here, the subsampling gap $\tau$ is simply $1$. With $\tau=1$, our bound in Eq.~\eqref{eqn:RUMEMbnd} is consistent with that known for robust univariate mean estimation under i.i.d. data with a trimmed mean estimator; see~\cite[Theorem 1]{lugosi}. While the \texttt{MoM} estimator is known to be robust against heavy-tailed i.i.d. noise, the fact that it is also robust to adversarial noise is new. Under Markov data, our bound gets inflated by a factor of $\sqrt{\tau}$. This is also consistent with mean estimation results under Markov data since one essentially has $N/\tau$ ``effective" samples~\citep{nagaraj, dorfman}. The significance of Theorem~\ref{theorem:RobustMean} lies in \textbf{providing the first guarantees of robust mean estimation under both Markovian and adversarial data.} This result might be of independent interest to robust statistics, and we conjecture that it will find use in dealing with outliers in time-series data (beyond our TD learning setting). The proof of Theorem~\ref{theorem:RobustMean} is provided in Appendix~\ref{app:proofrobustmeanest}. One subtle aspect of the proof is that it needs to account for the fact that the number of ``good" uncorrupted buckets is a random object. The other key ingredients in the analysis involve carefully exploiting the geometric mixing property of the underlying Markov chain along with a coupling idea in the recent paper of~\cite{dorfman}. 
\begin{algorithm}[t]
\caption{\texttt{Robust-TD} Algorithm}
\label{algo:algoRobustTD}  
 \begin{algorithmic}[1]
\State \textbf{Input:} Policy to be evaluated $\mu$, initial estimate $\theta_0 \in \mathbb{R}^{K}$, corruption fraction $\epsilon$, total number of iterations $T$, and burn-in time $\bar{T}$. 
\For {$t=0,1, \ldots$} 
\State Play $\mu(s_t)$ and observe tuple $\tilde{X}_t = (s_t, s_{t+1}, \tilde{r}_t)$, where $\tilde{r}_t \sim (1-\epsilon) \mc{D}_\mu(\cdot|s_t) + \epsilon \mc{Q}$. 
\State If $t \leq \bar{T}$,  then maintain $\theta_t = \theta_0$. 
\If{$ t > \bar{T}$} 
\State \hspace{-12mm} Set 
$[\hat{b}_t]_i = \textrm{\texttt{RUMEM}}\left( \{y_{i,k}\}_{0 \leq k \leq t}; \delta = 1/(KT^2)\right)$ with $y_{i,k} = [\phi(s_k)]_i \tilde{r}_k$ to compute  $\hat{b}_t$.
\State Compute threshold $G_t$ as per Eq.~\eqref{eqn:threshold} and set $\sigma_1 = \max\{1, \bar{r}, \rho\}$. 
\State If $\lVert\hat{b}_t\rVert_2 > G_t + \sigma_1$, then set:  $\hat{b}_t \gets 0$. 
\State Compute Robust TD direction
$\tilde{g}_t(\theta_t)$:
\begin{equation*}
    \tilde{g}_t(\theta_t)=A_t \theta_t + \hat{b}_t, \, \medmath{A_t = \gamma \phi(s_t) \phi^{\top}(s_{t+1}) - \phi(s_t) \phi^{\top}(s_t)}. 
\label{eqn:robustTDdirection}
\end{equation*}
\State Update parameter: $\theta_{t+1}=\theta_t + \alpha \tilde{g}_t(\theta_t)$.  
\EndIf
\EndFor
\end{algorithmic}
 \end{algorithm}

\section{Robust TD Learning Algorithm}
\label{sec:robustTD}
\vspace{-3mm}
In this section, we develop our proposed algorithm called \texttt{Robust-TD}; the steps of our method are outlined as Algorithm~\ref{algo:algoRobustTD}. As we shall see later, despite the presence of Huber-contaminated rewards, \texttt{Robust-TD} yields guarantees that are consistent with those provided by TD in the absence of attacks, up to a small \emph{unavoidable}  $O(\sqrt{\epsilon})$ term that captures the price of corruption. Our approach rests on two new ideas: (i) Using historical information of rewards along with the \texttt{MoM} estimator in Section~\ref{sec:RobustMeanEst} to construct robust TD update directions; and (ii) a carefully designed dynamic thresholding scheme to account for rare (i.e., low-probability) events. We describe these ideas below. 

$\bullet$ \textbf{Constructing Robust TD update directions.} To build intuition, let us start by noting from Eq.~\eqref{eqn:TD(0)update} that the vanilla \texttt{TD}(0) update direction (without corruption) can be expressed in affine form: $g_t(\theta) = A_t \theta + b_t$, where $A_t =\gamma \phi(s_t) \phi^{\top}(s_{t+1}) - \phi(s_t) \phi^{\top}(s_t)$ and $b_t= \phi(s_t) r_t$. The main observation here is that the rewards only affect the term $b_t$; as such, our goal will be to obtain a robust estimate of this object. Due to Assumption~\ref{ass:MC}, $b_t$ will eventually approach its stationary value $\bar{b}= \mathbb{E}_{s_t \sim \pi, r(s_t) \sim \mathcal{D}_{\mu}(\cdot|s_t)} [b_t]= \sum_{s \in \mc{S}} \pi(s) \phi(s) R_\mu(s)$. We would thus ideally like our robust estimate to be ``close" to $\bar{b}$. There are a few subtleties here. To explain them, let us consider a couple of candidate strategies. Given the structure of $\bar{b}$, one natural idea could be to use all prior observations collected for each state $s \in \mc{S}$ to construct estimates of $R_{\mu}(s)$ and $\pi(s)$ individually. However, this would require maintaining vectors of dimension equal to the size $|\mc{S}|$ of the state space, defeating the purpose of function approximation. Since only the rewards are corrupted, yet another strategy could be to apply the \texttt{RUMEM} estimator from Section~\ref{sec:RobustMeanEst} to the set of reward observations $\{\tilde{r}_k\}_{0 \leq k \leq t}$ collected up to time $t$. While this will yield a robust estimate of $\sum_{s \in \mc{S}} \pi(s) R_\mu(s)$, our goal instead is to get an estimate of $\sum_{s \in \mc{S}} \pi(s) \phi(s) R_\mu(s)$. The main message here is that some care needs to be taken while devising an approach for estimating $\bar{b}$. 

Our approach is to maintain component-wise estimates of $\bar{b}$. To that end, let $[\bar{b}]_i$ and $[\phi(s)]_i$ represent the $i$-th components of $\bar{b}$ and $\phi(s)$, respectively; here, $i \in [K]$. Moreover, let $\hat{b}_t$ denote the estimate of $\bar{b}$ at time $t$. Then, the $i$-th component of $\hat{b}_t$ is constructed by applying the \texttt{RUMEM} estimator in Section~\ref{sec:RobustMeanEst} to the data set $\{y_{i,k}\}_{0 \leq k \leq t}$ with a confidence parameter $\delta = 1/(KT^2)$, where $y_{i,k} = [\phi(s_k)]_i \tilde{r}_k$, and $T$ is the total number of iterations. We represent this operation succinctly as $[\hat{b}_t]_i = \textrm{\texttt{RUMEM}}\left( \{y_{i,k}\}_{0 \leq k \leq t}; \delta = 1/(KT^2)\right)$; see line 6 of Algo.~\ref{algo:algoRobustTD}. The intuition here is simple: if a sample at time $k$ is uncorrupted (i.e., $\tilde{r}_k = r(s_k)$), then we have
$ [\bar{b}]_i = \mathbb{E}_{s_k \sim \pi, r(s_k) \sim \mc{D}_{\mu}(\cdot|s_k)}[[\phi(s_k)]_i r(s_k)].$ In words, if the underlying Markov chain is stationary, then each uncorrupted sample $y_{i,k}$ provides an unbiased estimate of $[\bar{b}]_i$. Notably, however, these samples are \emph{not independent} - this is precisely why we need to appeal to the results from  Section~\ref{sec:RobustMeanEst}. 

$\bullet$ \textbf{Thresholding mechanism.} We now explain that the design of $\hat{b}_t$ as described above is not enough for our purpose. From Theorem~\ref{theorem:RobustMean}, note that the guarantees afforded by \texttt{RUMEM} do not hold deterministically, rather only with high probability. Since we seek to obtain mean-square error bounds, \emph{we need to thus provide an additional layer of safety for low-probability events on which the guarantees from \texttt{RUMEM} do not hold}. Accordingly, if $\Vert \hat{b}_t \Vert_2$ exceeds a carefully designed threshold $G_t + \sigma_1$, we reset $\hat{b}_t$ to $0$. Here, $\forall t \geq \bar{T}$, 
\begin{equation}
G_t \triangleq C \sqrt{K}  \sigma_1 \left( \sqrt{\epsilon} + 2 \log(12KT^3)\sqrt{\frac{\tau_{mix}}{t}} \right), 
\label{eqn:threshold}
\end{equation}
where $\sigma_1 = \max\{1, \bar{r}, \rho\}$, $C$ is as in Eq.~\eqref{eqn:RUMEMbnd} of Theorem~\ref{theorem:RobustMean}, $K << |\mc{S}|$ is the number of feature vectors, $\bar{T}$ is an initial burn-in time, and $\tau_{mix}=\tau_{mix}(1/4)$ is as in Eq.~\eqref{eqn:mix1}, and corresponds to the mixing-time of the Markov chain induced by the policy $\mu$. The design of the thresholding radius $G_t + \sigma_1$ is very delicate: if the radius is too loose, it may lead to sub-optimal bounds; if it is too tight, we might reset $\hat{b}_t$ to $0$ too often unnecessarily, leading again to vacuous bounds. Our analysis in Appendix~\ref{app:Mainresultproof} reveals that if $G_t$ is designed as per Eq.~\eqref{eqn:threshold}, then the resetting operation in line 8 of Algorithm~\ref{algo:algoRobustTD} will get bypassed with high probability, and $\hat{b}_t$ will remain the output of the \texttt{RUMEM} operation in line 6. In other words, the resetting operation will take place only under extreme (low-probability) events, exactly as desired. Lemma~\ref{lemm:rob_est_main} in Section~\ref{sec:results} shows that the \texttt{RUMEM}-based estimation scheme in conjunction with the thresholding mechanism described above yields an accurate estimate of $\bar{b}$. Finally, the iterates are updated only after an initial burn-in time $\bar{T}$ that is logarithmic in $T$; the exact form of $\bar{T}$ will be specified in Section~\ref{sec:results}. This ensures enough data samples have been collected for the guarantees in Section~\ref{sec:RobustMeanEst} to kick in. 

\section{Main Results}
\vspace{-3mm} 
\label{sec:results}
The goal of this section is to state and discuss our main results: (i) a finite-time bound for \texttt{Robust-TD}, and (ii) a minimax lower-bound establishing the near-optimality of the guarantee from \texttt{Robust-TD}. To do so, we will need to introduce a bit of notation and terminology. Let $\Sigma = \Phi^\top D \Phi$. Since $\Phi$ is full column rank, $\Sigma$ is full rank with a strictly positive smallest eigenvalue $\omega <1$. Next, recall from Section~\ref{sec:robustTD} that the \texttt{TD}(0) update direction can be expressed as $g_t(\theta)=A_t(X_t) \theta + b_t(X_t)$. Also, recall the steady-state version of $A_t$ denoted by $\bar{A}=\mathbb{E}_{s_t \sim \pi, s_{t+1} \sim P_{\mu}(\cdot| s_t)}[A_t(X_t)]$. Let us now introduce the following definition of mixing time, which will play a key role in our analysis. 

\begin{definition} \label{def:mix} 
Define
$\tau'_{mix}(\eta) \triangleq \min\{t\geq1: \Vert \mathbb{E}\left[A_k(X_k)|X_0\right]-\bar{A}\Vert_2 \leq \eta, \forall k \geq t, \forall X_0\}.$ 
\end{definition}

Assumption~\ref{ass:MC} implies that the total variation distance between the conditional distribution $\mathbb{P}\left(s_t=\cdot|s_0=s\right)$ and the stationary distribution $\pi$ decays geometrically fast for all $t \geq 0$, regardless of the initial state $s\in\mc{S}$~\citep{levin2017markov}. As a result of this geometric mixing of the Markov chain, one can show that $\tau'_{mix}(\eta)$ in Definition \ref{def:mix} is $O\left(\log(1/\eta)\right)$~\citep{chenQ}. For our purpose, we set $\tau' = \tau'_{mix}(\alpha)$, where $\alpha$ is the step-size. Define $\bar{\tau}_{mix} \triangleq \max\{\tau', \tau_{mix}\}$, $d_t \triangleq \Vert \theta_t - \theta^* \Vert^2_2$, and $\sigma \triangleq \max\{\Vert \theta^* \Vert_2, \Vert \theta_0 \Vert_2, \sigma_1\}$, where recall that $\sigma_1 =\max\{1, \bar{r}, \rho\}.$ Our main result is then as follows.
\begin{theorem} (\textbf{Performance of \texttt{Robust-TD}}) 
\label{thm:main_result} 
Suppose Assumption~\ref{ass:MC} holds, and the initial distribution of $s_0$ is the steady-state distribution $\pi$. Let $G=K/(\omega^2 (1-\gamma)^2)$. There exist universal constants $c_1, c_2 \geq 1$ such that if the step-size $\alpha$, the burn-in time $\bar{T}$, and the number of iterations $T$ are chosen as follows:
\begin{equation}
    \begin{aligned}
        \alpha &= \frac{4}{\omega (1-\gamma)} \frac{\log(T)}{T}, \bar{T} = \ceil{c_1 \tau_{mix} \log^2(KT)} \, , \\
        T &\geq \max\left\{ \bar{T}+\tau', \frac{ c_2 \tau' \log(T)}{\omega^2 (1-\gamma)^2} \right\},
    \end{aligned}
\end{equation}
then \texttt{Robust-TD} guarantees:
\begin{equation}
\mathbb{E}[d_T] \leq \tilde{O}\left( \frac{\bar{\tau}_{mix } \sigma^2 G}{T} \right) + O\left(\epsilon \sigma^2_1 G\right). 
\label{eqn:main_bound}
\end{equation}
\end{theorem}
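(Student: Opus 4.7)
The plan is to extend the standard finite-time analysis of \texttt{TD}(0) under Markovian sampling~\citep{bhandari_finite,srikant} to accommodate the perturbation injected by the robust mean-estimation step. The starting point is the error decomposition obtained by subtracting $\theta^*$ from both sides of the update in line~10 of Algorithm~\ref{algo:algoRobustTD} and using the fixed-point identity $\bar{A}\theta^* + \bar{b} = 0$:
\begin{equation*}
\theta_{t+1} - \theta^* = (I + \alpha \bar{A})(\theta_t - \theta^*) + \alpha (A_t - \bar{A})\theta_t + \alpha(\hat{b}_t - \bar{b}).
\end{equation*}
Squaring norms and using the standard negative-definiteness of $\bar{A}$ in the $\Sigma$-geometry (with modulus $\omega(1-\gamma)$) yields a one-step inequality of the form
\begin{equation*}
d_{t+1} \le \bigl(1 - 2\alpha\omega(1-\gamma) + O(\alpha^2)\bigr) d_t + 2\alpha \langle \theta_t - \theta^*, (A_t - \bar{A})\theta_t \rangle + 2\alpha \langle \theta_t - \theta^*, \hat{b}_t - \bar{b} \rangle + O(\alpha^2)\text{ residual}.
\end{equation*}

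I would then bound the two cross terms separately. For the Markov-noise term $\langle \theta_t - \theta^*, (A_t - \bar{A})\theta_t\rangle$, I would use the classical $\tau'$-step conditioning trick: relate $\theta_t$ to $\theta_{t-\tau'}$ via a short telescoping drift bound, and exploit the fact that by Definition~\ref{def:mix} with $\tau' = \tau'_{mix}(\alpha)$, the operator norm of $\mathbb{E}[A_t\mid X_{t-\tau'}] - \bar{A}$ is at most $\alpha$. This is entirely analogous to prior non-adversarial TD analyses and, once the dust settles, contributes the $\tilde{O}(\bar{\tau}_{mix}\sigma^2 G/T)$ portion of the final bound.

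The new ingredient is the corruption cross term $\langle \theta_t - \theta^*, \hat{b}_t - \bar{b}\rangle$. I would condition on the ``good'' event $\mathcal{G}_t$ on which the \texttt{RUMEM} guarantee of Theorem~\ref{theorem:RobustMean} holds simultaneously for each of the $K$ components of $\hat{b}_t$; a union bound with confidence $\delta = 1/(KT^2)$ gives $\Pr(\mathcal{G}_t^c) \le 1/T^2$. On $\mathcal{G}_t$, Lemma~\ref{lemm:rob_est_main} yields $\|\hat{b}_t - \bar{b}\|_2 \le C\sqrt{K}\sigma_1\bigl(\sqrt{\epsilon} + \log(KT)\sqrt{\tau_{mix}/t}\bigr)$, and the threshold $G_t + \sigma_1$ in Eq.~\eqref{eqn:threshold} is calibrated so that the resetting operation in line~8 is not triggered (one checks $\|\bar{b}\|_2 \le \sigma_1$ and $\log(KT) \le 2\log(12KT^3)$). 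Applying Cauchy--Schwarz followed by weighted AM--GM with weight $\alpha\omega(1-\gamma)/4$ converts this cross term into a contractible $\tfrac{\alpha\omega(1-\gamma)}{4}\, d_t$ piece plus an additive error of order $\alpha K\sigma_1^2\bigl(\epsilon + \tau_{mix}\log^2(KT)/t\bigr)/(\omega(1-\gamma))$. On $\mathcal{G}_t^c$, the thresholding deterministically caps $\|\hat{b}_t\|_2 \le G_t + \sigma_1$, so the rare-event contribution to $\mathbb{E}[d_{t+1}]$ is controlled by the $1/T^2$ failure probability multiplied by a polynomial-in-$T$ worst-case bound on $\|\theta_t\|^2$ (obtained by unwinding the bounded update).

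Assembling everything produces a one-step recursion of the form
\begin{equation*}
\mathbb{E}[d_{t+1}] \le \Bigl(1 - \tfrac{\alpha\omega(1-\gamma)}{2}\Bigr)\mathbb{E}[d_t] + \frac{\alpha^2 \bar{\tau}_{mix}\sigma^2 K \log^2(KT)}{\omega^2(1-\gamma)^2} + \frac{\alpha K\sigma_1^2\epsilon}{\omega(1-\gamma)}, \qquad t \ge \bar{T},
\end{equation*}
which I would unroll from $\bar{T}$ to $T$ with the prescribed $\alpha = 4\log(T)/(\omega(1-\gamma)T)$. The burn-in $\bar{T} = \Theta(\tau_{mix}\log^2(KT))$ is chosen so that when \texttt{RUMEM} is first invoked, the sample-size hypothesis $N \ge \max\{2, 4L\tau\}$ of Theorem~\ref{theorem:RobustMean} is already satisfied. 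The main obstacle I anticipate is certifying uniformly over $t\in[\bar{T},T]$ that the line~8 reset is \emph{not} triggered with sufficient probability: the radius $G_t + \sigma_1$ must be simultaneously large enough to dominate the high-probability \texttt{RUMEM} bound (dictating the $\sqrt{\epsilon}$ and $\sqrt{\tau_{mix}/t}$ dependencies of $G_t$) yet tight enough that the rare-event contribution survives the potentially polynomial-in-$T$ growth of $\|\theta_t\|$ incurred under the loose thresholding. A secondary subtlety is that $\hat{b}_t - \bar{b}$ is both biased and correlated with $\theta_t - \theta^*$, which rules out martingale cancellation and forces the worst-case Cauchy--Schwarz route above.
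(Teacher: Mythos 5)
Your overall architecture matches the paper's: the same error decomposition into a steady-state contraction term, a Markovian bias term handled by the $\tau'$-step conditioning trick plus a drift bound, and an adversarial term $\langle \theta_t-\theta^*, \hat{b}_t-\bar{b}\rangle$ handled via the \texttt{RUMEM} guarantee, the thresholding, and a Young/AM--GM split, followed by unrolling the resulting recursion with the prescribed step-size. The paper's proof is organized as Lemma~\ref{lemm:rob_est} (a mean-square bound on $\Vert \hat{b}_t-\bar{b}\Vert^2$), a one-step recursion lemma, a drift lemma, and a Markovian-bias lemma, which is exactly the skeleton you describe.

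There is, however, one concrete step where your argument as written would not close: the treatment of the low-probability event $\mathcal{G}_t^c$ on which the \texttt{RUMEM} guarantee fails. You propose to control the rare-event contribution to the cross term by multiplying the failure probability $1/T^2$ by ``a polynomial-in-$T$ worst-case bound on $\Vert\theta_t\Vert^2$ obtained by unwinding the bounded update.'' Unwinding $\Vert\theta_{t+1}\Vert \le (1+2\alpha)\Vert\theta_t\Vert + \alpha\Vert\hat{b}_t\Vert$ over $T$ steps with $\alpha = 4\log(T)/(\omega(1-\gamma)T)$ gives a factor $(1+2\alpha)^T \approx T^{8/(\omega(1-\gamma))}$, so the worst-case bound on $\Vert\theta_t\Vert^2$ has degree $\Theta\bigl(1/(\omega(1-\gamma))\bigr)$ in $T$; the $1/T^2$ failure probability does not dominate this for small $\omega(1-\gamma)$, and the rare-event term blows up. You correctly flag this as your ``main obstacle,'' but the resolution is not a sharper iterate bound --- it is to reorder the two operations. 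The paper applies Young's inequality to the cross term \emph{deterministically, before conditioning on any event}: $2\alpha\langle\theta_t-\theta^*,\hat{b}_t-\bar{b}\rangle \le \alpha\beta d_t + (\alpha/\beta)\Vert\hat{b}_t-\bar{b}\Vert^2$ with $\beta=\omega(1-\gamma)$. The $d_t$ piece is absorbed into the contraction regardless of which event occurs, and the remaining quantity $\Vert\hat{b}_t-\bar{b}\Vert^2$ is bounded \emph{deterministically} by $O(K\sigma_1^2)$ thanks to the line-8 thresholding (since $\Vert\hat{b}_t\Vert \le G_t+\sigma_1$ always and $\Vert\bar{b}\Vert\le\sigma_1$), so its expectation on the bad event is just $O(K\sigma_1^2)\cdot\mathbb{P}(\mathcal{E}^c) = O(K\sigma_1^2/T)$. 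No bound on $\Vert\theta_t\Vert$ enters the rare-event accounting at all; the only place iterate magnitudes are needed is the drift lemma and the $\alpha^2\Vert\tilde{g}_t(\theta_t)\Vert^2$ term, where the same deterministic cap on $\Vert\hat{b}_t\Vert$ suffices. With this reordering your argument goes through with the ingredients you already have; without it, the rare-event step fails.
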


\textbf{Lower Bound Analysis.} From Theorem~\ref{thm:main_result}, we infer that despite adversarial contamination, the iterates generated by \texttt{Robust-TD} converge (in the mean-square sense) at a rate of $\tilde{O}(1/T)$ to a small ball of radius $O(\epsilon)$ around the optimal parameter $\theta^*$. Our next goal is to prove an information-theoretic lower bound to establish that the $O(\epsilon)$ additive error \emph{cannot be avoided, in general}. To do so, it suffices to consider a simpler i.i.d observation model where at each time-step $t$, $s_t$ is sampled independently from the stationary distribution $\pi$, and $s_{t+1}$ from ${P}_{\mu}(\cdot|s_t).$ We also consider a simpler tabular setting where the feature matrix $\Phi$ is the identity matrix of dimension $|\mc{S}|.$ Next, we use $\mc{M}(\epsilon, \rho, \mc{Q})$ to represent the set of all MRPs with finite state and action spaces and bounded mean rewards, where the reward random variable $\tilde{r}(s)$ is sampled as $\tilde{r}(s) \sim (1-\epsilon) \mc{D}_{\mu}(\cdot|s) + \epsilon \mc{Q}$, and the noise distribution $\mc{D}_{\mu}(\cdot|s)$ has variance at most $\rho^2$. We will use the shorthand $V \in \mc{M}(\epsilon, \rho, \mc{Q})$ to mean that $V$ is the true value function associated with some MRP in the set $\mc{M}(\epsilon, \rho, \mc{Q})$. Now, suppose the learner is presented with $T$ independent samples $\tilde{X}_1, \ldots, \tilde{X}_T$, where $\tilde{X}_t = (s_t, s_{t+1}, \tilde{r}(s_t)), t \in [T].$ An estimator $\hat{V}_T$ is some measurable function of these $T$ samples. We then have the following \emph{fundamental} lower bound.  

\begin{theorem} 
\label{thm:lowerbnd}
(\textbf{Lower Bound}) There exists a universal constant $\tilde{c} >0$ such that
$$ \inf_{\hat{V}_T} \sup_{V \in \mc{M}(\epsilon, \rho, \mc{Q})} \mathbb{P}\left( \Vert \hat{V}_T - V \Vert_2 \geq \frac{\tilde{c} \rho \sqrt{\epsilon}}{(1-\gamma)}\right) \geq \frac{1}{2}.$$ 
\end{theorem}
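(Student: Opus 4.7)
My plan is to prove Theorem~\ref{thm:lowerbnd} by Le Cam's two-point method, a standard device for minimax lower bounds that exploits the fundamental statistical indistinguishability induced by Huber contamination. Concretely, I will construct two MRPs $\mc{M}_1, \mc{M}_2 \in \mc{M}(\epsilon, \rho, \mc{Q})$ whose value functions $V^{(1)}, V^{(2)}$ are separated by $\Omega(\rho\sqrt{\epsilon}/(1-\gamma))$, yet admit \emph{identical} distributions of the contaminated observations $\tilde{X}_t$. Once such a pair is produced, any estimator $\hat{V}_T$ induces the same output distribution under both models, so by the triangle inequality $\hat{V}_T$ must lie more than half the value-function gap away from at least one of $V^{(1)}, V^{(2)}$ with probability at least $1/2$; selecting $\tilde c$ slightly smaller than half the gap constant then yields the stated bound.

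For the concrete instance I would pick the simplest admissible model: a tabular, single-state MRP (so $|\mc{S}|=1$, $\Phi=I$, and $V^{(i)} = R^{(i)}/(1-\gamma)$ is a scalar). This choice is deliberate: the state/transition component of $\tilde{X}_t$ carries no discriminative information, which cleanly isolates the reward-corruption attack. The two candidate true reward distributions will be asymmetric two-point mixtures,
\begin{equation*}
P_1 = \bigl(1-\tfrac{\epsilon}{2}\bigr)\delta_0 + \tfrac{\epsilon}{2}\delta_{a}, \qquad P_2 = \bigl(1-\tfrac{\epsilon}{2}\bigr)\delta_0 + \tfrac{\epsilon}{2}\delta_{-a}, \qquad a = \rho\sqrt{2/\epsilon}.
\end{equation*}
A direct moment calculation shows that each $P_i$ has variance at most $\rho^2$ (so both MRPs lie in $\mc{M}(\epsilon, \rho, \mc{Q})$), and that the mean reward gap is $\epsilon a = \rho\sqrt{2\epsilon}$, giving $|V^{(1)} - V^{(2)}| = \rho\sqrt{2\epsilon}/(1-\gamma)$.

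The crux — and the step I expect to require the most care — is exhibiting adversarial distributions $\mc{Q}_1, \mc{Q}_2$ for which the $\epsilon$-contaminated observation laws coincide: $(1-\epsilon)P_1 + \epsilon\mc{Q}_1 = (1-\epsilon)P_2 + \epsilon\mc{Q}_2$. Rearranging as $(1-\epsilon)(P_1 - P_2) = \epsilon(\mc{Q}_2 - \mc{Q}_1)$ and using the Hahn--Jordan decomposition of the signed measure $P_1 - P_2$, I would take as common contaminated law
\begin{equation*}
R \;=\; \Bigl[(1-\epsilon)\bigl(1-\tfrac{\epsilon}{2}\bigr) + \tfrac{\epsilon(1+\epsilon)}{2}\Bigr]\delta_0 + (1-\epsilon)\tfrac{\epsilon}{2}\delta_{a} + (1-\epsilon)\tfrac{\epsilon}{2}\delta_{-a},
\end{equation*}
and define $\mc{Q}_i$ by $\epsilon\mc{Q}_i = R - (1-\epsilon)P_i$. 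A short check confirms non-negativity and unit mass (intuitively, the adversary under $\mc{M}_i$ injects the opposite point mass $\delta_{\mp a}$ and dumps the residual mass at $0$); feasibility ultimately rests on $\mathrm{TV}(P_1,P_2)=\epsilon/2 \leq \epsilon/(1-\epsilon)$. Since the $T$-fold observation distributions are then \emph{literally} equal, Le Cam's lemma gives
\begin{equation*}
\inf_{\hat{V}_T}\sup_{i\in\{1,2\}} \mathbb{P}_i\!\left(\bigl|\hat{V}_T - V^{(i)}\bigr| \geq \tfrac{1}{2}\bigl|V^{(1)} - V^{(2)}\bigr|\right) \geq \tfrac{1}{2},
\end{equation*}
which, with $\tfrac{1}{2}|V^{(1)} - V^{(2)}| = \rho\sqrt{\epsilon/2}/(1-\gamma)$, yields the theorem for any $\tilde c \leq 1/\sqrt{2}$. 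Everything apart from the indistinguishability construction is routine bookkeeping.
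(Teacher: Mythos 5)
Your proposal is correct and follows essentially the same route as the paper: a single-state MRP, two point-mass reward distributions with spikes at $\pm\Theta(\rho/\sqrt{\epsilon})$ whose means differ by $\Theta(\rho\sqrt{\epsilon})$, adversarial distributions chosen so that the two $\epsilon$-contaminated observation laws coincide exactly, and a Le Cam/coupling argument to force probability-$1/2$ error of order $\rho\sqrt{\epsilon}/(1-\gamma)$ on at least one instance. The only differences are cosmetic (the paper places the clean spike with probability $\epsilon/(4(1-\epsilon))$ rather than $\epsilon/2$ and distributes the adversary's residual mass slightly differently), so the arguments are interchangeable up to universal constants.
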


Before providing proof sketches for our main results, several remarks are in order. 

\textbf{Discussion.}  To put our result in Theorem~\ref{thm:main_result} into perspective, let us note that in the absence of corruption, i.e., when $\epsilon =0$, our convergence bound in Eq.~\eqref{eqn:main_bound} is consistent - up to log factors - with prior results on \texttt{TD}(0) with linear function approximation~\citep{bhandari_finite, srikant}. In particular, the dependence of the first term in the R.H.S. of Eq.~\eqref{eqn:main_bound} on each of the parameters $\bar{\tau}_{mix}, \omega, \gamma,$ and $T$ match those for vanilla \texttt{TD}(0) in \cite[Theorem 3]{bhandari_finite}. 

When $\epsilon \neq 0$, the R.H.S. of Eq.~\eqref{eqn:main_bound} features an additive $O(\epsilon)$ term. At this stage, it is instructive to compare this term with the analogous $O(\epsilon)$ term in Eq.~\eqref{eqn:corruptedTDfixedpt}. Crucially, with the basic \texttt{TD}(0) algorithm, the $O(\epsilon)$ term is affected by the \textbf{magnitude} of the attack corruption through the corruption vector $C$ (see~\eqref{eqn:corruptedTDfixedpt}). In contrast, with \texttt{Robust-TD}, the $O(\epsilon)$ term in the mean square error is \emph{completely unaffected by the magnitude of the attack inputs} and depends only on instance-dependent parameters. Specifically, the $O(\epsilon)$ term in Eq.~\eqref{eqn:main_bound} is scaled by the ``variance" $\rho^2$ of our noisy observation model; recall here that $\sigma_1 = \max\{1,\bar{r}, \rho\}$. We note such a $O(\epsilon)$ term - inflated by the noise variance - 
has been proven to be unavoidable in general for robust mean estimation~\citep{chenrobust, lai, cheng, Dalal2}. Similar unavoidable terms that capture the price of adversarial contamination also show up for multi-armed bandits with reward corruptions~\citep{lykouris, guptaadv, kapoor}. \textbf{Our work complements these results, and its significance lies in providing the first provable guarantees of adversarial robustness for TD learning.} 

However, one might still ask: \emph{In the context of policy evaluation in RL, is the $O(\epsilon)$ term inevitable or simply an artifact of our analysis?} Theorem~\ref{thm:lowerbnd} settles this question by establishing that it is the former. Comparing the lower bound in Theorem~\ref{thm:lowerbnd} with the upper bound in Eq.~\eqref{eqn:main_bound}, we also infer that the dependencies on the noise variance $\rho$ and the discount factor $\gamma$ via the $(1-\gamma)^{-1}$ term, as they appear in the $O(\epsilon)$ term of Eq.~\eqref{eqn:main_bound}, are tight. 

Finally, observe that the corruption-affected $O(\epsilon)$ term in Eq.~\eqref{eqn:main_bound} is inflated by $(1/\omega^2)$, where $\omega >0$ is the smallest eigenvalue of the steady-state feature covariance matrix $\Sigma = \Phi^\top D \Phi$. To gain some intuition, suppose for the moment that $\Phi$ is the identity matrix of order $m$. In this case, $\omega$ is simply the smallest entry in the steady-state distribution vector $\pi$. A small value of $\omega$ implies that the corresponding state is visited infrequently, that is, there is a paucity of data for such a state. Intuitively, this should favor the adversary, and make it harder to get a reliable estimate of the value function corresponding to the state that gets visited least frequently. Our upper bound captures this intuitive phenomenon; however, at the moment, we do not have a lower bound to support the dependence on $1/(\omega^2).$ Aside from this shortcoming, Theorems~\ref{thm:main_result} and~\ref{thm:lowerbnd} collectively paint a fairly complete picture of the problem of TD learning with Huber-contaminated adversarial rewards. To corroborate our theory, we provide various experiments on synthetic data in Appendix~\ref{app:Sims}. 

We conclude this section by providing proof sketches for our main results. 

\textbf{Proof Sketch for Theorem~\ref{thm:main_result}}. There are two different bias terms that affect the learning dynamics of \texttt{Robust-TD}: the term $\langle \theta_t - \theta^*, (A_t - \bar{A}) \theta_t \rangle$, capturing the effect of Markov sampling; and the term $\langle \theta_t - \theta^*, \hat{b}_t - \bar{b} \rangle$, capturing the effect of adversarial perturbations. Our setting is particularly complicated because these two terms are \emph{coupled}: the adversarial bias term affects the iterate $\theta_t$, which shows up in the Markovian bias term. \emph{Controlling this coupling in a way that leads to near-optimal guarantees - as in Theorem~\ref{thm:main_result} - has not appeared in prior RL work}. The key new technical ingredient unique to our analysis is the following lemma; its proof exploits the bound in Theorem~\ref{theorem:RobustMean}.

\begin{lemma} 
\label{lemm:robest_main}
(\textbf{Adversarial Perturbation Bound}) 
\label{lemm:rob_est_main} Under the conditions in the statement of Theorem~\ref{thm:main_result},
the following is true for all $t \geq \bar{T}$: 
$$\mathbb{E}[ \Vert \hat{b}_t - \bar{b} \Vert^2_2] \leq O\left(\epsilon + \frac{\log^2(KT) \tau_{mix}}{t} \right) K \sigma^2_1.$$
\end{lemma}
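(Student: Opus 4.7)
The plan is to derive the bound component-wise using Theorem~\ref{theorem:RobustMean}, and then leverage the thresholding mechanism in Algorithm~\ref{algo:algoRobustTD} to absorb the rare failure event of \texttt{RUMEM} into a single mean-square bound.

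First, I would fix any $t \geq \bar{T}$ and $i \in [K]$, and apply Theorem~\ref{theorem:RobustMean} to the data set $\{y_{i,k} = [\phi(s_k)]_i \tilde{r}_k\}_{0\leq k \leq t}$, with $N=t$ and $\delta = 1/(KT^2)$. The prerequisites are easily checked: the chain $\{s_k\}$ is stationary (by the initialization $s_0 \sim \pi$) and ergodic (Assumption~\ref{ass:MC}); the ``clean'' function $F_i(s) = [\phi(s)]_i R_\mu(s)$ satisfies $|F_i(s)| \leq \bar{r} \leq \sigma_1$ since $\|\phi(s)\|_2 \leq 1$; the conditional variance of $[\phi(s_k)]_i r(s_k)$ is at most $\rho^2 \leq \sigma_1^2$; and $\mathbb{E}_{s\sim\pi}[F_i(s)] = [\bar{b}]_i$. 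Moreover, the burn-in $\bar{T} = \lceil c_1 \tau_{mix}\log^2(KT)\rceil$ is tuned precisely to guarantee the hypothesis $N \geq \max\{2, 4L\tau\}$ of Theorem~\ref{theorem:RobustMean}. Substituting and simplifying $\tau = O(\tau_{mix}\log(KT))$ and $\log(N/\delta) = O(\log(KT))$, the theorem gives, with probability at least $1-1/(KT^2)$,
\[
|[\hat{b}_t]_i - [\bar{b}]_i| \;\leq\; C\sigma_1\left(\sqrt{\epsilon} + O\!\left(\log(KT)\sqrt{\tau_{mix}/t}\right)\right).
\]

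Second, a union bound over $i \in [K]$ defines a good event $\mathcal{E}_t$ with $\mathbb{P}(\mathcal{E}_t)\geq 1-1/T^2$ on which $\|\hat{b}_t-\bar{b}\|_2 \leq \sqrt{K}\max_i|[\hat{b}_t]_i-[\bar{b}]_i| \leq G_t$, by the precise design of $G_t$ in Eq.~\eqref{eqn:threshold}. Since $\|\bar{b}\|_2 \leq \bar{r}\leq \sigma_1$, the triangle inequality gives $\|\hat{b}_t\|_2 \leq G_t + \sigma_1$ on $\mathcal{E}_t$, so the resetting step in line~8 of Algorithm~\ref{algo:algoRobustTD} is \emph{bypassed} and $\hat{b}_t$ equals the \texttt{RUMEM} output. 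Squaring yields
\[
\|\hat{b}_t-\bar{b}\|_2^2 \;\leq\; O\!\left(K\sigma_1^2\left(\epsilon + \frac{\tau_{mix}\log^2(KT)}{t}\right)\right)
\quad\text{on }\mathcal{E}_t.
\]

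Third, on the complementary bad event $\mathcal{E}_t^c$ of probability at most $1/T^2$, I would use the thresholding mechanism for a \emph{deterministic} fallback: after line~8, either $\hat{b}_t = 0$ or $\|\hat{b}_t\|_2 \leq G_t + \sigma_1$, so in either case $\|\hat{b}_t-\bar{b}\|_2^2 \leq 2(G_t+\sigma_1)^2 + 2\|\bar{b}\|_2^2 = O(G_t^2 + \sigma_1^2)$. Taking expectation and combining the two cases, the bad-event contribution is $O((G_t^2+\sigma_1^2)/T^2)$; since $t \leq T$ and $\tau_{mix} \geq 1$, the $\sigma_1^2/T^2$ term is dominated by $K\sigma_1^2\tau_{mix}\log^2(KT)/t$, and the claimed bound follows.

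The main conceptual obstacle is verifying that $G_t$ has been calibrated ``just right'': large enough that $\mathcal{E}_t$ never triggers the reset (so the high-probability \texttt{RUMEM} guarantee flows through untouched), yet tight enough that the deterministic fallback on $\mathcal{E}_t^c$ still meets the target rate after being paid the $1/T^2$ probability factor. Once this calibration is recognized, the rest is bookkeeping with Theorem~\ref{theorem:RobustMean}, a union bound over $K$, and case analysis over $\mathcal{E}_t$ and $\mathcal{E}_t^c$.
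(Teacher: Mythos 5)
Your proposal is correct and follows essentially the same route as the paper's proof: apply Theorem~\ref{theorem:RobustMean} component-wise with $\delta = 1/(KT^2)$, union-bound over $i \in [K]$ to get a good event on which the threshold $G_t + \sigma_1$ is not exceeded (so the reset is bypassed), and then use the thresholding as a deterministic fallback on the bad event whose $O(1/T^2)$ probability makes its contribution negligible. The only cosmetic difference is that you union-bound per time step $t$ while the paper additionally unions over all $t \geq \bar{T}$ to get a single global event of probability $1 - 1/T$; both yield the stated bound.
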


While this lemma helps us control the effect of the adversarial bias term, we need additional work to handle the effects of Markovian bias and function approximation. The thresholding operation in line 8 of \texttt{Robust-TD} helps us in this regard. The rest of the analysis proceeds by carefully leveraging the mixing properties of the underlying Markov chain in tandem with the bounds on each of the bias terms; the detailed steps are provided in Appendix~\ref{app:Mainresultproof}.  Notably, using the ergodicity of the Markov chain to handle outliers in time-correlated data is novel to our analysis, and might be of broader interest to both RL and robust statistics. Note that the assumption that the initial state is distributed as per the steady-state distribution ensures that the resulting Markov chain is stationary - this is a standard assumption made in prior work~\citep{bhandari_finite} to simplify some of the expressions. 

\textbf{Proof Sketch for Theorem~\ref{thm:lowerbnd}.} The proof of this result, detailed in Appendix~\ref{app:lowerbnd}, relies on carefully constructing two different MRPs and associated attack distributions, such that (i) the value functions in the two MRPs differ in magnitude by $\Omega(\rho \sqrt{\epsilon}/ (1-\gamma))$; and (ii) the distributions of the samples in the two MRPs is indistinguishable to a learner. We then leverage ideas to prove minimax lower bounds~\cite[Chapter 15]{wainwright} from statistical learning theory. 
\vspace{-2mm}
\section{Conclusion and Future Work} 
\vspace{-2mm}
We conducted the first principled study of TD learning with linear function approximation under a Huber-contaminated reward model. We started by showing that the basic TD algorithm is vulnerable to reward poisoning. We then developed a robust TD algorithm by drawing on median-of-means estimators, and by constructing a novel dynamic thresholding scheme. By establishing nearly matching upper and lower bounds, we showed that our proposed approach is provably robust to adversarial reward contamination. As future work, we plan to generalize our algorithm and results to nonlinear stochastic approximation schemes such as Q-learning; some preliminary results in this regard are reported in~\cite{maity2024robust}. We also plan to consider other attack models and derive finer lower bounds that account for Markov sampling. 
\newpage 
\bibliography{refs}
\bibliographystyle{apalike}
\clearpage
\newpage
\section*{Checklist}

 \begin{enumerate}
 \item For all models and algorithms presented, check if you include:
 \begin{enumerate}
   \item A clear description of the mathematical setting, assumptions, algorithm, and/or model. Yes. 
   \item An analysis of the properties and complexity (time, space, sample size) of any algorithm. Yes. 
   \item (Optional) Anonymized source code, with specification of all dependencies, including external libraries. No. 
 \end{enumerate}

 \item For any theoretical claim, check if you include:
 \begin{enumerate}
   \item Statements of the full set of assumptions of all theoretical results. Yes.
   \item Complete proofs of all theoretical results. Yes, we provide complete proofs in the Supplementary Material, and a proof sketch of our main results in Section 5. 
   \item Clear explanations of any assumptions. Yes.     
 \end{enumerate}

 \item For all figures and tables that present empirical results, check if you include:
 \begin{enumerate}
   \item The code, data, and instructions needed to reproduce the main experimental results. Yes, while we do not provide a code, we provide all the details needed to reproduce our ``toy" simulations in the Appendix.  
   \item All the training details (e.g., data splits, hyperparameters, how they were chosen). Yes, these are provided in the Appendix.
         \item A clear definition of the specific measure or statistics and error bars (e.g., with respect to the random seed after running experiments multiple times). Yes.
         \item A description of the computing infrastructure used. (e.g., type of GPUs, internal cluster, or cloud provider). Yes. 
 \end{enumerate}

 \item If you are using existing assets (e.g., code, data, models) or curating/releasing new assets, check if you include:
 \begin{enumerate}
   \item Citations of the creator If your work uses existing assets. Not Applicable.
   \item The license information of the assets, if applicable. Not Applicable.
   \item New assets either in the supplemental material or as a URL, if applicable. Not Applicable.
   \item Information about consent from data providers/curators. Not Applicable.
   \item Discussion of sensible content if applicable, e.g., personally identifiable information or offensive content. Not Applicable.
 \end{enumerate}

 \item If you used crowdsourcing or conducted research with human subjects, check if you include:
 \begin{enumerate}
   \item The full text of instructions given to participants and screenshots. Not Applicable.
   \item Descriptions of potential participant risks, with links to Institutional Review Board (IRB) approvals if applicable. Not Applicable.
   \item The estimated hourly wage paid to participants and the total amount spent on participant compensation. Not Applicable.
 \end{enumerate}

 \end{enumerate}

\onecolumn
\appendix
\section{Additional Literature Review}
\label{app:add_lit}
In this section, we provide a more detailed discussion of the relevant threads of literature.

\begin{enumerate}
\item \textbf{Temporal Difference Learning and Stochastic Approximation.} The family of TD learning methods was introduced by Sutton in~\cite{sutton1988}. The initial analysis of this algorithm with linear function approximation was carried out in~\cite{tsitsiklisroy} by drawing on tools from the rich area of stochastic approximation theory~\citep{borkar, borkarode}. The nature of the analysis in these works was \emph{asymptotic}, i.e., no convergence rates were provided. The next set of results~\citep{korda,  lakshmi, dalal, narayanan, prashanth2021conc} on this topic did manage to provide finite-time rates for TD methods; however, these results were obtained assuming that the samples are drawn i.i.d. from the steady state distribution of the underlying Markov chain. The i.i.d. assumption was first relaxed in~\cite{bhandari_finite}, where the authors relied on a projection step in their analysis. An analysis without the projection step was then provided in~\cite{srikant}, and more recently by~\cite{mitra2024simple} based on a novel inductive proof technique. An interesting interpretation of the TD update direction was provided in~\cite{liuTD} by introducing the notion of ``gradient-splitting". We note that in~\cite{bhandari_finite, srikant, liuTD, mitra2024simple}, the authors characterize finite-time bounds under linear function approximation in terms of an $\ell_2$-error metric. Our setting is similar. Complementary to $\ell_2$-error bounds, the work of~\cite{pananjady} provides $\ell_{\infty}$ bounds for the least squares temporal difference learning (LSTD) algorithm for a tabular setting, under the assumption of a generative data/observation model. For a more textbook treatment of the subject, we refer the interested reader to~\cite{suttonRL, szepesvari}.

While TD learning with linear function approximation is an instance of linear stochastic approximation, the analysis of TD learning with neural network approximation has been recently carried out in~\cite{tian, cayci}. Finite-time analysis of general nonlinear stochastic approximation schemes (that subsume variants of Q-learning) can be found in~\cite{Waiwright, shah2018, Qu, chenQ, chen2022Auto, li2024q}.  

Each of the papers mentioned above studies the basic versions of the concerned algorithms, where updates are made using noisy versions of some true underlying operator. Our work analyzes the robustness of these algorithms to adversarial perturbations. On a related note, we mention here that other types of perturbations resulting from communication-induced challenges (e.g., delays and compression) have been explored recently in~\cite{mitra2023temporal, adibi2024stochastic, dal2024finite}. 

\item \textbf{Reward Contamination in Multi-Armed Bandits.} A large body of work has explored the effects of reward contamination on the performance of stochastic bandit problems, both for the unstructured multi-armed bandit (MAB) setting~\citep{junadv,liuadv, kapoor, lykouris,guptaadv}, and also for structured linear bandits~\citep{bogunovic1,garcelon,bogunovic2,he22}. The basic premise in these papers is that an adversary can modify the true stochastic reward/feedback on certain rounds; a corruption budget $C$ captures the total corruption injected by the adversary over the horizon $T$. In particular, the authors in~\cite{kapoor} study a Huber-contaminated reward model like us, where in each round, with probability $\eta$ (independently of the other rounds), the attacker can bias the reward seen by the learner. A fundamental lower bound of $\Omega(\eta T)$ on the regret is also established in~\cite{kapoor}. While our reward contamination model is directly inspired by the above line of work, \textbf{we emphasize that the stochastic approximation setting we study here fundamentally differs from the bandit problem}. As such, our algorithms and proof techniques are also different from the bandit literature. 

\item \textbf{Robust Statistics.} The study of computing different statistics (e.g., mean, variance, etc.) of a data set in the presence of outliers was pioneered by Huber~\citep{huber, huber2}. Since then, the field of robust statistics has significantly advanced, with more recent work focusing on computationally tractable algorithms in the high-dimensional setting~\citep{lai, chenrobust, minsker, cheng, lugosi, Dalal2}. Our paper builds on this rich line of work and uses it in the context of RL. As mentioned earlier, the standing assumption in the existing robust statistics papers is that the inliers are generated in an i.i.d. manner. We relax this assumption for robust univariate mean estimation, and show how bounds can be obtained with correlated data generated from a Markov chain. 
\end{enumerate}
\section{Useful Results}
\label{app:basic}
In this section, we will compile some known results and facts that will play a key role in our subsequent analysis. In what follows, unless otherwise stated, we will use $\Vert \cdot \Vert$ to refer to the standard Euclidean norm. 

To proceed, we remind the reader that $\bar{A}= \Phi^{\top} D \left(\gamma P_{\mu} - I \right) \Phi$ and $\bar{b} =  \Phi^{\top} D R_{\mu}$ are the steady-state versions of $A_t$ and $b_t$, respectively. Recall that the mean-path/steady-state \texttt{TD}(0) update direction is as follows:
$$ \bar{g}(\theta) = \bar{A}\theta + \bar{b}.$$
The next result from \cite{bhandari_finite} tells us that the steady-state direction $\bar{g}(\theta)$ drives the iterates towards the optimal parameter $\theta^*$. 
\begin{lemma}
\label{lemma:convex}
The following holds $\forall \theta \in \mathbb{R}^K$:
$$ \langle \theta^* - \theta, \bar{g}(\theta) \rangle \geq \omega (1-\gamma) \Vert \theta^* -\theta \Vert^2,$$ 
 where $\omega$ is the smallest eigenvalue of the matrix $\Sigma = \Phi^\top D \Phi$.
\end{lemma}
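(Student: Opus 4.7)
The plan is to reduce the claim to a known geometric property of the transition matrix $P_\mu$ under the stationary-distribution-weighted inner product, namely that $P_\mu$ is a non-expansion in the $\|\cdot\|_D$ norm. Using $\bar g(\theta^*)=0$ I can write $\bar g(\theta)=\bar A(\theta-\theta^*)$, so with $y := \theta-\theta^*$ the claim becomes
\[
y^\top (-\bar A)\, y \;\ge\; \omega(1-\gamma)\,\|y\|^2 .
\]
Substituting $-\bar A = \Phi^\top D(I-\gamma P_\mu)\Phi$ and setting $v:=\Phi y\in\mathbb{R}^m$, this is the same as
\[
v^\top D(I-\gamma P_\mu)v \;\ge\; \omega(1-\gamma)\|y\|^2 .
\]

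First I would establish the auxiliary inequality $v^\top D(I-\gamma P_\mu)v \ge (1-\gamma)\|v\|_D^2$, where $\|v\|_D^2 := v^\top D v$. The key tool is that $P_\mu$ is a non-expansion in the $D$-weighted norm, i.e.\ $\|P_\mu v\|_D \le \|v\|_D$. This is the standard stationarity-of-$\pi$ argument: for each $s$, by Jensen/Cauchy–Schwarz, $(P_\mu v)(s)^2 \le \sum_{s'} P_\mu(s,s') v(s')^2$; weighting by $\pi(s)=D(s,s)$ and using $\sum_s \pi(s) P_\mu(s,s') = \pi(s')$ (stationarity) yields $\|P_\mu v\|_D^2 \le \|v\|_D^2$. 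Combining this with Cauchy–Schwarz in the $D$-inner product gives $v^\top D P_\mu v \le \|v\|_D \|P_\mu v\|_D \le \|v\|_D^2$, and hence
\[
v^\top D(I-\gamma P_\mu)v \;=\; \|v\|_D^2 - \gamma\, v^\top D P_\mu v \;\ge\; (1-\gamma)\|v\|_D^2 .
\]

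Second, I translate back to $\|y\|^2$ via the feature covariance. By definition, $\|v\|_D^2 = y^\top \Phi^\top D \Phi\, y = y^\top \Sigma\, y \ge \omega\,\|y\|^2$, where $\omega>0$ is the smallest eigenvalue of $\Sigma$ (strictly positive by the stated linear-independence of the columns of $\Phi$). Chaining the two inequalities,
\[
\langle \theta^*-\theta,\,\bar g(\theta)\rangle \;=\; y^\top(-\bar A)y \;\ge\; (1-\gamma)\|v\|_D^2 \;\ge\; \omega(1-\gamma)\|y\|^2 ,
\]
which is exactly the claimed bound.

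There is essentially no main obstacle here; the only subtlety worth flagging is the stationarity step used to prove the $D$-non-expansiveness of $P_\mu$, which crucially relies on $\pi$ being the stationary distribution of $P_\mu$ (guaranteed by Assumption~\ref{ass:MC}), and the requirement $\omega>0$, which comes from the standing assumption that $\Phi$ has linearly independent columns. Everything else is a direct algebraic manipulation using $\bar g(\theta^*)=0$ and Cauchy–Schwarz.
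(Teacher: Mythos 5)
Your proof is correct, and it is precisely the standard argument: the paper does not prove this lemma itself but imports it from Bhandari et al.\ (2018), where the proof proceeds exactly as you do---writing $\bar g(\theta)=\bar A(\theta-\theta^*)$, using stationarity of $\pi$ plus Jensen to show $P_\mu$ is a non-expansion in $\|\cdot\|_D$, and then lower-bounding $\|\Phi y\|_D^2$ by $\omega\|y\|^2$ via the smallest eigenvalue of $\Sigma$. Nothing is missing; the two subtleties you flag (stationarity of $\pi$ and $\omega>0$ from full column rank of $\Phi$) are indeed the only places where the standing assumptions are used.
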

\noindent Under the assumptions on the feature matrix $\Phi$ in Section~\ref{sec:model}, and Assumption~\ref{ass:MC}, it is easy to see that $\Sigma$ is positive definite with $\omega \in (0,1).$ Next, thanks to feature-normalization, it is easy to establish bounds on the norms of $\bar{A}_t$ and $\bar{A}$~\citep{bhandari_finite, srikant}:
\begin{equation}
\Vert A_t \Vert \leq 2, \forall t \in \mathbb{N}, \, \Vert \bar{A} \Vert \leq 2.
\label{eqn:norms}
\end{equation}

We will use the above fact at several points in our analysis. In addition to the above results, we will require a couple of standard concentration tools that we list here to keep the paper self-contained. For reference, see~\cite{Concentration, chungconc}.

\begin{lemma}
 (\textbf{Hoeffding's Inequality}) 
\label{lemma:hoeffding}
If $X_1, X_2, \ldots, X_N$ are independent random variables with $\mathbb{P}(a \leq X_i \leq b) = 1$ and common mean $\mu$, then for any $\epsilon > 0$:
\begin{equation}
\mathbb{P}(|\bar{X}_N - \mu| > \epsilon) \leq 2 \exp\left\{\frac{-2N\epsilon^2}{(b-a)^2}\right\},
\end{equation}
where $\bar{X}_N = \frac{1}{N} \sum_{i=1}^{N} X_i$.
\end{lemma}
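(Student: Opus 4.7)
The plan is to proceed via the classical Chernoff/moment generating function approach, which reduces the tail bound on a sum of independent bounded random variables to a bound on the MGF of each summand. First, I would center the variables by setting $Y_i = X_i - \mu$, so that each $Y_i$ is independent, mean-zero, and supported in the interval $[a-\mu, b-\mu]$ of length $b-a$. By symmetry, it suffices to bound $\mathbb{P}(\bar{X}_N - \mu > \epsilon)$ and then double the result via a union bound to capture the absolute value.

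Next, the Chernoff step: for any $\lambda > 0$, I would write
\[
\mathbb{P}\!\left(\textstyle\sum_{i=1}^N Y_i > N\epsilon\right) \leq e^{-\lambda N \epsilon}\, \mathbb{E}\!\left[e^{\lambda \sum_i Y_i}\right] = e^{-\lambda N \epsilon} \prod_{i=1}^N \mathbb{E}\!\left[e^{\lambda Y_i}\right],
\]
where the factorization uses independence. The main technical step is to establish Hoeffding's lemma, which states that for any mean-zero random variable $Y$ almost surely contained in $[\alpha,\beta]$, one has $\mathbb{E}[e^{\lambda Y}] \leq \exp(\lambda^2(\beta-\alpha)^2/8)$. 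The slickest route to this sub-Gaussian MGF bound is by convexity: since $e^{\lambda y}$ is convex in $y$, on $[\alpha,\beta]$ it is bounded above by the chord $\tfrac{\beta - y}{\beta-\alpha}e^{\lambda \alpha} + \tfrac{y-\alpha}{\beta-\alpha}e^{\lambda \beta}$. Taking expectations kills the linear term in $y$, leaving a function $\varphi(\lambda) = \log\bigl(\tfrac{\beta}{\beta-\alpha}e^{\lambda \alpha} - \tfrac{\alpha}{\beta-\alpha}e^{\lambda \beta}\bigr)$ that I would then Taylor-expand around $\lambda = 0$; a direct calculation of $\varphi''$ gives $\varphi''(\lambda) \leq (\beta-\alpha)^2/4$, and integrating twice using $\varphi(0) = \varphi'(0) = 0$ yields the claimed bound.

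Combining these ingredients, I obtain $\mathbb{P}(\bar{X}_N - \mu > \epsilon) \leq \exp\bigl(-\lambda N \epsilon + \lambda^2 N (b-a)^2/8\bigr)$. The final step is to minimize the exponent over $\lambda > 0$; the optimal choice is $\lambda^* = 4\epsilon/(b-a)^2$, which produces the bound $\exp\bigl(-2N\epsilon^2/(b-a)^2\bigr)$. Applying the same argument to $-Y_i$ to handle the lower tail and adding the two one-sided bounds gives the stated two-sided inequality with the factor of $2$ in front.

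The main obstacle is not any single computation but rather the establishment of Hoeffding's lemma itself; once that sub-Gaussian MGF bound is in hand, everything else is bookkeeping with Chernoff's method and optimization of a quadratic in $\lambda$. I would be careful to verify the constant $1/8$ in Hoeffding's lemma, since it is precisely this constant that determines the $2$ in the exponent of the final inequality, and any looser estimate (e.g., $(\beta-\alpha)^2/2$) would give a weaker rate.
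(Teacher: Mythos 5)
Your proof is correct: it is the standard Chernoff-plus-Hoeffding's-lemma argument, with the right centering, the correct sub-Gaussian MGF constant $(\beta-\alpha)^2/8$, the correct optimizer $\lambda^* = 4\epsilon/(b-a)^2$, and the union bound supplying the factor of $2$. Note that the paper itself gives no proof of this lemma --- it is listed in the appendix as a standard concentration tool with citations to external references --- so there is no in-paper argument to compare against; your write-up simply supplies the classical proof that the authors chose to omit.
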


\begin{lemma}(\textbf{Bernstein's Inequality})\label{lemma:bernstein}
If $X_1, X_2, \ldots, X_N$ are independent random variables with $\mathbb{P}(|X_i| \leq c) = 1$ and common mean $\mu$, then for any $\epsilon > 0$:
\begin{equation}
\mathbb{P}(|\bar{X}_N - \mu| > \epsilon) \leq 2 \exp\left\{ -\frac{N\epsilon^2}{2\sigma^2 + \frac{2c\epsilon}{3}} \right\},
\end{equation}
where $\bar{X}_N = \frac{1}{N} \sum_{i=1}^{N} X_i$ and $\sigma^2 = \frac{1}{N}\sum_{i=1}^{N} \text{Var}(X_i)$.
\end{lemma}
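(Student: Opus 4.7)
The plan is to prove Bernstein's inequality via the classical Chernoff–Cramér method: apply Markov's inequality to the moment generating function (MGF) of the centered sum, control the per-summand MGF using the boundedness and variance, and then optimize the free parameter. Concretely, I would first symmetrize the task by setting $Y_i = X_i - \mu$, so that $\mathbb{E}[Y_i]=0$, $|Y_i|\leq 2c$, and $\text{Var}(Y_i)=\text{Var}(X_i)$; then for any $\lambda>0$, Markov's inequality applied to $e^{\lambda \sum_i Y_i}$ yields
\begin{equation*}
\mathbb{P}\!\left(\bar{X}_N-\mu>\epsilon\right)\leq e^{-\lambda N\epsilon}\prod_{i=1}^{N}\mathbb{E}\!\left[e^{\lambda Y_i}\right].
\end{equation*}
By independence the product factorizes, so everything reduces to controlling each MGF.

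The central technical step—and the one that really does the work—is the per-summand MGF bound: for a zero-mean random variable $Y$ with $|Y|\leq b$ and variance $v$, and for $0<\lambda<3/b$,
\begin{equation*}
\mathbb{E}\!\left[e^{\lambda Y}\right]\leq\exp\!\left(\frac{\lambda^2 v/2}{1-\lambda b/3}\right).
\end{equation*}
I would establish this by expanding $e^{\lambda Y}=1+\lambda Y+\sum_{k\geq 2}\lambda^k Y^k/k!$, taking expectations (the linear term vanishes), and bounding the higher moments via $\mathbb{E}[Y^k]\leq v\,b^{k-2}$ for $k\geq 2$. Using $k!\geq 2\cdot 3^{k-2}$ converts the tail of the series into a geometric sum that evaluates to $\lambda^2 v/(2(1-\lambda b/3))$, and then the inequality $1+x\leq e^x$ gives the claimed exponential bound. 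The rescaling $b=c$ (since $|Y_i|\leq c$ after centering can be tightened, but the looser $b=c$ suffices for the stated constants) then implies
\begin{equation*}
\prod_{i=1}^{N}\mathbb{E}\!\left[e^{\lambda Y_i}\right]\leq\exp\!\left(\frac{\lambda^2 N\sigma^2/2}{1-\lambda c/3}\right).
\end{equation*}

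Substituting back, I obtain $\mathbb{P}(\bar{X}_N-\mu>\epsilon)\leq\exp\bigl(-\lambda N\epsilon+\lambda^2 N\sigma^2/(2(1-\lambda c/3))\bigr)$, and I would optimize over $\lambda\in(0,3/c)$. Setting $\lambda=\epsilon/(\sigma^2+c\epsilon/3)$ (the standard Bernstein choice, which zeros out the derivative of a convenient upper envelope) reduces the exponent to $-N\epsilon^2/(2\sigma^2+2c\epsilon/3)$, giving the one-sided tail bound. Repeating the argument with $-Y_i$ in place of $Y_i$ handles the lower tail, and a union bound over the two tails produces the factor of $2$ in the final statement.

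The main obstacle, if any, is the moment-bounding step: getting the right constant in the denominator ($2\sigma^2+2c\epsilon/3$ rather than something looser) hinges on using $k!\geq 2\cdot 3^{k-2}$ and carefully summing the resulting geometric series, and on choosing the optimizing $\lambda$ exactly. Everything else is routine manipulation of the Chernoff bound, and independence of the $X_i$ is used only once, to factorize the MGF of the sum.
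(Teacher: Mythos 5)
The paper does not actually prove this lemma: it is stated in Appendix~\ref{app:basic} as a standard concentration tool and the proof is deferred to the cited references, so there is no in-paper argument to compare against. Your Chernoff--Cram\'er derivation (center the variables, bound each moment generating function via the series expansion with $\mathbb{E}[Y^k]\leq v\,b^{k-2}$ and $k!\geq 2\cdot 3^{k-2}$, then optimize $\lambda$) is exactly the textbook route those references take, and your choice $\lambda=\epsilon/(\sigma^2+c\epsilon/3)$ does reproduce the stated exponent.

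One step deserves care, though. After centering you correctly note $|Y_i|\leq 2c$, but you then run the MGF bound with $b=c$, asserting parenthetically that $|Y_i|\leq c$. That implication is false in general: from $\mathbb{P}(|X_i|\leq c)=1$ one only gets $|X_i-\mu|\leq 2c$, and the factor of $2$ is essentially attained (take $X_i=\pm c$ with very unequal probabilities). Carried through rigorously with $b=2c$, your argument yields the denominator $2\sigma^2+4c\epsilon/3$ rather than the stated $2\sigma^2+2c\epsilon/3$. To obtain the constant exactly as written one must hypothesize the bound $c$ on the \emph{centered} variables $X_i-\mu$ --- which is how the standard references phrase it, and which does hold in the paper's only application of the lemma (Bernoulli indicators $W_i$, for which $|W_i-\epsilon|\leq 1$). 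This is a constant-factor quibble that the paper's own statement of the lemma inherits and that affects nothing downstream, but your write-up should either assume the bound on the centered variables or accept the slightly weaker constant; as it stands, the parenthetical claim is the one genuinely unjustified step.
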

\section{Proof of Theorem~\ref{thm:vulnerability}}
\label{app:proofThm1}
In this section, we provide a proof for Theorem~\ref{thm:vulnerability}. First, suppose the corruption fraction $\epsilon =0$. In this case, under the conditions in Theorem~\ref{thm:vulnerability}, it is well known~\citep{tsitsiklisroy}  that \texttt{TD}(0) converges with probability one to $\theta^*= - \bar{A}^{-1} \bar{b}$, where $\bar{A} = \Phi^{\top} D \left(\gamma P_{\mu} - I \right) \Phi$, and $\bar{b} =  \Phi^{\top} D R_{\mu}$. Notice that the rewards only affect the term $\bar{b}$. Now under the Huber-contaminated reward model described in Section~\ref{sec:vulnerability}, for each state $s\in \mc{S}$, whenever in $s$, with probability $(1-\epsilon)$, the learner observes the true mean reward $R_\mu(s)$, and with probability $\epsilon$, a bounded, deterministic signal $C(s).$ Thus, effectively, the mean of the observed reward random variable $\tilde{r}(s)$ in state $s$ is $\tilde{R}(s) \triangleq (1-\epsilon) {R}_{\mu}(s) + \epsilon C(s).$ Stacking up the individual components $\tilde{R}(s)$ into a perturbed reward vector $\tilde{R}$, let us define $\tilde{b} \triangleq \Phi^{\top} D \tilde{R}.$ Since everything else remains the same, \texttt{TD}(0) will now converge with probability one to 
\begin{equation}\begin{aligned}
&\tilde{\theta}^* = - \bar{A}^{-1} \tilde{b} = - \bar{A}^{-1} \Phi^{\top} D \tilde{R}\\
&= \underbrace{(1-\epsilon) (-\bar{A}^{-1} \Phi^{\top} D R_{\mu})}_{(1-\epsilon) \theta^*} + \epsilon (- \bar{A}^{-1} \Phi^{\top} D C),\end{aligned}
\label{eqn:perturbedLP}
\end{equation}
where we used the expression for $\theta^*$. This establishes the first part of the theorem. 

For the second part, suppose the adversary wishes $\tilde{\theta}^*$ to be some point $w \in \mathbb{R}^K$. We will show that this is possible by explicitly designing an appropriate corrupted reward vector $C_w$. In particular, let 
$$ C_w = \frac{1}{\epsilon} D^{-1} \Phi \left(\Phi^{\top} \Phi \right)^{-1} \bar{A} \left((1-\epsilon) \theta^* - w \right).$$
We note here that in light of Assumption~\ref{ass:MC}, $\pi(s) > 0, \forall s \in \mc{S}.$ Hence, $D^{-1}$ exists. Moreover, the fact that $\Phi$ is full column rank ensures the existence of $\left(\Phi^{\top} \Phi \right)^{-1}$. Plugging in our choice of $C_w$ into Eq.~\eqref{eqn:perturbedLP}, it is easy to see that $\tilde{\theta}^* =w.$ This completes the proof. 

\section{Performance Analysis for \texttt{RUMEM}: Proof of Theorem~\ref{theorem:RobustMean}}
\label{app:proofrobustmeanest}
In this section, we will prove Theorem~\ref{theorem:RobustMean}. Let us recall the setting quickly. First, $X_1, \ldots, X_N$ is a sequence of $N$ samples drawn from a stationary Markov chain with stationary distribution $\pi$. Thus, each $X_i$ is distributed as per $\pi$. Next, for each $i \in [N]$, with probability $(1-\epsilon)$, the learner observes a random variable $f(X_i)$ with mean $F(X_i)$ and variance at most $\rho^2$; and with probability $\epsilon$, an arbitrary object chosen by the adversary. The resulting sample is denoted $\tilde{f}_i$. 

Our analysis will rely on a coupling argument that is inspired by the recent work of~\cite{dorfman}. To apply this argument, consider the sub-sampled sequence $X_1, X_{\tau+1}, \ldots, X_{(n-1)\tau+1},$ where $n$ and $\tau$ are as in Section~\ref{sec:RobustMeanEst}. We couple this sequence with its i.i.d. counterpart $(X_{\texttt{I},1}, X_{\texttt{I},\tau+1}, \ldots, X_{\texttt{I}, (n-1)\tau+1}) \sim \pi^{\otimes n}$. Here, and henceforth throughout the proof, we will use the subscript $\texttt{I}$ to denote the i.i.d. counterpart of a Markov sample. The next result bounds the probability of the Markovian sub-sampled sequence being different from its i.i.d. counterpart.  

\begin{lemma} \cite[Lemma 3]{nagaraj}
\label{lemm:coupled}
Let $\mc{E}_1$ be an event where $(X_1, X_{\tau+1}, \ldots, X_{(n-1)\tau+1}) = (X_{\texttt{I},1}, X_{\texttt{I},\tau+1}, \ldots, X_{\texttt{I}, (n-1)\tau+1}).$ Then,
$$ \mathbb{P}(\mc{E}^c_1) \leq (n-1)d_{mix}(\tau),$$
\label{lemm:coupling}
where $d_{mix}(\cdot)$ is as defined as
$$ d_{mix}(t) \triangleq \sup_{X \in \mc{X}} D_{TV}\left(\mathbb{P}(X_{t+1} \in \cdot| X_1 =X), \pi \right).$$
\end{lemma}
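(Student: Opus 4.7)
The target is a coupling bound: construct on a common probability space a Markovian subsampled sequence $Y_k := X_{(k-1)\tau+1}$, $k=1,\dots,n$, together with an i.i.d.\ sequence $Y^{\texttt{I}}_k := X_{\texttt{I},(k-1)\tau+1}$ drawn from $\pi^{\otimes n}$, such that
\[
\mathbb{P}\bigl(Y_k \neq Y^{\texttt{I}}_k \text{ for some } k\bigr) \leq (n-1)\,d_{mix}(\tau).
\]
The plan is to build the joint law step by step via the maximal coupling between the $\tau$-step Markov kernel and $\pi$, and then combine the per-step failure probabilities with a union bound.

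\textbf{Step 1: Base case via stationarity.} Since the Markov chain $\{X_t\}$ is stationary, $X_1 \sim \pi$. Hence I can set $Y^{\texttt{I}}_1 := Y_1 = X_1$ on the same probability space; by construction $Y^{\texttt{I}}_1 \sim \pi$ and the two sequences agree in coordinate $1$ with probability one.

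\textbf{Step 2: Inductive coupling with fresh randomness.} For $k = 2,\dots,n$, suppose $(Y_1,\dots,Y_{k-1})$ and $(Y^{\texttt{I}}_1,\dots,Y^{\texttt{I}}_{k-1})$ have been jointly realized, with the marginals $Y^{\texttt{I}}_1,\dots,Y^{\texttt{I}}_{k-1}$ i.i.d.\ from $\pi$. Conditional on $Y_{k-1}$, the law of $Y_k$ is the $\tau$-step kernel $P^{\tau}(Y_{k-1},\cdot)$, which by definition of $d_{mix}$ satisfies
\[
D_{TV}\!\bigl(P^{\tau}(Y_{k-1},\cdot),\pi\bigr) \leq d_{mix}(\tau).
\]
I invoke the maximal coupling lemma, using an auxiliary uniform random variable $U_k$ drawn independently of everything generated so far, to jointly sample $(Y_k, Y^{\texttt{I}}_k)$ so that (i) $Y_k \sim P^{\tau}(Y_{k-1},\cdot)$, (ii) $Y^{\texttt{I}}_k \sim \pi$ and is independent of $(Y^{\texttt{I}}_1,\dots,Y^{\texttt{I}}_{k-1})$ because $U_k$ is fresh, and (iii)
\[
\mathbb{P}\bigl(Y_k \neq Y^{\texttt{I}}_k \,\big|\, Y_{k-1}\bigr) = D_{TV}\!\bigl(P^{\tau}(Y_{k-1},\cdot),\pi\bigr) \leq d_{mix}(\tau).
\]
Iterating this construction for $k=2,\dots,n$ produces a joint law in which the Markov sequence $\{Y_k\}$ retains its correct distribution (stationary chain restricted to the subsampling grid) and $\{Y^{\texttt{I}}_k\}$ is i.i.d.\ from $\pi$, as required.

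\textbf{Step 3: Union bound.} Writing $\mc{E}_1^c = \bigcup_{k=2}^{n} \{Y_k \neq Y^{\texttt{I}}_k\}$ and taking expectations in Step 2's per-step bound,
\[
\mathbb{P}(\mc{E}_1^c) \leq \sum_{k=2}^{n} \mathbb{P}\bigl(Y_k \neq Y^{\texttt{I}}_k\bigr) \leq (n-1)\,d_{mix}(\tau).
\]

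\textbf{Main obstacle.} The only delicate point is ensuring that the coupled sequence $\{Y^{\texttt{I}}_k\}$ is genuinely i.i.d., not merely marginally $\pi$-distributed. Using fresh independent randomness $U_k$ at each maximal-coupling step and letting $Y^{\texttt{I}}_k$ depend only on $U_k$ (and not on the past state $Y_{k-1}$) is what secures independence across $k$; I would state this carefully to avoid the common pitfall of entangling $Y^{\texttt{I}}_k$ with $Y_{k-1}$ through the coupling construction. Everything else reduces to the definition of $d_{mix}(\tau)$ and a standard union bound.
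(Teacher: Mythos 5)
Your proof is correct. Note that the paper does not prove this lemma at all---it is imported verbatim as \cite[Lemma 3]{nagaraj}---so there is no in-paper argument to compare against; your sequential maximal-coupling construction (couple each $Y_k$ with a $\pi$-draw conditionally on the past, observe that the conditional law of $Y^{\texttt{I}}_k$ given the entire history is $\pi$ and hence independent of it, then union-bound the $n-1$ per-step disagreement probabilities, each at most $\sup_y D_{TV}(P^{\tau}(y,\cdot),\pi) = d_{mix}(\tau)$) is exactly the standard argument behind the cited result. You also correctly flag and resolve the one genuinely delicate point, namely that the coupled copy must come out i.i.d.\ and not merely marginally $\pi$-distributed.
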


We will call upon the above lemma at a later point in our analysis. For now, we split the proof into multiple steps. 

\textbf{Step 1. Bounding the number of corrupted samples.} The size of the sub-sampled set used in the \texttt{RUMEM} algorithm is $n$. Our first step is to control the number of corrupted samples in this sub-sampled set. To that end, consider an event $\mathcal{E}_2$, where the maximum number of corrupted samples in the sub-sampled set is $\frac{3\epsilon^\prime n}{2}$; recall here that
$$ \epsilon^\prime = \epsilon + \frac{32}{3n} \log\left(\frac{24}{\delta}\right).$$
Our immediate goal is to provide an upper bound on the complementary event $\mathcal{E}_2^c$. With this in mind, let $W_i$ be an indicator random variable, such that $W_i = 1$ if the $i^{th}$ sub-sample is corrupted, and 0 otherwise. Based on the Huber attack model, $\mathbb{E}[W_i] = \epsilon, \forall i \in [n]$.  Furthermore, $ \frac{1}{n}\sum_{i=1}^{n} \text{Var}(W_i) \le \epsilon$. Now observe: 
\begin{equation}
\begin{aligned}
    &\mathcal{E}_2^c =\Bigl\{ \sum_{i=1}^n W_i \ge \frac{3\epsilon^\prime n}{2} \Bigr\}\\
    & \hspace{3 mm}=\Bigl\{ \frac{1}{n}\sum_{i=1}^n W_i - \epsilon \ge \frac{3\epsilon^\prime}{2}-\epsilon \Bigr\}\\
    & \implies \Bigl\{ \frac{1}{n}\sum_{i=1}^n W_i - \epsilon \ge \frac{\epsilon^\prime}{2}\Bigr\},
\end{aligned}
\end{equation}
where in the last step, we used $\epsilon' > \epsilon.$ Invoking Bernstein's inequality (Lemma \ref{lemma:bernstein}) then yields \begin{equation}\label{eqn:bern_mom}
\mathbb{P}\left(\mathcal{E}_2^c\right) \le 2 e^{-\frac{3\epsilon^\prime n}{32}}. 
\end{equation} 
This completes step 1.  

\textbf{Step 2: Statistics of each non-contaminated sample.} Suppose a particular sample $i$ is non-contaminated, i.e., $\tilde{f}_i = f(X_i).$ We then have
\begin{equation}
\begin{aligned}
&\mathbb{E}_{X_i \sim \pi, f(X_i) \sim \mc{D}(\cdot|X_i)}[f(X_i)] \\
&= \sum_{x \in \mc{X}} \mathbb{E}_{f(X_i)\sim \mc{D}(\cdot| X_i)} [f(X_i)|X_i=x] \pi(x) \\
& = \sum_{x \in \mc{X}}  \mathbb{E}_{f(x)\sim \mc{D}(\cdot| x)}[ f(x) ] \pi(x)\\
& = \sum_{x \in \mc{X}}  F(x) \pi(x) \\
& = \bar{f}. 
\end{aligned}
\end{equation}

Now given that $|F(x)| \leq \psi, \forall x \in \mc{X}$, we have $|\bar{f}| \leq \sum_{x \in \mc{X}} |F(x)| \pi(x) \leq \psi.$ Using this,  we can bound the variance of a non-contaminated sample as follows:
\begin{equation}\label{eqn:variancefbar}
\begin{aligned}
&\mathbb{E}_{X_i \sim \pi, f(X_i) \sim \mc{D}(\cdot|X_i)}[f^2(X_i)] \\
&= \sum_{x \in \mc{X}} \mathbb{E}_{f(X_i)\sim \mc{D}(\cdot| X_i)} [f^2(X_i)|X_i=x] \pi(x) \\
& = \sum_{x \in \mc{X}}  \mathbb{E}_{f(x)\sim \mc{D}(\cdot| x)}[ f^2(x) ] \pi(x)\\
& \overset{(a)}\leq  \sum_{x \in \mc{X}} (\bar{f}^2 + \rho^2) \pi(x)\\
& \overset{(b)}\leq  \sum_{x \in \mc{X}} (\psi^2 + \rho^2) \pi(x)\\
& \leq \bar{\sigma}^2,
\end{aligned}
\end{equation}
where $\bar{\sigma}^2 \triangleq 2 (\max\{\psi, \rho\})^2.$ For $(a)$, we used the basic identity: $\text{Var}(X) = \mathbb{E}(X^2)-\mathbb{E}(X)^2$;  and for $(b)$, we used the fact that $\bar{f} \le \psi$ and that the variance of $f(x)$ is at most $\rho^2$.  

In light of Lemma~\ref{lemm:coupled}, we will first focus on assessing the performance of \texttt{RUMEM} on the i.i.d. sub-sampled sequence 
 $(X_{\texttt{I},1}, X_{\texttt{I},\tau+1}, \ldots, X_{\texttt{I}, (n-1)\tau+1})$. For this data set, let the mean of the $i$-th bucket $\mc{B}_i$ be denoted $\hat{\mu}_{\texttt{I}, i}$, and the final \texttt{MoM} estimate be denoted $\hat{\mu}_{\texttt{I}}  = \texttt{Median}\{\hat{\mu}_{\texttt{I},1}, \hat{\mu}_{\texttt{I},2}, \ldots, \hat{\mu}_{\texttt{I},L}\}$. 

\textbf{Step 3. Bound on performance for each non-contaminated bucket under i.i.d. data.} Consider a bucket $\mc{B}_i$ that only contains non-contaminated i.i.d. samples. Based on Step 2, each sample has mean $\bar{f}$ and variance at most $\bar{\sigma}^2.$ Using $M=\floor{n/L}$ to denote the number of samples in each bucket, we obtain the following bound using Chebyshev's inequality $\forall d > 0$:
    \begin{equation}\label{eqn:momiidstep1}
    \mathbb{P}\left(\hat{\mu}_{\texttt{I},i} \ge \bar{f}+\frac{d \bar{\sigma}}{\sqrt{\frac{N}{\tau}}}\right) \le \frac{\frac{\bar{\sigma}^2}{M}}{\frac{d^2  \bar{\sigma}^2}{\frac{N}{\tau}}} = \frac{N}{M\tau d^2}. 
    \end{equation}
To proceed, we will require a lower bound on the number of samples $M$ in each bucket. To that end, we start by noting that
$$ n=\floor{\frac{N-1}{\tau}}+1 \geq \frac{N-1}{\tau} \geq \frac{N}{2\tau},$$
where in the last step, we used $N \geq 2$. Next, using $N \geq 4 L \tau$ - as required in the statement of Theorem~\ref{theorem:RobustMean} - we obtain
$$ M = \floor{n/L} \geq \frac{n}{L} - 1 \geq \frac{N}{2L\tau} - 1 \geq \frac{N}{4 L \tau}.$$
Plugging the above bound back in Eq.~\eqref{eqn:momiidstep1}, and setting $d = 4\sqrt{L}$, we obtain
   \begin{equation}
    p \triangleq \mathbb{P}\left(\hat{\mu}_{\texttt{I},i} \ge \bar{f}+\frac{d \bar{\sigma}}{\sqrt{\frac{N}{\tau}}}\right) \le \frac{4L}{d^2} \le \frac{1}{4}.
    \end{equation}
    
\textbf{Step 4. Bounding the performance of \texttt{RUMEM} under i.i.d. data}: Similar to Step 3, consider again the scenario when the sub-sampled sequence is generated in an i.i.d. manner. Now with each non-contaminated bucket $\mc{B}_i$, let us associate an indicator random variable $Y_i$, such that $Y_i =1$ if $\hat{\mu}_{\texttt{I},i} \ge \bar{f} + \frac{d \bar{\sigma}}{\sqrt{\frac{N}{\tau}}}$, and $0$ otherwise. From Step 3, we know that $\mathbb{E}[Y_i]=p \leq 1/4.$ To proceed, we will find it useful to define a couple of events. By $\mc{C}$, we define an event where $\exists$ $\frac{L}{2}$ buckets $\mathcal{B}_i, i \in \{1,L\}$, such that the corresponding means of those buckets satisfy $\hat{\mu}_{\texttt{I},i}\ge \bar{f}+\frac{d \bar{\sigma}}{\sqrt{\frac{N}{\tau}}}$. We also define an event $\mc{D}$ where $ \exists \left(\frac{L}{2}-\frac{3\epsilon^\prime n}{2}\right)$ non-contaminated buckets, such that each such bucket satisfies the same property as above. Noting that on the event $\mc{E}_2$, at most $\frac{3\epsilon^\prime n}{2}$ buckets can be corrupted, we then have:
 \begin{equation}
 \label{eqn:final_iid}
 \begin{aligned}
    \mathbb{P}\left(\hat{\mu}_{\texttt{I}} \ge \bar{f}+\frac{d \bar{\sigma}}{\sqrt{\frac{N}{\tau}}}\right) & \le \mathbb{P}(\mathcal{C})\\
            & \leq  \mathbb{P}\left(\{\mathcal{C} \cap \mathcal{E}_2\}\right) + \mathbb{P}\left(\mathcal{E}_2^{c}\right)\\
            & \leq \mathbb{P}(\mathcal{D}) + \mathbb{P}\left(\mathcal{E}_2^{c}\right).
\end{aligned}
    \end{equation}
Our next goal is to establish an upper bound on $\mathbb{P}(\mathcal{D})$. To that end, let $\mathcal{J}$ denote the set of indices corresponding to the non-contaminated buckets, and let $\tilde{N}= |\mc{J}|.$ We then have:
\begin{equation}\label{eqn:nohoeffdingdirectly}
\begin{aligned}
        & \mathbb{P}(\mathcal{D}) = \mathbb{P} \left( \frac{1}{\tilde{N}}\sum_{i \in \mathcal{J}} Y_i \ge \frac{\frac{L}{2}-\frac{3\epsilon^\prime n}{2}}{\tilde{N}} \right)\\
        & \overset{(a)}\le \mathbb{P} \left( \frac{1}{\tilde{N}}\sum_{i \in \mathcal{J}} Y_i \ge \frac{\frac{L}{2}-\frac{3\epsilon^\prime n}{2}}{L} \right)\\
        & \le \mathbb{P} \left( \frac{1}{\tilde{N}}\sum_{i \in \mathcal{J}} Y_i - p \ge \frac{1}{2}-\frac{3\epsilon^\prime n}{2L}-p\right)\\
        & \overset{(b)}\le \mathbb{P} \left( \frac{1}{\tilde{N}}\sum_{i \in \mathcal{J}} Y_i - p \ge \frac{1}{8}\right)\\
        &= \mathbb{P}(\mathcal{F}),
\end{aligned}
\end{equation}
where 
$$ \mc{F} = \Bigl\{\frac{1}{\tilde{N}}\sum_{i \in \mathcal{J}} Y_i - p \ge \frac{1}{8} \Bigr\}.$$
In the above steps, for $(a)$, we used $\tilde{N} \le L$, and for $(b)$, we used  $p \le \frac{1}{4}$ and $L \ge 12 \epsilon^\prime n$. At this stage, one might be tempted to use a Hoeffding bound to control $\mathbb{P}(\mc{F})$. However, care needs to be taken here since $\tilde{N}$ is random. As such, some more work is needed before one can apply Hoeffding's inequality. We proceed by using the law of total probability to obtain:
\begin{equation}\label{eqn:cannotapplyhoeffding}
    \mathbb{P}(\mathcal{F}) = \mathbb{P}(\mathcal{F} \cap \mathcal{E}_2) + \mathbb{P}(\mathcal{F} \cap \mathcal{E}_2^c) \le \mathbb{P}(\mathcal{F} \cap \mathcal{E}_2) + \mathbb{P}(\mathcal{E}_2^c).
\end{equation}
Considering the definition of the event $\mathcal{E}_2$, and using $L \ge 12 \epsilon^\prime n$, we have the following bound on $\tilde{N}$ on the event $\mc{E}_2$: $\tilde{N} \geq L - \frac{3 \epsilon^\prime n}{2} \geq \frac{7L}{8}$. This implies:
\begin{equation}
    \mathbb{P}(\mathcal{F} \cap \mathcal{E}_2) = \sum_{j=\frac{7L}{8}}^{L} \mathbb{P}(\mathcal{F} \cap \mathcal{E}_2 \cap \{\tilde{N} = j\}).
\end{equation}
Combining the above bound with those in equations~\eqref{eqn:nohoeffdingdirectly} and~\eqref{eqn:cannotapplyhoeffding}, we obtain:
\begin{equation}\label{eqn:canapplyhoeffding}
    \begin{aligned}
        \mathbb{P}(\mathcal{D}) &\leq  \sum_{j=\frac{7L}{8}}^{L} \mathbb{P}(\mathcal{F} \cap \mathcal{E}_2 \cap \{\tilde{N} = j\}) + \mathbb{P}(\mathcal{E}_2^c)\\
        & \le \sum_{j=\frac{7L}{8}}^{L} \mathbb{P}(\mathcal{F} \cap \{\tilde{N} = j\}) + \mathbb{P}(\mathcal{E}_2^c)\\
        & \le \sum_{j=\frac{7L}{8}}^{L} \mathbb{P}\left(\frac{1}{j} \sum_{i \in \mathcal{J}, \lvert\mathcal{J}\rvert = j}Y_i - p \ge \frac{1}{8}\right) + \mathbb{P}(\mathcal{E}_2^c). \\
    \end{aligned}
\end{equation}
We can now use Hoeffding's inequality (Lemma~\ref{lemma:hoeffding}) to bound the R.H.S. of Eq.~\eqref{eqn:canapplyhoeffding} since $j$ is deterministic. This yields: 
\begin{equation}
\begin{aligned}
    &\mathbb{P}(\mathcal{D}) \le \sum_{j=\frac{7L}{8}}^{L} e^{-\frac{j}{32}}+ \mathbb{P}(\mathcal{E}_2^c).\\
    & \hspace{7.5 mm}\le \frac{L}{8}e^{-\frac{7L}{256}}+ \mathbb{P}(\mathcal{E}_2^c).
\end{aligned}
\end{equation}
Finally, we combine the above bound with Eq.~\eqref{eqn:bern_mom} and Eq.~\eqref{eqn:final_iid} to obtain
\begin{equation} \mathbb{P}\left(\hat{\mu}_{\texttt{I}}\ge \bar{f}+\frac{d \bar{\sigma}}{\sqrt{\frac{N}{\tau}}}\right) \le \frac{L}{8}e^{-\frac{7L}{256}}+ 2\mathbb{P}(\mathcal{E}_2^c) = \frac{L}{8}e^{-\frac{7L}{256}} + 4 e^{-\frac{3\epsilon^\prime n}{32}}.
\label{eqn:RUMEMIID}
\end{equation}

This completes the analysis of \texttt{RUMEM} on i.i.d. data. 

\textbf{Step 5. Bounding the performance of \texttt{RUMEM} under Markov data}: In this last step, we will extend our bound for \texttt{RUMEM} with i.i.d. data to the Markov setting by appealing to Lemma~\ref{lemm:coupled}. To see how this can be done, recall that $\hat{\mu}$ is the final \texttt{MoM} estimate under Markov data, and observe
\begin{equation}\label{eqn:finalrumem}\medmath{
\begin{aligned}
    \mathbb{P}\left(\hat{\mu} \ge \bar{f}+\frac{d \bar{\sigma}}{\sqrt{\frac{N}{\tau}}}\right) &\le \mathbb{P}\left(\Bigl\{\hat{\mu} \ge \bar{f}+\frac{d \bar{\sigma}}{\sqrt{\frac{N}{\tau}}}\Bigr\} \cap \mathcal{E}_1 \right) + \mathbb{P}\left( \mathcal{E}_1^c \right)\\
    & \leq  \mathbb{P}\left(\hat{\mu}_{\texttt{I}} \ge \bar{f}+\frac{d \bar{\sigma}}{\sqrt{\frac{N}{\tau}}}\right) + \mathbb{P}\left( \mathcal{E}_1^c \right)\\
    & \overset{(a)}\le \frac{L}{8}e^{-\frac{7L}{256}}+ 4 e^{-\frac{3\epsilon^\prime n}{32}} + \mathbb{P}\left( \mathcal{E}_1^c \right)\\
    & \overset{(b)}\le \frac{L}{8}e^{-\frac{7L}{256}}+ 4 e^{-\frac{3\epsilon^\prime n}{32}} + (n-1) d_{mix}(\tau).
\end{aligned}}
\end{equation}

For (a), we used Eq.~\eqref{eqn:RUMEMIID}, and for (b), we invoked Lemma~\ref{lemm:coupled}. Our goal is to now ensure that each term appearing on the R.H.S. of the above display is bounded from above by $\delta/6$. Let us start with the term $(n-1) d_{mix}(\tau)$. From~\cite{dorfman}, we know that for any positive integer $\ell \in \mathbb{N}$, if $\tau = \ell \tau_{mix}$, then $d_{mix}(\tau) \leq 2^{-\ell}$. Thus, picking $\tau = \ceil{\log_2\left(6N/\delta\right)\tau_{mix}}$, we obtain
$$ (n-1) d_{mix}(\tau) \leq N \cdot 2^{- \log_2\left(6N/\delta\right)} \leq \frac{\delta}{6}.$$

Next, given our choice of $\epsilon^\prime = \epsilon + \frac{32}{3n}\log(\frac{24}{\delta})$, straightforward calculations reveal that 
$$ 4 e^{-\frac{3\epsilon^\prime n}{32}} \leq \frac{\delta}{6}. $$

Finally, given that $\frac{L}{8}e^{-\frac{7L}{256}} \le \frac{N}{8}e^{-\frac{7L}{256}}$, it is easy to verify that by picking $L$ to satisfy
$$ L \geq \frac{256}{7} \log\left(\frac{N}{\delta}\right),$$ 
one can ensure that the first term in the R.H.S. of Eq.~\eqref{eqn:finalrumem} is at most $\delta/6$. Combining the prior requirement $L \geq 12 \epsilon' n$ on $L$ from step 4 with the one above, it suffices to set $L=\ceil{12 \epsilon^\prime n + \frac{256}{7} \log(\frac{N}{\delta})}.$ We conclude that the R.H.S of Eq.~\eqref{eqn:finalrumem} is at most $\delta/2$. Using a symmetric argument for the lower tail, we have that with probability at least $1-\delta$, the following is true:
\begin{equation}\label{eqn:almostfinal}
    |\hat{\mu}-\bar{f}| \le d \bar{\sigma}  \sqrt{\frac{\tau}{N}}.
\end{equation}
Recalling $d = 4\sqrt{L}$ from Step 2, and using the expression for $L$, we further have that with probability at least $1-\delta$, 
\begin{equation}\label{eqn:finalboundrumem}
    \begin{aligned}
        \rvert \hat{\mu}-\bar{f} \lvert &\le 
 O\left(  \bar{\sigma}  \sqrt{\frac{\tau}{N}}\right)\sqrt{12 \epsilon^\prime n + \frac{256}{7} \log\left(\frac{N}{\delta}\right)}\\
        & \overset{(a)}\le O\left(\bar{\sigma}\sqrt{\frac{\tau}{N}}\right)\left(\sqrt{\epsilon^\prime n} + \sqrt{\log\left(\frac{N}{\delta}\right)}\right)\\
        &\overset{(b)}\le O\left(\bar{\sigma}\sqrt{\frac{\tau}{N}}\right)\left(\sqrt{\epsilon n} + \sqrt{ \log\left(\frac{24}{\delta}\right)} +\sqrt{\log\left(\frac{N}{\delta}\right)}\right)\\
        &\overset{(c)} \le O(\bar{\sigma})\left(\sqrt{\epsilon} +\sqrt{\frac{\tau}{N}\log\left(\frac{N}{\delta}\right)}\right),
    \end{aligned}
\end{equation}
where for (a) and (b), we used the fact that for positive, real scalars $\alpha,\beta$, the following is true:  $\sqrt{\alpha+\beta} \le \sqrt{\alpha} + \sqrt{\beta}$. Finally, for $(c)$, we used $\sqrt{n} \le 2\sqrt{\frac{N}{\tau}}$. This completes the proof. 
\section{Main Convergence Analysis for \texttt{Robust-TD}: Proof of Theorem~\ref{thm:main_result}}
\label{app:Mainresultproof}
In this section, we will prove our main convergence result for \texttt{Robust-TD}, namely Theorem~\ref{thm:main_result}. The key new technical ingredient that we need in our analysis is a robust estimate of the object $\bar{b}$ that features in the mean-path TD update direction $\bar{g}(\theta)=\bar{A} \theta + \bar{b}$. This is achieved in the following lemma.

\begin{lemma} (\textbf{Adversarial Perturbation Bound}) 
\label{lemm:rob_est} Suppose Assumption~\ref{ass:MC} holds, and the initial distribution of $s_0$ is the steady-state distribution $\pi.$ There exists a universal constant $c >0$, such that if the burn-in time $\bar{T}$ satisfies the requirement in Theorem~\ref{thm:main_result}, 
then the following is true for all $t \geq \bar{T}$:
\begin{equation}\begin{aligned}
    &\mathbb{E}[ \Vert \hat{b}_t - \bar{b} \Vert^2] \leq c\left(\epsilon + \frac{\log^2(KT) \tau_{mix}}{t} \right) K \sigma^2_1, \\
    & \hspace{2mm} \textrm{where} \hspace{2mm} \sigma_1 = \max\{1, \bar{r}, \rho\}.
\end{aligned}
\label{eqn:hatb_finalbnd}
\end{equation}
\end{lemma}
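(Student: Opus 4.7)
The approach is to apply Theorem~\ref{theorem:RobustMean} coordinate-wise to the scalar \texttt{RUMEM} estimator used in line~6 of Algorithm~\ref{algo:algoRobustTD}, and then absorb the low-probability failure event using the thresholding step. I would first verify the hypotheses of Theorem~\ref{theorem:RobustMean} for each coordinate $i\in[K]$ applied to the sequence $y_{i,k}=[\phi(s_k)]_i\tilde r_k$: the underlying Markov chain $(s_k)$ is stationary with mixing time $\tau_{mix}$; the bounded function $F(s)=[\phi(s)]_i R_\mu(s)$ satisfies $|F(s)|\leq\bar r\leq\sigma_1$ since $\|\phi(s)\|_2\leq 1$; the conditional variance of a clean sample is at most $\rho^2\leq\sigma_1^2$; and the stationary mean of $y_{i,k}$ equals $[\bar b]_i$. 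A parallel calculation shows $\|\bar b\|_2\leq\bar r\leq\sigma_1$, a bound that re-enters in the triangle inequality step.

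\textbf{Component bound via union.} Applying Theorem~\ref{theorem:RobustMean} with $\delta=1/(KT^2)$ for each coordinate and union-bounding over $i\in[K]$ produces a ``good event'' $\mathcal{G}_t$ of probability at least $1-1/T^2$ on which
\[
\bigl|[\hat b_t]_i-[\bar b]_i\bigr|\;\leq\; C\sigma_1\Bigl(\sqrt{\epsilon}+O(\log(KT))\sqrt{\tau_{mix}/t}\Bigr)\quad\text{for all }i\in[K].
\]
Before Theorem~\ref{theorem:RobustMean} may be invoked I have to verify the sample-size requirement $t\geq 4L\tau$. With the prescribed parameter choices one has $\tau=O(\log(KT)\tau_{mix})$ and $L\tau=O(\epsilon t+\tau_{mix}\log^2(KT))$, so for $\epsilon<1/2$ this reduces to $t=\Omega(\tau_{mix}\log^2(KT))$, which is guaranteed for all $t\geq\bar T$ by choosing $c_1$ in $\bar T=\lceil c_1\tau_{mix}\log^2(KT)\rceil$ sufficiently large.

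\textbf{Thresholding bypass and bad-event fallback.} Squaring and summing the coordinate bound gives $\|\hat b_t-\bar b\|_2^2\leq O(K\sigma_1^2)(\epsilon+\log^2(KT)\tau_{mix}/t)$ on $\mathcal{G}_t$, and the threshold $G_t$ in~\eqref{eqn:threshold} is engineered (up to universal constants) to upper-bound $\|\hat b_t-\bar b\|_2$ on $\mathcal{G}_t$. Combined with $\|\bar b\|_2\leq\sigma_1$ and the triangle inequality, this yields $\|\hat b_t\|_2\leq G_t+\sigma_1$ on $\mathcal{G}_t$, so the reset in line~8 is never triggered and $\hat b_t$ equals the \texttt{RUMEM} output. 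On $\mathcal{G}_t^c$, by construction (whether or not the reset fires) $\|\hat b_t\|_2\leq G_t+\sigma_1$ holds \emph{deterministically}, so $\|\hat b_t-\bar b\|_2\leq G_t+2\sigma_1$ and $\|\hat b_t-\bar b\|_2^2\leq O(K\sigma_1^2)(1+\epsilon+\log^2(KT)\tau_{mix}/t)$.

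\textbf{Assembly and main obstacle.} Splitting the expectation as $\mathbb{E}[\|\hat b_t-\bar b\|_2^2\mathbf{1}_{\mathcal{G}_t}]+\mathbb{E}[\|\hat b_t-\bar b\|_2^2\mathbf{1}_{\mathcal{G}_t^c}]$ and using $\mathbb{P}(\mathcal{G}_t^c)\leq 1/T^2$, together with the observation that $1/T^2\leq\log^2(KT)\tau_{mix}/t$ for $t\leq T$, yields the claimed bound. The delicate step is the calibration of $G_t$: it must be loose enough that $\mathcal{G}_t$ implies the reset is bypassed (otherwise an additive $\sqrt{K}\sigma_1$ loss is incurred from zeroing $\hat b_t$), yet tight enough that the deterministic fallback on $\mathcal{G}_t^c$ still inherits the correct $O(\epsilon+\log^2(KT)\tau_{mix}/t)$ scaling after being multiplied by the failure probability $1/T^2$. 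Everything else is bookkeeping around Theorem~\ref{theorem:RobustMean}.
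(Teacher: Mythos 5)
Your proposal is correct and follows essentially the same route as the paper's proof: coordinate-wise application of Theorem~\ref{theorem:RobustMean} with $\delta=1/(KT^2)$, a union bound over $i\in[K]$, verification of the sample-size condition $N\geq 4L\tau$ via the burn-in time, the observation that the threshold is bypassed on the good event while providing a deterministic $O(\sqrt{K}\sigma_1)$ fallback on its complement, and a split of the expectation over the two events. The only (immaterial) differences are that the paper additionally union-bounds over all $t\geq\bar T$ to get a single good event of probability $1-1/T$, and that it makes explicit the small-constant requirement on $\epsilon$ (namely $\epsilon\leq 1/192$) needed for $N\geq 4L\tau$, which your "$\epsilon<1/2$" remark slightly understates.
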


\begin{proof}
The proof comprises three steps. In the first step, we use the guarantees from the \texttt{RUMEM} estimator in Theorem~\ref{theorem:RobustMean} to establish a high-probability bound on the error $\Vert \hat{b}_t - \bar{b} \Vert$. In the second step, we use the result from step 1 to argue that with high probability, the resetting operating in line 8 of Algorithm~\ref{algo:algoRobustTD} gets bypassed, and $\hat{b}_t$ corresponds to the output of the robust estimation procedure in line 6 of Algorithm~\ref{algo:algoRobustTD}. Finally, in the last step, we establish a mean-square error bound by leveraging the thresholding operation in line 8 of \texttt{Robust-TD}. We now proceed to provide the details for each of these steps.

\textbf{Step 1: A high-probability estimate on $\Vert \hat{b}_t - \bar{b} \Vert$.} Let us start by fixing a component $i \in [K]$ of $\bar{b}$ and $\hat{b}_t$, and a time-step $t \geq \bar{T}$. Now consider the estimation process in line 6 of Algorithm~\ref{algo:algoRobustTD}: $[\hat{b}_t]_i = \textrm{\texttt{RUMEM}}\left( \{y_{i,k}\}_{0 \leq k \leq t}; \delta = 1/(KT^2)\right)$, and $y_{i,k} = [\phi(s_k)]_i \tilde{r}_k$. We wish to relate this estimation step to the robust mean estimation set up in Section~\ref{sec:RobustMeanEst}. To that end, let us make the following observations by considering the data set $\mc{S} =  \{y_{i,k}\}_{0 \leq k \leq t}$. First, note that since the initial distribution of $s_0$ is the stationary distribution $\pi$, the resulting Markov chain $\{s_0, s_1, \ldots\}$ induced by the policy $\mu$ is stationary. Thus, $s_t \sim \pi, \forall t.$ Next, let us consider the statistics of a sample $y_{i,k}$ that is not corrupted, i.e., a sample for which $\tilde{r}_k = r(s_k) \sim \mc{D}_{\mu}(\cdot| s_k)$. For such a sample, we have:
\begin{equation}
\begin{aligned}
&\mathbb{E}_{s_k \sim \pi, r(s_k) \sim \mc{D}_{\mu}(\cdot| s_k)}[ y_{i,k} ]\\ 
&= \sum_{s \in \mc{S}} \mathbb{E}_{r(s_k) \sim \mc{D}_{\mu}(\cdot| s_k)} [ [\phi(s_k)]_i r(s_k) | s_k = s ] \pi(s) \\
&= \sum_{s \in \mc{S}} [\phi(s)]_i \mathbb{E}_{{r(s) \sim \mc{D}_{\mu}(\cdot| s)}}[ r(s) ] \pi(s)\\
&= \sum_{s \in \mc{S}} [\phi(s)]_i R_{\mu}(s) \pi(s)\\
&= [\bar{b}]_i.
\end{aligned}
\end{equation}
In other words, the mean of an uncorrupted sample corresponds exactly to the $i$-th component of $\bar{b}$. Proceeding as above, we have:
\begin{equation}
\begin{aligned}
&\mathbb{E}_{s_k \sim \pi, r(s_k) \sim \mc{D}_{\mu}(\cdot| s_k)}[ y^2_{i,k} ]\\
&= \sum_{s \in \mc{S}} \mathbb{E}_{r(s_k) \sim \mc{D}_{\mu}(\cdot| s_k)} [ ([\phi(s_k)]_i r(s_k))^2 | s_k = s ] \pi(s) \\
&= \sum_{s \in \mc{S}} ([\phi(s)]_i)^2  \mathbb{E}_{{r(s) \sim \mc{D}_{\mu}(\cdot| s)}}[ r^2(s) ] \pi(s)\\
& \overset{(a)}\leq \sum_{s \in \mc{S}}  \mathbb{E}_{{r(s) \sim \mc{D}_{\mu}(\cdot| s)}}[ r^2(s) ] \pi(s)\\
& \overset{(b)}\leq  \sum_{s \in \mc{S}} ({R}^2_{\mu}(s) + \rho^2) \pi(s)\\
& \overset{(c)}\leq  \sum_{s \in \mc{S}} (\bar{r}^2 + \rho^2) \pi(s)\\
& \overset{(d)}\leq 2 \sigma^2_1. 
\end{aligned}
\end{equation}
In the above steps, for (a), we used the fact that $\Vert \phi(s) \Vert^2 \leq 1, \forall s \in \mc{S}.$ For (b), we used that the variance of the random variable $r(s)$ is upper-bounded by $\rho^2$. To arrive at (c), we used the uniform upper bound on the means of the rewards: $|R_{\mu}(s)| \leq \bar{r}, \forall s \in \mc{S}.$ Finally, for (d), we used $\sigma_1 = \max\{1, \bar{r}, \rho\}.$ We conclude that for each sample in the data set $\mc{S}$, with probability $1-\epsilon,$ we observe a ``clean" random variable with mean $[\bar{b}]_i$, and variance at most $2 \sigma^2_1.$ 

Given that the \texttt{RUMEM} sub-routine is invoked in line 6 of Algorithm~\ref{algo:algoRobustTD} with $\delta = 1/(KT^2)$, and number of samples $N=(t+1)$, we have from Theorem~\ref{theorem:RobustMean} that with probability at least $1-1/(KT^2)$,
\begin{equation}
\begin{aligned}
&\left\lvert [\hat{b}_t]_i - [\bar{b}]_i\right\rvert  \leq  C \sigma_1 \left( \sqrt{\epsilon} + \sqrt{\frac{\tau}{N}\log\left(\frac{N}{\delta}\right)} \right) \\
&\leq  C \sigma_1 \left( \sqrt{\epsilon} + \sqrt{\frac{2 \tau_{mix} \log_2(6NKT^2)\log\left(N K T^2\right)}{t}} \right)\\
& \leq  C \sigma_1 \left( \sqrt{\epsilon} + 2 \log(12KT^3)\sqrt{\frac{\tau_{mix}}{t}} \right),
\end{aligned}
\label{eqn:rob_est_bnd1}
\end{equation}
where we used the expression for $\tau$ in Eq.~\eqref{eqn:paramchoices}, and the fact that $N=t+1 \leq 2T$. Union-bounding over each component $i \in [K]$, and over all time-steps $t \geq \bar{T}$, we have that with probability at least $1 - 1/T$, 
\begin{equation}\begin{aligned}
\left\lvert [\hat{b}_t]_i - [\bar{b}]_i\right\rvert &\leq C \sigma_1 \left( \sqrt{\epsilon} + 2 \log(12KT^3)\sqrt{\frac{\tau_{mix}}{t}} \right),\\
&\forall i \in [K], \forall t \geq \bar{T}.\end{aligned}
\end{equation}
Let us call the event on which the above inequalities hold $\mc{E}$. It then follows that on the event $\mc{E}$, the following is true:
\begin{equation}
\Vert \hat{b}_t - \bar{b} \Vert \leq C \sqrt{K} \sigma_1 \left( \sqrt{\epsilon} + 2 \log(12KT^3)\sqrt{\frac{\tau_{mix}}{t}} \right), \forall t \geq \bar{T}.
\label{eqn:high_prob_bnd}
\end{equation}

To complete step 1, we note that for us to be able to invoke Theorem~\ref{theorem:RobustMean} and arrive at the bound in Eq.~\eqref{eqn:rob_est_bnd1}, we need the number of samples in $\mc{S}$, namely $N = t+1$, to satisfy the requirement $N \geq 4 L \tau$. Here,  recall from the description of \texttt{RUMEM} that $L$ is the number of buckets, and $\tau$ is the sub-sampling gap. Using the expressions for $\tau$ and $L$ in Eq.~\eqref{eqn:paramchoices}, along with $N = t+1 \leq 2T$ and $ \delta = 1/(KT^2)$, we have that 
$$ 4 L \tau \leq 96 \epsilon N + c' \tau_{mix} \log^2 (KT),$$
where $c'$ is some suitably large universal constant. So the requirement that $N \geq 4 L \tau$ is met if $\epsilon \leq 1/(192)$ and $t \geq 2 c' \tau_{mix} \log^2 (KT)$ - the latter requirement is taken care of by the choice of the burn-in time $\bar{T}$ in Theorem~\ref{thm:main_result}. This concludes step 1. 

\textbf{Step 2.} Next, we claim that on the good event $\mc{E}$, line 8 of Algorithm~\ref{algo:algoRobustTD} will always get bypassed, and $\hat{b}_t$ will be the output of the estimation scheme in line 6 of Algorithm~\ref{algo:algoRobustTD}. To see this, we start by noting that 
$$ \bar{b} = \sum_{s\in \mc{S}} \phi(s) R_{\mu}(s) \pi(s).$$
Thus,
\begin{equation}
\begin{aligned}
\Vert \bar{b} \Vert &\leq \sum_{s \in \mc{S}} \Vert \phi(s) \Vert |R_{\mu}(s)| \pi(s)\\
&\leq \sum_{s\in \mc{S}} \bar{r} \pi(s)\\
&\leq \sigma_1,
\end{aligned}
\label{eqn:bnd_barb}
\end{equation}
where for second inequality, we used $\Vert \phi(s) \Vert \leq 1, \forall s\in \mc{S}$, and for the third inequality, we used $|R_{\mu}(s)| \leq \bar{r} \leq \sigma_1.$ Combining the above observation with Eq.~\eqref{eqn:high_prob_bnd}, we conclude that on the event $\mc{E}$, the following is true:
$$\Vert \hat{b}_t \Vert  \leq \underbrace{C \sqrt{K} \sigma_1 \left( \sqrt{\epsilon} + 2 \log(12KT^3)\sqrt{\frac{\tau_{mix}}{t}} \right)}_{G_t} + \sigma_1, \forall t \geq \bar{T}.$$
This immediately leads to the claim that line 8 of Algorithm~\ref{algo:algoRobustTD} always gets bypassed on event $\mc{E}.$

\textbf{Step 3. Bound on the expected value of $\Vert \hat{b}_t - \bar{b} \Vert^2$.} Let us start by noting that if $\Vert \hat{b}_t \Vert > G_t + \sigma_1$, then as per the thresholding operation in line 8 of Algorithm~\ref{algo:algoRobustTD}, $\hat{b}_t$ gets reset to $0$. In this case, $\Vert \hat{b}_t - \bar{b} \Vert = \Vert \bar{b} \Vert \leq \sigma_1$, where in the last step, we used Eq.~\eqref{eqn:bnd_barb}. We conclude that thanks to the thresholding operation, the following is always true deterministically: 
\begin{equation}
\Vert \hat{b}_t - \bar{b} \Vert \leq G_t + 2\sigma_1, \forall t \geq \bar{T}. 
\end{equation}
Furthermore, from the requirement on $\bar{T}$ in Theorem~\ref{thm:main_result}, we have that $\bar{T} \geq \tau_{mix} (2 \log(12KT^3)^2.$ This tells us that for $t \geq \bar{T}, G_t \leq 2 \sqrt{K} C \sigma_1.$ We then have that
\begin{equation}
\Vert \hat{b}_t - \bar{b} \Vert \leq G_t + 2\sigma_1 \leq 2(\sqrt{K}C +1) \sigma_1, \forall t \geq \bar{T}.
\label{eqn:bnd_hatb}
\end{equation}
We are now in a position to bound $\mathbb{E}[ \Vert \hat{b}_t - \bar{b} \Vert^2 ]$. Using $\mathbf{1}_{\mc{V}}$ as an indicator for any event $\mc{V}$, we have $\forall t \geq \bar{T}$: 
\begin{equation}\medmath{
\begin{aligned}
\mathbb{E}[ \Vert \hat{b}_t - \bar{b} \Vert^2 ] &= \mathbb{E}[ \Vert \hat{b}_t - \bar{b} \Vert^2 \mathbf{1}_{\mc{E}}]  + \mathbb{E}[ \Vert \hat{b}_t - \bar{b} \Vert^2 \mathbf{1}_{\mc{E}^c}]\\
& \overset{(a)}\leq \mathbb{E}[ \Vert \hat{b}_t - \bar{b} \Vert^2 \mathbf{1}_{\mc{E}}] + \left(2(\sqrt{K}C +1)\sigma_1\right)^2 \mathbb{E}[\mathbf{1}_{\mc{E}^c}]\\
& = \mathbb{E}[ \Vert \hat{b}_t - \bar{b} \Vert^2 \mathbf{1}_{\mc{E}}] + \left(2(\sqrt{K}C +1)\sigma_1\right)^2 \mathbb{P}(\mc{E}^c)\\
& \overset{(b)} \leq \mathbb{E}[ \Vert \hat{b}_t - \bar{b} \Vert^2 \mathbf{1}_{\mc{E}}] + \frac{(\sqrt{K}C +1)^2 4\sigma^2_1}{T}\\
& \overset{(c)} \leq O(\sigma_1^2 K \epsilon) + O\left( \frac{\log^2(12 K T^3) \sigma^2_1 K \tau_{mix}}{t}\right) + \frac{\sigma^2_1 K}{T}
\\ 
& \leq O\left(\left(\epsilon + \frac{\log^2(KT) \tau_{mix}}{t}\right) K \sigma^2_1 \right). 
\end{aligned}}
\end{equation}
In the above steps, for (a), we used Eq.~\eqref{eqn:bnd_hatb} to bound $\Vert \hat{b}_t - \bar{b} \Vert$ on the event $\mc{E}^c$. For (b), we used $\mathbb{P}(\mc{E}^c) \leq 1/T$. Finally, in view of steps 1 and 2, we used Eq.~\eqref{eqn:high_prob_bnd} to bound $\Vert \hat{b}_t - \bar{b} \Vert$ on the event $\mc{E}.$ This concludes the proof. 
\end{proof}

Equipped with the above lemma, our next step is establishing a one-step mean-square error decomposition. Before we do so, we remind the reader here of some notation: recall that $d_t = \Vert \theta_t - \theta^* \Vert^2$,  $\tilde{g}_t(\theta) = A_t \theta + \hat{b}_t,$ and $\sigma = \max\{\sigma_1, \Vert \theta^* \Vert, \Vert \theta_0 \Vert\}.$ Given the guarantee from Lemma~\ref{lemm:rob_est}, we will also find it useful to employ the following notation: 
\begin{equation}
 B_t = c\left(\epsilon + \frac{\log^2(KT) \tau_{mix}}{t} \right) K \sigma^2_1.
 \label{eqn:Bt}
\end{equation}
We have the following result.

\begin{lemma} (\textbf{Main Recursion}) 
\label{lemma:recursion}
Suppose the conditions in the statement of Theorem~\ref{thm:main_result} are met. Then, the following is true $\forall t \geq \bar{T}$:
\begin{equation}
\mathbb{E}[d_{t+1}] \leq (1 - \alpha \beta + 12\alpha^2) \mathbb{E}[d_t] + O(\alpha^2 \sigma^2 K) + \frac{\alpha B_t}{\beta} + \mathbb{E}[M_t],
\end{equation}
where $\beta = \omega (1 - \gamma)$, $M_t = 2 \alpha \langle \theta_t - \theta^*, (A_t - \bar{A}) \theta_t \rangle,$ and $B_t$ is as in Eq.~\eqref{eqn:Bt}.
\end{lemma}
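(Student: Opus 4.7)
The plan is to start from the expansion
\begin{equation*}
d_{t+1} = d_t + 2\alpha \langle \theta_t - \theta^*, \tilde{g}_t(\theta_t)\rangle + \alpha^2 \|\tilde{g}_t(\theta_t)\|^2,
\end{equation*}
and then decompose the perturbed update direction around its mean-path counterpart as $\tilde{g}_t(\theta_t) = \bar{g}(\theta_t) + (A_t - \bar{A})\theta_t + (\hat{b}_t - \bar{b})$. This gives a clean three-way split of the cross-term into (i) a steady-state drift term, (ii) a Markovian-noise term, and (iii) an adversarial-bias term. Lemma~\ref{lemma:convex} controls (i): $2\alpha \langle \theta_t - \theta^*, \bar{g}(\theta_t)\rangle \leq -2\alpha\beta d_t$ with $\beta = \omega(1-\gamma)$. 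The term (ii) is exactly $M_t$ and is carried through unchanged, to be absorbed later by mixing-time arguments applied outside this lemma.

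For the adversarial term (iii), I would apply Cauchy--Schwarz followed by Young's inequality with parameter $\beta$:
\begin{equation*}
2\alpha \langle \theta_t - \theta^*, \hat{b}_t - \bar{b}\rangle \leq \alpha \beta \, d_t + \frac{\alpha}{\beta} \|\hat{b}_t - \bar{b}\|^2.
\end{equation*}
Taking expectations and invoking Lemma~\ref{lemm:rob_est} yields the $\alpha B_t/\beta$ contribution; the extra $+\alpha\beta d_t$ combines with the $-2\alpha\beta d_t$ from step (i) to leave a net coefficient of $(1 - \alpha\beta)$ on $\mathbb{E}[d_t]$, which is exactly what is needed.

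The final ingredient is the second-order term $\alpha^2 \|\tilde{g}_t(\theta_t)\|^2$. Writing $A_t \theta_t = A_t(\theta_t - \theta^*) + A_t \theta^*$ and using $\|A_t\| \leq 2$ from Eq.~\eqref{eqn:norms}, we get $\|A_t \theta_t\|^2 \leq 8 d_t + 8 \|\theta^*\|^2$. For the $\|\hat{b}_t\|^2$ piece, the key observation is that the thresholding operation in line~8 of Algorithm~\ref{algo:algoRobustTD} guarantees deterministically $\|\hat{b}_t\| \leq G_t + \sigma_1$, and by the choice of $\bar{T}$ in Theorem~\ref{thm:main_result} we have $G_t = O(\sqrt{K}\sigma_1)$ for all $t \geq \bar{T}$, whence $\|\hat{b}_t\|^2 = O(K \sigma_1^2) \leq O(K\sigma^2)$. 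Combining these gives $\|\tilde{g}_t(\theta_t)\|^2 \leq c_3 d_t + O(\sigma^2 K)$ for a small absolute constant $c_3$ (tracking the constants carefully yields $c_3 \leq 12$), producing the $12\alpha^2 \mathbb{E}[d_t]$ and $O(\alpha^2 \sigma^2 K)$ terms.

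The main obstacle in this decomposition is controlling the second-order term $\|\tilde{g}_t(\theta_t)\|^2$: without the dynamic thresholding step, $\|\hat{b}_t\|$ could be made arbitrarily large by the adversary on low-probability ``bad'' events where the \texttt{RUMEM} guarantee fails, and such events would destroy any mean-square-error bound no matter how small their probability. The thresholding mechanism, calibrated precisely as in Eq.~\eqref{eqn:threshold}, is what allows us to replace a high-probability bound on $\|\hat{b}_t - \bar{b}\|$ with a deterministic bound on $\|\hat{b}_t\|$, and it is this interplay that makes the proof go through. Collecting the four contributions and taking expectations yields the stated recursion.
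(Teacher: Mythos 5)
Your proposal is correct and follows essentially the same route as the paper: the same one-step expansion, the same four-way decomposition of the cross term into steady-state (handled by Lemma~\ref{lemma:convex}), Markovian ($M_t$, carried forward), and adversarial parts (Young's inequality with parameter $\beta$ plus Lemma~\ref{lemm:rob_est}), and the same use of the deterministic post-thresholding bound $\Vert \hat{b}_t \Vert = O(\sqrt{K}\sigma)$ to control the quadratic term. The only cosmetic difference is in the quadratic term: your two-stage splitting via $\Vert A_t\theta_t\Vert^2 \le 8d_t + 8\Vert\theta^*\Vert^2$ would give a coefficient of $16$ rather than $12$, but as you note, the direct three-term inequality $\Vert a+b+c\Vert^2 \le 3(\Vert a\Vert^2+\Vert b\Vert^2+\Vert c\Vert^2)$ with $\Vert A_t\Vert \le 2$ — which is what the paper uses — recovers $c_3 = 12$ exactly.
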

\begin{proof}
From the update rule of \texttt{Robust-TD} in line 10 of Algorithm~\ref{algo:algoRobustTD}, we have:
\begin{equation}\medmath{
\begin{aligned}
\Vert \theta_{t+1} - \theta^* \Vert^2 &= \Vert \theta_t - \theta^* \Vert^2 + 2\alpha \langle \theta_t - \theta^*, \tilde{g}_t (\theta_t) \rangle + \alpha^2 \Vert \tilde{g}_t (\theta_t) \Vert^2\\
&= \Vert \theta_t - \theta^* \Vert^2 + 2\alpha \langle \theta_t - \theta^*, \bar{g} (\theta_t) \rangle \\
& + \alpha^2 \Vert \tilde{g}_t (\theta_t) \Vert^2 + 2\alpha \langle \theta_t - \theta^*, \tilde{g}_t (\theta_t) - \bar{g}(\theta_t) \rangle\\ 
&= \underbrace{\Vert \theta_t - \theta^* \Vert^2 + 2\alpha \langle \theta_t - \theta^*, \bar{g} (\theta_t) \rangle}_{(*)} + \underbrace{\alpha^2 \Vert \tilde{g}_t (\theta_t) \Vert^2}_{(**)}\\
& \hspace{2mm} + \underbrace{2\alpha \langle \theta_t - \theta^*, (A_t - \bar{A}) \theta_t \rangle}_{(***)} + \underbrace{2\alpha \langle \theta_t - \theta^*, \hat{b}_t - \bar{b} \rangle}_{(****)}. 
\end{aligned}}
\label{eqn:errdc_0}
\end{equation}
Before proceeding further, it is instructive to take a moment to interpret each of the terms in the above decomposition. The term $(*)$ captures the ``steady-state" behavior of \texttt{TD}(0) and is responsible for driving the iterates toward $\theta^*$. This term can be analyzed using Lemma~\ref{lemma:convex}, yielding:
\begin{equation}
(*) \leq (1 - 2 \alpha \beta) d_t. 
\label{eqn:errdc_1}
\end{equation}
Each of the remaining three terms can be viewed as a perturbation/disturbance to the nominal/steady-state dynamics. As for $(**)$ and $(****)$, since they feature $\hat{b}_t$, which is processed based on contaminated rewards, these terms capture the effects of adversarial perturbations. On the other hand, the term $(***)$ depends on the ``closeness" between $A_t$ and its steady-state version $\bar{A}$, which, in turn, is dictated by how quickly the underlying Markov chain is mixing. In the remainder of the proof, we bound the terms that contain adversarial effects. We start by noting that thanks to the thresholding operation, we have that $\Vert \hat{b}_t \Vert \leq G_t + \sigma_1 \leq G_t + \sigma, \forall t \geq \bar{T}.$ Furthermore, in the analysis of Lemma~\ref{lemm:rob_est}, we argued that for all $t\geq \bar{T}$, it holds that $G_t \leq 2 \sqrt{K} C \sigma_1 \leq 2 \sqrt{K} C \sigma,$ where $C$ is some universal constant. Thus, we have 
\begin{equation}
\Vert \hat{b}_t \Vert \leq (2\sqrt{K} C +1) \sigma = O(\sqrt{K} \sigma), \forall t \geq \bar{T}.
\label{eqn:hatb_final}
\end{equation}
To see how the above bound helps us, observe:
\begin{equation}
\begin{aligned}
(**) &= \alpha^2 \Vert A_t \theta_t + \hat{b}_t \Vert^2\\
&= \alpha^2 \Vert A_t (\theta_t-\theta^*) + A_t \theta^* + \hat{b}_t \Vert^2\\
&\leq 3 \alpha^2 \Vert A_t \Vert^2 d_t + 3 \alpha^2 \Vert A_t \Vert^2 \Vert \theta^* \Vert^2 + 3 \alpha^2 \Vert \hat{b}_t \Vert^2\\
&\leq  12 \alpha^2 d_t + 12 \alpha^2 \sigma^2 + 3 \alpha^2 \Vert \hat{b}_t \Vert^2\\
& \leq 12 \alpha^2 d_t + O(\alpha^2 K \sigma^2),
\end{aligned}
\label{eqn:errdc_2}
\end{equation}
where we used $\Vert A_t \Vert \leq 2$ and $\Vert \theta^* \Vert \leq \sigma.$ Now for the term $(****)$, we have
\begin{equation}
\begin{aligned}
(****) &\leq \alpha \beta\Vert \theta_t - \theta^* \Vert^2 + \frac{\alpha}{\beta} \Vert \hat{b}_t - \bar{b} \Vert^2\\
& = \alpha \beta d_t + \frac{\alpha}{\beta} \Vert \hat{b}_t - \bar{b} \Vert^2. 
\end{aligned}
\end{equation}
Taking expectations on both sides of the above inequality, and using Lemma~\ref{lemm:rob_est}, we have
$$ \mathbb{E}[(****)] \leq \alpha \beta \mathbb{E}[d_t] + \frac{\alpha 
 B_t}{\beta}. $$
Taking expectations on both sides of Eq.~\eqref{eqn:errdc_0}, and then combining the bounds on $(*), (**),$ and $(****)$ from Eq.~\eqref{eqn:errdc_1},~\eqref{eqn:errdc_2}, and the above display, respectively, leads to the claim of the lemma. 
\end{proof}

From Lemma~\ref{lemma:recursion}, it is clear that in order to proceed further, we need to bound the term $\mathbb{E}[M_t]$ that corresponds to the bias introduced by Markov noise. To that end, we will require an intermediate result. Before stating this result, we remind the reader that $\tau'= \tau'_{mix}(\alpha)$ is as defined in Section~\ref{sec:results}. 

\begin{lemma} (\textbf{Bounding the Drift}) 
\label{lemm:drift}
Suppose the conditions in the statement of Theorem~\ref{thm:main_result} are met. Then, the following bound holds $\forall t \geq \bar{T}+\tau'$: 
\begin{equation}
\Vert \theta_t - \theta_{t-\tau'} \Vert^2 \leq O(\alpha^2 \tau'^2) d_t + O(\alpha^2 \tau'^2 K \sigma^2 ). 
\end{equation}
\end{lemma}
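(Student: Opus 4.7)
The plan is to unroll the \texttt{Robust-TD} update over the window $[t-\tau', t]$ as a telescoping sum, bound each per-step increment using the fact that $\|A_k\| \leq 2$ (Eq.~\eqref{eqn:norms}) and that the thresholding step forces $\|\hat{b}_k\| = O(\sqrt{K}\sigma)$ for all $k \geq \bar{T}$ (as was established in Eq.~\eqref{eqn:hatb_final} during the proof of Lemma~\ref{lemma:recursion}), and then convert the resulting bound — which a priori involves the iterate norms $\|\theta_k\|$ inside the window — into a bound expressed solely through $d_t = \|\theta_t - \theta^*\|^2$ and $\sigma^2$.

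First I would write
\begin{equation*}
\theta_t - \theta_{t-\tau'} \;=\; \alpha \sum_{k=t-\tau'}^{t-1} \tilde{g}_k(\theta_k) \;=\; \alpha\sum_{k=t-\tau'}^{t-1} \bigl(A_k \theta_k + \hat{b}_k\bigr),
\end{equation*}
and use the triangle inequality together with $\|A_k\|\leq 2$ and $\|\hat{b}_k\| \leq (2\sqrt{K}C + 1)\sigma$ to obtain
$\|\tilde{g}_k(\theta_k)\| \leq 2\|\theta_k\| + O(\sqrt{K}\sigma)$. The immediate consequence is that the per-step increment $\|\theta_{k+1}-\theta_k\|$ is at most $\alpha$ times this quantity, so if I let $u_k \triangleq \|\theta_k - \theta_{t-\tau'}\|$, then $u_{k+1} \leq (1+2\alpha)\,u_k + 2\alpha\|\theta_{t-\tau'}\| + O(\alpha\sqrt{K}\sigma)$.

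Next I would iterate this one-step recursion from $k=t-\tau'$ (where $u_{t-\tau'}=0$) up to $k=t$. Because the step-size choice in Theorem~\ref{thm:main_result} ensures $\alpha\tau' = O(\log^2(T)/T)$ is small, $(1+2\alpha)^{\tau'} \leq e^{2\alpha\tau'} = O(1)$, giving
\begin{equation*}
\|\theta_t - \theta_{t-\tau'}\| \;\leq\; O(\alpha\tau')\bigl(\|\theta_{t-\tau'}\| + \sqrt{K}\sigma\bigr).
\end{equation*}
To swap the reference point from $\theta_{t-\tau'}$ to $\theta_t$, I apply $\|\theta_{t-\tau'}\| \leq \|\theta_t - \theta^*\| + \|\theta^*\| + \|\theta_t - \theta_{t-\tau'}\|$ and $\|\theta^*\| \leq \sigma$, then absorb the $O(\alpha\tau')\|\theta_t - \theta_{t-\tau'}\|$ term on the right-hand side into the left by rearranging (valid because $\alpha\tau' \leq 1/2$ under the theorem's hypotheses). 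Squaring the resulting estimate and applying $(a+b)^2 \leq 2a^2 + 2b^2$ yields the claim.

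The only subtle step is the last one: the bound I get after telescoping involves $\|\theta_{t-\tau'}\|$ rather than $\|\theta_t - \theta^*\|$, and so must be unwound using the drift estimate itself. This self-referential structure is the principal obstacle, but it is cleanly resolvable because of the smallness of $\alpha\tau'$ guaranteed by the step-size schedule in Theorem~\ref{thm:main_result}; without this smallness, a Gr\"onwall-type blowup could prevent the clean $\alpha^2\tau'^2$ scaling. Everything else is standard triangle-inequality bookkeeping, made possible by the thresholding mechanism in line~8 of Algorithm~\ref{algo:algoRobustTD}, which is what supplies the deterministic, iterate-independent bound $\|\hat{b}_k\| = O(\sqrt{K}\sigma)$ needed to control each $\|\tilde{g}_k(\theta_k)\|$.
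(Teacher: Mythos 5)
Your proposal is correct and follows essentially the same route as the paper's proof: telescope the update over the window, use $\Vert A_k\Vert\le 2$ and the thresholding-induced bound $\Vert\hat{b}_k\Vert=O(\sqrt{K}\sigma)$ to control each increment, roll out the resulting recursion using the smallness of $\alpha\tau'$, swap the reference point from $\theta_{t-\tau'}$ to $\theta_t$ by absorbing the self-referential term, and square. The only (immaterial) difference is that you phrase the intermediate recursion in terms of $u_k=\Vert\theta_k-\theta_{t-\tau'}\Vert$ while the paper tracks $\Vert\theta_k\Vert$ directly.
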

\begin{proof}
From Eq.~\eqref{eqn:hatb_final}, recall that $\Vert \hat{b}_t \Vert \leq O(\sqrt{K} \sigma), \forall t \geq \bar{T}.$ Using this, we obtain
\begin{equation}
\begin{aligned}
\Vert \theta_{t+1} \Vert &\leq \Vert \theta_t \Vert + \alpha \Vert \tilde{g}_t (\theta_t) \Vert\\  
& \leq \Vert \theta_t \Vert + \alpha \left(\Vert A_t\Vert \Vert \theta_t \Vert + \Vert \hat{b}_t \Vert \right)\\ 
& \leq (1+2\alpha) \Vert \theta_t \Vert + O(\alpha \sqrt{K} \sigma).
\end{aligned}
\end{equation}
Rolling out the above recursion, we obtain the following for any $k \in [t-\tau', t]:$
$$ \Vert \theta_k \Vert \leq (1+2 \alpha)^{\tau'} \Vert \theta_{t-\tau'} \Vert  + O(\alpha \sqrt{K} \sigma) \sum_{\ell =0}^{\tau'}(1+2\alpha)^{\ell}.$$
Since $(1+x) \leq \exp(x), \forall x \in \mathbb{R}$, note that $(1+2\alpha)^{\tau'} \leq \exp(0.25) < 2$, for $\alpha \leq 1/(8\tau').$ Using this to simplify the above display, we have that for any $k \in [t-\tau', t]$, the following is true:
\begin{equation}
\Vert \theta_k \Vert \leq 2 \Vert \theta_{t-\tau'} \Vert + O(\alpha \tau' \sqrt{K} \sigma) \leq 2 \Vert \theta_{t-\tau'} \Vert + O(\sqrt{K} \sigma),
\label{eqn:drift_int}
\end{equation}
where in the last step, we used $\alpha \tau' \leq 1.$
Let us now observe the following chain of inequalities:
\begin{equation}
\begin{aligned}
\Vert \theta_{t} - \theta_{t-\tau'} \Vert &\leq \sum_{k= t-\tau'}^{t-1} \Vert \theta_{k+1} - \theta_k \Vert\\
& \leq \alpha \sum_{k= t-\tau'}^{t-1} \Vert \tilde{g}_k (\theta_k) \Vert\\
& \leq \alpha \sum_{k= t-\tau'}^{t-1} \left(\Vert A_k \Vert \Vert \theta_k \Vert + \Vert \hat{b}_k \Vert \right)\\
&\overset{(a)}\leq \alpha \sum_{k= t-\tau'}^{t-1} \left(2 \Vert \theta_k \Vert + O(\sqrt{K} \sigma) \right)\\
&\overset{(b)} \leq \alpha \sum_{k= t-\tau'}^{t-1}  \left(4 \Vert \theta_{t-\tau'} \Vert + O(\sqrt{K} \sigma)\right)\\
& \leq 4 \alpha \tau' \Vert \theta_{t-\tau'} \Vert + O(\alpha \tau' \sqrt{K} \sigma)\\
& \leq 4 \alpha \tau' \left( \Vert \theta_{t} - \theta_{t-\tau'} \Vert + \Vert \theta_{t} \Vert \right) + O(\alpha \tau' \sqrt{K} \sigma)\\
& \overset{(c)}\leq \frac{1}{2} \Vert \theta_t - \theta_{t-\tau'} \Vert + 4 \alpha \tau' \Vert \theta_{t} \Vert + O(\alpha \tau' \sqrt{K} \sigma).\\ 
\end{aligned}
\label{eqn:drift_int2}
\end{equation}
In the above steps, for (a), we used $\Vert A_k \Vert \leq 2$ and $\Vert \hat{b}_k \Vert \leq O(\sqrt{K} \sigma)$; for (b), we used Eq.~\eqref{eqn:drift_int}; and for (c), we used $\alpha \tau' \leq 1/8.$ Rearranging Eq.~\eqref{eqn:drift_int2} and simplifying, we obtain: 
$$ \medmath{\Vert \theta_t - \theta_{t-\tau'} \Vert \leq 8 \alpha \tau' \Vert \theta_t \Vert + O(\alpha \tau' \sqrt{K} \sigma) \leq  8 \alpha \tau' \Vert \theta_t - \theta^* \Vert + O(\alpha \tau' \sqrt{K} \sigma),}$$
where we used $\Vert \theta^* \Vert \leq \sigma.$ Squaring both sides of the above display leads to the claim of the lemma. 
\end{proof}

With the above lemma in hand, we can now bound the term $\mathbb{E}[M_t]$. 

\begin{lemma} (\textbf{Markovian Bias Bound}) 
\label{lemm:mixing}
Let $M_t$ be as defined in Lemma~\ref{lemma:recursion}. 
Suppose the conditions in the statement of Theorem~\ref{thm:main_result} are met. Then, the following bound holds $\forall t \geq \bar{T}+\tau'$: 
$$\mathbb{E}[M_t] \leq O(\alpha^2 \tau') \mathbb{E}[d_t] + O(\alpha^2 \tau') K \sigma^2. $$  
\end{lemma}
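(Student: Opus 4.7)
The plan is to follow the standard ``Markovian bias decomposition'' technique pioneered in~\cite{bhandari_finite, srikant}, appropriately adapted to our setting. The key structural observation is that by the choice $\tau' = \tau'_{mix}(\alpha)$ in Definition~\ref{def:mix}, the matrix $A_t$, conditioned on information from $\tau'$ steps in the past, has expectation within $\alpha$ of the steady-state $\bar{A}$. To exploit this, I would replace the current iterate $\theta_t$ by the lagged iterate $\theta_{t-\tau'}$, which is measurable with respect to the natural filtration $\mathcal{F}_{t-\tau'}$ and can be pulled outside the conditional expectation.

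Concretely, I would add and subtract $\theta_{t-\tau'}$ to decompose
$$\langle \theta_t - \theta^*, (A_t - \bar{A})\theta_t\rangle = T_1 + T_2 + T_3,$$
where $T_1 = \langle \theta_t - \theta_{t-\tau'}, (A_t - \bar{A})\theta_t\rangle$, $T_2 = \langle \theta_{t-\tau'} - \theta^*, (A_t - \bar{A})(\theta_t - \theta_{t-\tau'})\rangle$, and $T_3 = \langle \theta_{t-\tau'} - \theta^*, (A_t - \bar{A})\theta_{t-\tau'}\rangle$. Both $T_1$ and $T_2$ contain a factor of $\theta_t - \theta_{t-\tau'}$, whose squared norm is bounded by $O(\alpha^2 \tau'^2)(d_t + K\sigma^2)$ by Lemma~\ref{lemm:drift}. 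Using Cauchy-Schwarz and $\|A_t - \bar{A}\|_2 \leq 4$ from Eq.~\eqref{eqn:norms}, followed by Young's inequality together with the bounds $\|\theta_t\|^2 \leq 2d_t + 2\sigma^2$ and $\|\theta_{t-\tau'} - \theta^*\|^2 \leq 2\|\theta_t - \theta_{t-\tau'}\|^2 + 2 d_t$, these two terms contribute at most $O(\alpha\tau')(\mathbb{E}[d_t] + K\sigma^2)$ in expectation.

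The crux is $T_3$. Since $\theta_{t-\tau'}$ is $\mathcal{F}_{t-\tau'}$-measurable, I would tower as
$$\mathbb{E}[T_3 \mid \mathcal{F}_{t-\tau'}] = \bigl\langle \theta_{t-\tau'} - \theta^*,\, \bigl(\mathbb{E}[A_t \mid \mathcal{F}_{t-\tau'}] - \bar{A}\bigr)\theta_{t-\tau'}\bigr\rangle,$$
and invoke Definition~\ref{def:mix} with $\tau' = \tau'_{mix}(\alpha)$ to bound $\|\mathbb{E}[A_t \mid \mathcal{F}_{t-\tau'}] - \bar{A}\|_2 \leq \alpha$. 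Cauchy-Schwarz and Young's inequality then give $\mathbb{E}|T_3| \leq O(\alpha)(\mathbb{E}[d_t] + K\sigma^2)$ after using $\|\theta_{t-\tau'}\|^2 \leq 2\|\theta_{t-\tau'}-\theta^*\|^2 + 2\sigma^2$ and again dominating $\|\theta_{t-\tau'}-\theta^*\|^2$ by $d_t$ plus the (lower-order) drift term. Combining the three pieces and multiplying by the leading $2\alpha$ in $M_t$ produces the claimed bound, with the $T_3$ term contributing the dominant $O(\alpha^2)$ factor and $T_1, T_2$ contributing the $O(\alpha^2\tau')$ corrections that appear in the statement.

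The main obstacle is bookkeeping: one must ensure that the $O(\alpha^2 \tau'^2)$ ``drift penalty'' never inflates to the same order as the main $O(\alpha\tau')$ terms before the final multiplication by $\alpha$. This is where the step-size regime $\alpha \tau' \leq 1/8$ (already used in Lemma~\ref{lemm:drift}) is crucial: it guarantees that all higher-order $\alpha\tau'$ cross-terms absorb cleanly into the leading-order bound. A secondary subtlety is that the drift bound is valid only for $t \geq \bar{T} + \tau'$, matching the range of the current lemma, and the iterate $\theta_{t-\tau'}$ automatically inherits the thresholded structure since the reset in line 8 of Algorithm~\ref{algo:algoRobustTD} is applied at every step past the burn-in.
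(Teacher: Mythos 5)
Your proposal is correct and follows essentially the same route as the paper's proof: the same lagged-iterate decomposition (your middle term $T_2$ is just the paper's $T_3+T_4$ merged), the same use of the drift bound from Lemma~\ref{lemm:drift} for the terms containing $\theta_t-\theta_{t-\tau'}$, and the same tower-property/mixing-time argument via Definition~\ref{def:mix} for the key term $\langle \theta_{t-\tau'}-\theta^*,(A_t-\bar{A})\theta_{t-\tau'}\rangle$. The resulting per-term bounds and the final $O(\alpha^2\tau')\mathbb{E}[d_t]+O(\alpha^2\tau')K\sigma^2$ conclusion match the paper's.
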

\begin{proof}
Let us start by splitting the term $\langle \theta_t - \theta^*, (A_t - \bar{A}) \theta_t \rangle = T_1 + T_2 + T_3 + T_4$ into the four parts shown below:
\begin{equation}
\begin{aligned}
T_1 &= \langle \theta_t - \theta_{t-\tau'}, (A_t - \bar{A}) \theta_t \rangle\\
T_2 &= \langle \theta_{t-\tau'} - \theta^*, (A_t - \bar{A}) \theta_{t-\tau'} \rangle\\
T_3 &= \langle \theta_{t-\tau'} - \theta^*, A_t (\theta_t - \theta_{t-\tau'}) \rangle\\
T_4 &= \langle \theta_{t-\tau'} - \theta^*, \bar{A} (\theta_{t-\tau'} - \theta_t) \rangle.
\end{aligned}
\end{equation}
In what follows, we proceed to bound each of the four terms above. 

\textbf{Bounding $T_1$.} We bound $T_1$ as follows:
\begin{equation}
    \begin{aligned}
T_1 &\leq \frac{1}{2 \alpha \tau'} \Vert \theta_t - \theta_{t-\tau'} \Vert^2 + \frac{\alpha \tau'}{2} \Vert (A_t - \bar{A}) \theta_t \Vert^2 \\
& \leq \frac{1}{2 \alpha \tau'} \Vert \theta_t - \theta_{t-\tau'} \Vert^2 + \alpha \tau' \left( \Vert A_t \Vert^2 + \Vert \bar{A} \Vert^2 \right) \Vert \theta_t \Vert^2\\
& \overset{(a)} \leq \frac{1}{2 \alpha \tau'} \Vert \theta_t - \theta_{t-\tau'} \Vert^2 + 8 \alpha \tau' \Vert \theta_t \Vert^2\\
& \leq \frac{1}{2 \alpha \tau'} \Vert \theta_t - \theta_{t-\tau'} \Vert^2 + 16 \alpha \tau' \Vert \theta_t - \theta^* \Vert^2 + 16 \alpha \tau' \Vert \theta^* \Vert^2 \\
& \overset{(b)}\leq \frac{1}{2 \alpha \tau'} \Vert \theta_t - \theta_{t-\tau'} \Vert^2 + 16 \alpha \tau' d_t + 16 \alpha \tau' \sigma^2\\
& \overset{(c)} \leq O(\alpha \tau') d_t + O (\alpha \tau') K \sigma^2. 
    \end{aligned}
\end{equation}
Here, for (a), we used $\max\{\Vert A_t \Vert, \Vert \bar{A} \Vert \} \leq 2$; for (b), we used $\Vert \theta^* \Vert \leq \sigma$; and for (c), we used Lemma~\ref{lemm:drift}. 

\textbf{Bounding $T_3$.} To bound $T_3$, we proceed as follows: 
\begin{equation}
\begin{aligned}
T_3 & \leq \Vert \theta_{t-\tau'} - \theta^* \Vert \Vert A_t \Vert \Vert \theta_t - \theta_{t-\tau'} \Vert\\
& \leq 2  \Vert \theta_{t-\tau'} - \theta^* \Vert  \Vert \theta_t - \theta_{t-\tau'} \Vert\\
& \leq \alpha \tau' \Vert \theta_{t-\tau'} - \theta^* \Vert^2 + \frac{1}{\alpha \tau'} \Vert \theta_t - \theta_{t-\tau'} \Vert^2\\
& \leq 2 \alpha \tau' d_t + \left (2 \alpha \tau' + \frac{1}{\alpha \tau'} \right) \Vert \theta_t - \theta_{t-\tau'} \Vert^2\\
& \leq O(\alpha \tau') d_t + O (\alpha \tau') K \sigma^2,
\end{aligned}
\end{equation}
where in the last step, we used Lemma~\ref{lemm:drift} and $\alpha \tau' \leq 1.$ The term $T_4$ can be controlled in exactly the same way as above, with the same resulting bound. 

\textbf{Bounding $T_2$.} To bound $T_2$, we will invoke mixing properties of the underlying Markov chain as follows: 
\begin{equation}
\begin{aligned}
    \mathbb{E}\left[T_{2}\right] &= \mathbb{E}\left[\langle \theta_{t-\tau'} -\theta^*, (A_t - \bar{A}) \theta_{t-\tau'}\rangle\right]\\
    &=\mathbb{E}\left[\mathbb{E}\left[\langle \theta_{t-\tau'} -\theta^*, (A_t - \bar{A}) \theta_{t-\tau'}\rangle | \theta_{t-\tau'}, X_{t-\tau'}\right]\right]\\
    &=\mathbb{E}\left[\langle \theta_{t-\tau'} -\theta^*, \mathbb{E}\left[ (A_t - \bar{A}) \theta_{t-\tau'}| \theta_{t-\tau'}, X_{t-\tau'}\right]\rangle\right]\\
    &= \mathbb{E}\left[\langle \theta_{t-\tau'} -\theta^*, \left(\mathbb{E}\left[A_t| X_{t-\tau'}\right] - \bar{A}\right) \theta_{t-\tau'} \rangle\right]\\
    &\leq \mathbb{E}\left[\Vert \theta_{t-\tau'} -\theta^*\Vert \Vert \left(\mathbb{E}\left[A_t| X_{t-\tau'}\right] - \bar{A}\right) \theta_{t-\tau'}        \Vert\right]\\
    & \leq \mathbb{E}\left[\Vert \theta_{t-\tau'} -\theta^*\Vert \Vert \mathbb{E}\left[A_t| X_{t-\tau'}\right] - \bar{A} \Vert \Vert \theta_{t-\tau'}        \Vert\right]\\
    &\overset{(a)}\leq \alpha \mathbb{E}\left[\Vert \theta_{t-\tau'} -\theta^*\Vert  \Vert \theta_{t-\tau'}\Vert\right]\\
    &\leq \alpha \mathbb{E}\left[\Vert \theta_{t-\tau'} -\theta^*\Vert \left(\Vert \theta^* \Vert + \Vert \theta_{t-\tau'}-\theta^*\Vert\right)\right]\\
    &\leq \alpha \mathbb{E}\left[\Vert \theta_{t-\tau'} -\theta^*\Vert \left(\sigma+\Vert \theta_{t-\tau'} -\theta^* \Vert\right)\right]\\
    & \leq O(\alpha) \mathbb{E}\left[\Vert \theta_{t-\tau'} -\theta^*\Vert^2 + \sigma^2\right]\\
    &\leq O(\alpha) \mathbb{E}\left[\Vert \theta_t - \theta_{t-\tau'}\Vert^2 + d_t +  \sigma^2\right]\\ 
    &\overset{(b)} \leq O(\alpha) d_t + O(\alpha K \sigma^2). 
\end{aligned}
\nonumber
\end{equation}
In the above steps, (a) follows from the definition of the mixing time $\tau' = \tau'_{mix}(\alpha)$ in Definition~\ref{def:mix}, and (b) follows from Lemma~\ref{lemm:drift} and $\alpha \tau' \leq 1.$ Combining the bounds on $T_1 - T_4$ leads to the claim of the lemma. 
\end{proof}
We now have all the ingredients needed to complete the proof of Theorem~\ref{thm:main_result}. 

\begin{proof}
\textbf{Proof of Theorem~\ref{thm:main_result}}. Combining the bound on $\mathbb{E}[M_t]$ from Lemma~\ref{lemm:mixing} with the one-step recursion from Lemma~\ref{lemma:recursion}, we obtain $\forall t \geq \bar{T}+ \tau'$:
\begin{equation}
\begin{aligned}
\mathbb{E}[d_{t+1}] &\leq (1 - \alpha \beta + 12\alpha^2) \mathbb{E}[d_t] + O(\alpha^2 K \sigma^2 ) + \frac{\alpha B_t}{\beta} + \mathbb{E}[M_t]\\
&\leq (1- \alpha \beta + C_1 \alpha^2 \tau') \mathbb{E}[d_t] + O(\alpha^2 \tau' K \sigma^2 ) + \frac{\alpha B_t}{\beta}\\
& \leq \left(1 - \frac{\alpha \beta}{2} \right) \mathbb{E}[d_t] + O(\alpha^2 \tau' K \sigma^2 ) + \frac{\alpha B_t}{\beta},
\end{aligned}
\label{eqn:finalbnd1}
\end{equation}
where in the second inequality, $C_1$ is some universal constant, and the last inequality results from picking $\alpha$ to satisfy 
$$ \alpha \leq \frac{\beta}{2 C_1 \tau'}.$$
Recalling that $$ B_t = c\left(\epsilon + \frac{\log^2(KT) \tau_{mix}}{t} \right) K \sigma^2_1,$$
and unrolling the inequality in Eq.~\eqref{eqn:finalbnd1} starting from $t= \bar{T}+\tau'$ yields: 
\begin{equation}
\begin{aligned}
\mathbb{E}[d_T] &\leq \left(1- \frac{\alpha \beta}{2} \right)^{T- \bar{T}-\tau'} \mathbb{E}[d_{\bar{T}+\tau'}] + O(\alpha \tau_{mix} K \sigma^2) \frac{\log^2(KT)}{\beta} \sum_{k= \bar{T}+\tau'}^{T-1} \left(1- \frac{\alpha \beta}{2} \right)^{T- 1-k} \frac{1}{k} \\
& +O(\alpha^2 \tau' K \sigma^2) \sum_{k=0}^{\infty} \left(1- \frac{\alpha \beta}{2} \right)^k + O\left( \frac{\alpha K \epsilon \sigma^2_1}{\beta} \right) \sum_{k=0}^{\infty} \left(1- \frac{\alpha \beta}{2} \right)^k\\
& \leq \left(1- \frac{\alpha \beta}{2} \right)^{T- \bar{T}-\tau'} \mathbb{E}[d_{\bar{T}+\tau'}] +  O\left(\frac{\alpha \tau_{mix} K \log^2(KT) \sigma^2}{\beta} \right) \sum_{k= 1}^{T-1} \frac{1}{k} +  O\left( \frac{\alpha \tau' K \sigma^2}{\beta} \right) + O\left( \frac{K \epsilon \sigma^2_1}{\beta^2} \right)\\
& \leq \left(1- \frac{\alpha \beta}{2} \right)^{T- \bar{T}-\tau'} \mathbb{E}[d_{\bar{T}+\tau'}] + O\left(\frac{\alpha \bar{\tau}_{mix} K \log^3(KT) \sigma^2}{\beta} \right) + O\left( \frac{K \epsilon \sigma^2_1}{\beta^2} \right),
\end{aligned}
\end{equation}
where recall that $\bar{\tau}_{mix}=\max\{\tau', \tau_{mix}\}.$ To arrive at the last step, we used $\sum_{k=1}^{T-1} (1/k) = O(\log(T)).$ Now suppose $T$ is chosen sufficiently large such that $\bar{T} \leq T/4$ and $\tau' \leq T/4$. Then, we have:
\begin{equation}
\begin{aligned}
    &\mathbb{E}[{d_T}] \leq \underbrace{\left(1- \frac{\alpha \beta}{2} \right)^{T/2} \mathbb{E}[d_{\bar{T}+\tau'}]}_{(*)} + \underbrace{O\left(\frac{\alpha \bar{\tau}_{mix} K \log^3(KT) \sigma^2}{\beta} \right)}_{(**)} + O\left( \frac{K \epsilon \sigma^2_1}{\beta^2} \right).
\end{aligned}
\label{eqn:finalbnd2}
\end{equation}
Now let us substitute $\alpha = \frac{4}{\beta} \frac{\log(T)}{T}$ in the above bound. With this choice of $\alpha$, we have
$$(**) \leq \tilde{O}\left( \frac{K \bar{\tau}_{mix} \sigma^2}{\beta^2 T} \right). $$
We claim that $(*) \leq O(\sigma^2 K/T).$ To see why, start by noting that based on our choice of $\alpha$:
$$ \left(1- \frac{\alpha \beta}{2} \right)^{T/2} \leq \exp\left( - \frac{\alpha \beta T}{4} \right) = \exp (-\log(T)) = \frac{1}{T}.$$
To establish the claim, it remains to argue that $\mathbb{E}[d_{\bar{T}+ \tau'}] \leq O(K \sigma^2).$ To that end, using the same reasoning as we did to arrive at Eq.~\eqref{eqn:drift_int}, we have
\begin{equation}\begin{aligned}\Vert \theta_{\bar{T}+\tau'} \Vert  \leq 2 \Vert \theta_{\bar{T}} \Vert + O(\sqrt{K} \sigma) &= 2 \Vert \theta_{0} \Vert + O(\sqrt{K} \sigma)\\
&= O(\sqrt{K} \sigma).\end{aligned}\end{equation}
Here, we used that for $t \leq \bar{T}, \theta_t = \theta_0$, and $\Vert \theta_0 \Vert \leq \sigma.$ Thus, $d_{\bar{T}+\tau'} = \Vert \theta_{\bar{T}+\tau'}  - \theta^* \Vert^2 \leq 2 \Vert \theta_{\bar{T}+\tau'} \Vert^2 + 2 \Vert \theta^* \Vert^2 \leq O(K \sigma^2).$ Combining all the pieces together, we have
\begin{equation}\begin{aligned}
&\mathbb{E}[d_T] \leq \tilde{O}\left( \frac{\bar{\tau}_{mix } \sigma^2 G}{T} \right) + O\left(\epsilon \sigma^2_1 G\right), \\
&\textrm{where} \hspace{1mm} G = \frac{K}{\omega^2 (1-\gamma)^2}.\end{aligned}\end{equation}

To complete the proof, it remains to specify the parameters $\bar{T}$ and $T$. As for the burn-in time $\bar{T}$, we note from the analysis of Lemma~\ref{lemm:rob_est} that the following choice of $\bar{T}$ suffices:
$$ \bar{T} = \ceil{ c_1 \tau_{mix} \log^2(KT)}.$$

Next, all the requirements on the step-size $\alpha$ needed to arrive at our final bound can be subsumed into the following requirement:
$$ \alpha \leq \frac{\omega (1-\gamma)}{ C' \tau'},$$
where $C' \geq 8$ is some suitably large universal constant. Now, since we have fixed $\alpha $ to be $\frac{4}{\beta} \frac{\log(T)}{T}$, the above criterion can be met, provided the number of iterations $T$ satisfies:
$$ T \geq \frac{ 4 C' \tau' \log(T)}{\omega^2 (1-\gamma)^2}.$$
The above requirement on $T$, combined with the fact that we need $T \geq \bar{T} + \tau'$, justifies the choice of $T$ in the statement of Theorem~\ref{thm:main_result}.
\end{proof}
\section{Proof of Lower Bound in Theorem~\ref{thm:lowerbnd}}
\label{app:lowerbnd}
In this section, we will establish the lower bound in Theorem~\ref{thm:lowerbnd}. Let us start by explaining the high-level idea behind our proof, and then we will supply all the technical details. 

The main idea is to construct two different Markov Reward Processes (MRPs) induced by the same policy, such that (i) the value functions induced by the policy differ in magnitude by $\Omega(\sqrt{\epsilon})$ in the two MRPs; and (ii) the distribution of rewards under the Huber-contaminated observation model is identical across the two MRPs. It is then not too hard to argue that any estimator for a value function must suffer an error of  $\Omega(\sqrt{\epsilon})$ on at least one of the two MRPs. We now proceed to construct our hard instance.

\textbf{Step 1: Construction of the MRPs.} Consider a MDP with just one state $s$ and one action $a$. Trivially, a policy $\mu$ thus maps $s$ to $a$, and there is no randomness in terms of state transitions. We will now construct two MRPs, MRP 1 and MRP 2, induced by the policy $\mu$, that differ in terms of their noisy reward models. For MRP 1, the reward random variable $r_1(s)$ has support comprising two values: 
\begin{equation}\label{eqn:D1}
\begin{aligned}
r_1(s)=\begin{cases}
      \frac{\rho}{\sqrt{\epsilon}} & \text{with probability $\frac{\epsilon}{4 (1-\epsilon)}$}\\
      0 & \text{with probability $1 - \frac{\epsilon}{4 (1-\epsilon)}$},\\
    \end{cases} 
\end{aligned}
\end{equation}
where $\rho > 0$ is some positive constant. We call this reward distribution $\mc{D}_1$. For MRP 2, the reward random variable $r_2(s)$ has distribution $\mc{D}_2$ defined similarly as follows:
\begin{equation}\label{eqn:D2}
\begin{aligned}
r_2(s)=\begin{cases}
      - \frac{\rho}{\sqrt{\epsilon}} & \text{with probability $\frac{\epsilon}{4 (1-\epsilon)}$}\\
      0 & \text{with probability $1 - \frac{\epsilon}{4 (1-\epsilon)}$}.\\
    \end{cases} 
\end{aligned}
\end{equation}
Let the mean of the rewards under $\mc{D}_1$ and $\mc{D}_2$ be denoted by $R_1$ and $R_2$, respectively. It is then easy to see that
$$ R_1 = \frac{\rho \sqrt{\epsilon}}{ 4 (1-\epsilon)}, \hspace{1mm} \textrm{and} \hspace{1mm} R_2 = - \frac{\rho \sqrt{\epsilon}}{ 4 (1-\epsilon)}.$$
Furthermore, the variance of both $r_1(s)$ and $r_2(s)$ is given by
$$ Var(r_1(s)) = Var(r_2(s)) \leq \frac{\rho^2}{\epsilon} \times \frac{\epsilon}{4 (1-\epsilon)} < 0.5 \rho^2,$$
where we used the fact that the corruption fraction $\epsilon$ satisfies $\epsilon < 0.5$. 
 Thus, each reward model has a finite variance bounded above by $ \rho^2$. It is easily seen that the value functions in the two MRPs, say $V_1$ and $V_2$, satisfy:\footnote{We drop the dependence of $V_i$ on $s$ since there is only one state.}
\begin{equation}
V_i = \frac{R_i}{(1-\gamma)}, i \in \{1, 2\}. 
\label{eqn:vfuncs} 
\end{equation}

\textbf{Step 2: Construction of the Attack Distributions.} Consider an error distribution $\mc{Q}_1$ associated with MRP 1 such that a random variable $Z_1 \sim \mc{Q}_1$ is given by 
\begin{equation}\label{eqn:Q1}
\begin{aligned}
Z_1=\begin{cases}
      - \frac{\rho}{\sqrt{\epsilon}} & \text{with probability $\frac{1}{2}$} \\
      0 & \text{with probability $\frac{1}{4}$},\\
      \frac{\rho}{\sqrt{\epsilon}} & \text{with probability $\frac{1}{4}$}.
    \end{cases} 
\end{aligned}
\end{equation}

Now consider a random variable $X$ drawn from the Huber-contaminated mixture model $(1-\epsilon) \mc{D}_1 + \epsilon \mc{Q}_1$. Given the distributions of $\mc{D}_1$ and $\mc{Q}_1$, one can verify (with straightforward calculations) that the distribution of $X$ is as follows:
\begin{equation}
\label{eqn:distX}
\begin{aligned}
X=\begin{cases}
      - \frac{\rho}{\sqrt{\epsilon}} & \text{with probability $\frac{\epsilon}{2}$} \\
      0 & \text{with probability $1 - \epsilon$},\\
      \frac{\rho}{\sqrt{\epsilon}} & \text{with probability $\frac{\epsilon}{2}$}. 
    \end{cases} 
\end{aligned}
\end{equation}

For MRP 2, we construct an error distribution $\mc{Q}_2$ such that a random variable $Z_2$ drawn from $\mc{Q}_2$ is as follows:
\begin{equation}
\begin{aligned}
Z_2=\begin{cases}
      - \frac{\rho}{\sqrt{\epsilon}} & \text{with probability $\frac{1}{4}$} \\
      0 & \text{with probability $\frac{1}{4}$},\\
      \frac{\rho}{\sqrt{\epsilon}} & \text{with probability $\frac{1}{2}$}.
    \end{cases} 
\end{aligned}
\end{equation}
Now consider a random variable $Y$ drawn as per $(1-\epsilon) \mc{D}_2 + \epsilon \mc{Q}_2$. The specific nature of our construction ensures that 
$$(1-\epsilon) \mc{D}_1 + \epsilon \mc{Q}_1 = (1-\epsilon) \mc{D}_2 + \epsilon \mc{Q}_2.$$

In summary, the contaminated reward random variable $X$ in MRP 1 has the same distribution as the contaminated reward random variable $Y$ in MRP 2. As such, these two reward models are indistinguishable to a learner. However, we also have: 
\begin{equation}
|V_1 - V_2| = \frac{\rho \sqrt{\epsilon}}{ 2 (1-\epsilon) (1-\gamma)} \geq \frac{\rho \sqrt{\epsilon}}{2 (1-\gamma)}.
\label{eqn:Vdiff}
\end{equation}
 In light of the above facts, we now proceed to argue that any value-function estimator must suffer $\Omega\left(\frac{\rho \sqrt{\epsilon}}{(1-\gamma)}\right)$ error in at least one of the MRPs. 

\textbf{Step 3. Lower-bounding Error of any Estimator.} For $i= 1, \ldots, T$, let $(X_i, Y_i)$ be independent pairs of random observations satisfying:
$$ \mathbb{P}(X_i = Y_i = - \rho/{\sqrt{\epsilon}}) = \frac{\epsilon}{2}, $$ 
$$\, \mathbb{P}(X_i = Y_i = 0) = 1- \epsilon \, , \, \mathbb{P}(X_i = Y_i =  \rho/{\sqrt{\epsilon}}) = \frac{\epsilon}{2}. $$
Let us note that $X_i$ is distributed as per $(1-\epsilon) \mc{D}_1 + \epsilon \mc{Q}_1$, and $Y_i$ as per $(1-\epsilon) \mc{D}_2 + \epsilon \mc{Q}_2$. Clearly, the following is true: $\mathbb{P}\left(\{X_i\}_{i \in [T]} = \{Y_i\}_{i \in [T]}\right) =1.$ Now suppose $\hat{R}_T$ is any estimator for estimating the means of the rewards in the two MRPs. As we shall see, establishing a fundamental limit on the performance of $\hat{R}_T$ is sufficient to establish a limit on the performance of any value-function estimator. In what follows, for conciseness of notation, let
$$ B \triangleq \frac{\rho \sqrt{\epsilon}}{ 4 (1-\epsilon)}. $$

We then have
\begin{equation}\medmath{
\begin{aligned}
&\max\left\{ \mathbb{P}\left( \vert \hat{R}_T(\{X_i\}_{i \in [T]}) - R_1 \vert > \frac{B}{2} \right), \mathbb{P}\left( \vert \hat{R}_T(\{Y_i\}_{i \in [T]}) - R_2 \vert > \frac{B}{2} \right) \right\} \\
& \geq \frac{1}{2} \mathbb{P} \left( \left\{\vert \hat{R}_T(\{X_i\}_{i \in [T]}) - R_1 \vert > \frac{B}{2} \right\} \bigcup \left\{ \vert \hat{R}_T(\{Y_i\}_{i \in [T]}) - R_2 \vert > \frac{B}{2} \right \} \right) \\
& \geq \frac{1}{2} \mathbb{P}\left(
\hat{R}_T(\{X_i\}_{i \in [T]}) = \hat{R}_T(\{Y_i\}_{i \in [T]})
\right)\\
& \geq \frac{1}{2} \mathbb{P}\left(
\{X_i\}_{i \in [T]} = \{Y_i\}_{i \in [T]}
\right)\\
& = \frac{1}{2},
\end{aligned}}
\end{equation}
where for the second inequality, we used $R_1 =B$ and $R_2 = -B$. Using $1/(1-\epsilon) > 1$, we then conclude that:
\begin{equation}\label{eqn:prob_bnd}
    \begin{aligned}
        &\max\left\{ \mathbb{P}\left( \left| \hat{R}_T(\{X_i\}_{i \in [T]}) - R_1 \right| > \frac{\rho \sqrt{\epsilon}}{8} \right), \right.\left. \mathbb{P}\left( \left| \hat{R}_T(\{Y_i\}_{i \in [T]}) - R_2 \right| > \frac{\rho \sqrt{\epsilon}}{8} \right) \right\} \geq \frac{1}{2}.
    \end{aligned}
\end{equation}
Let $\hat{V}_T$ be any estimator for the value functions in the two MRPs. In light of Eq.~\eqref{eqn:prob_bnd}, we claim that 
\begin{equation}\label{eqn:prob_bnd2}
    \begin{aligned}
        &\max\left\{ \mathbb{P}\left( \vert \hat{V}_T(\{X_i\}_{i \in [T]}) - V_1 \vert > \frac{\rho \sqrt{\epsilon}}{8 (1-\gamma) } \right), \right.\left. \mathbb{P}\left( \vert \hat{V}_T(\{Y_i\}_{i \in [T]}) - V_2 \vert > \frac{\rho \sqrt{\epsilon}}{8 (1-\gamma) } \right) \right\} \geq \frac{1}{2}.
    \end{aligned}
\end{equation}
The claim essentially follows from the simple observation that if a value-function estimator $\hat{V}_T$ can accurately estimate both $V_1$ and $V_2$, then one can use such an estimator to construct accurate estimates of both $R_1$ and $R_2$, thereby violating Eq.~\eqref{eqn:prob_bnd}. Formally, to see that Eq.~\eqref{eqn:prob_bnd} implies Eq.~\eqref{eqn:prob_bnd2}, suppose there exists an estimator $\hat{V}_T$ such that
\begin{equation}
    \begin{aligned}
        \max\Bigg\{ &\mathbb{P}\left( \left| \hat{V}_T(\{X_i\}_{i \in [T]}) - V_1 \right| > \frac{\rho \sqrt{\epsilon}}{8 (1-\gamma) } \right),
        &\mathbb{P}\left( \left| \hat{V}_T(\{Y_i\}_{i \in [T]}) - V_2 \right| > \frac{\rho \sqrt{\epsilon}}{8 (1-\gamma) } \right)
        \Bigg\} < \frac{1}{2}.
    \end{aligned}
\end{equation}
Using $\hat{V}_T$, construct a reward estimator $\hat{R}_T = (1-\gamma) \hat{V}_T$. From Eq.~\eqref{eqn:vfuncs}, we then immediately have:
\begin{equation}
    \begin{aligned}
        \max\Bigg\{ &\mathbb{P}\left( \left| \hat{R}_T(\{X_i\}_{i \in [T]}) - R_1 \right| > \frac{\rho \sqrt{\epsilon}}{8} \right), \mathbb{P}\left( \left| \hat{R}_T(\{Y_i\}_{i \in [T]}) - R_2 \right| > \frac{\rho \sqrt{\epsilon}}{8} \right)
        \Bigg\} < \frac{1}{2}.
    \end{aligned}
\end{equation}

This completes the claim and the proof. 
\begin{figure}[t]
\begin{tabular}{cc}
\hspace{15 mm} \includegraphics[scale=0.45]{vulnerabilityexp1.pdf}& \hspace{-3mm}  \includegraphics[scale=0.45]{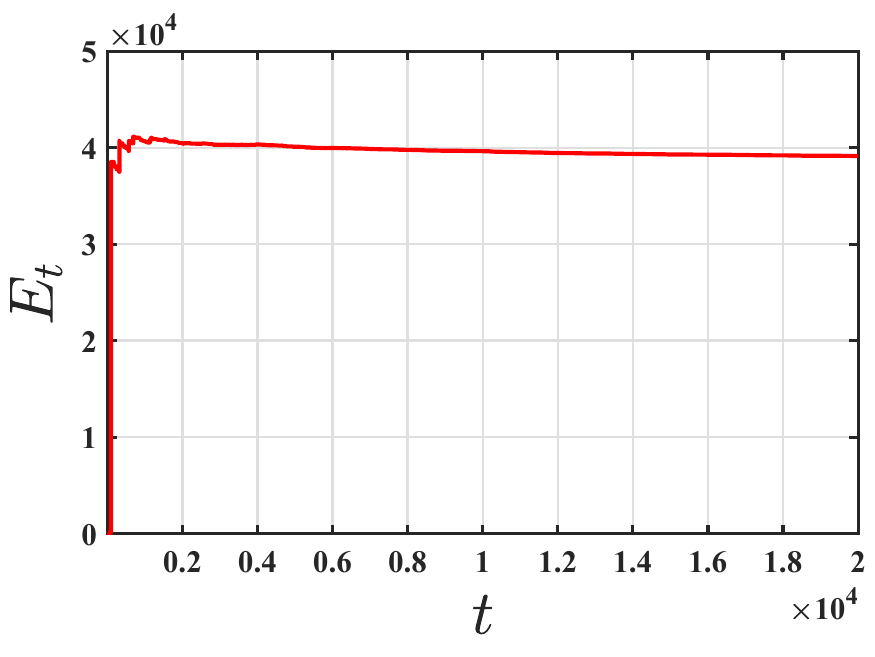}
\end{tabular}
\caption{Plots of the mean-square error $E_t=\Vert \theta_T-\theta^* \Vert^2_2$ for \texttt{TD}(0) under the Huber-contaminated reward model with corruption probability $\epsilon=0.001$, and a simple biasing attack where the attack signal is $100/\epsilon.$
(\textbf{Left}) Constant step-size $\alpha=0.1$. (\textbf{Right}) Diminishing step-size $\alpha_t =1/t$.} 
\label{fig:sim1}
\end{figure}

\begin{figure}[t]
\begin{tabular}{cc}
\hspace{15 mm} \includegraphics[scale=0.45]{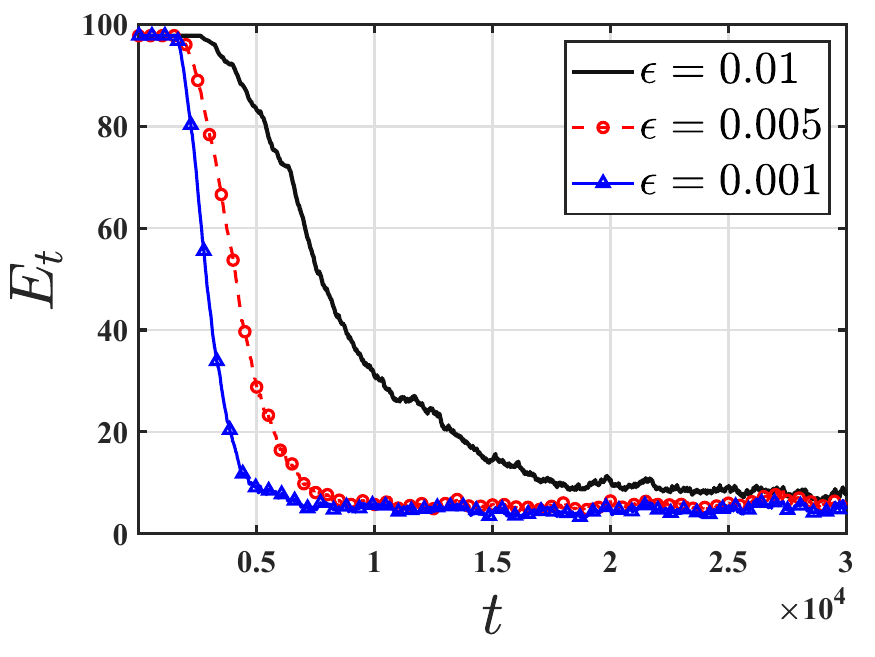}& \hspace{-3mm}  \includegraphics[scale=0.45]{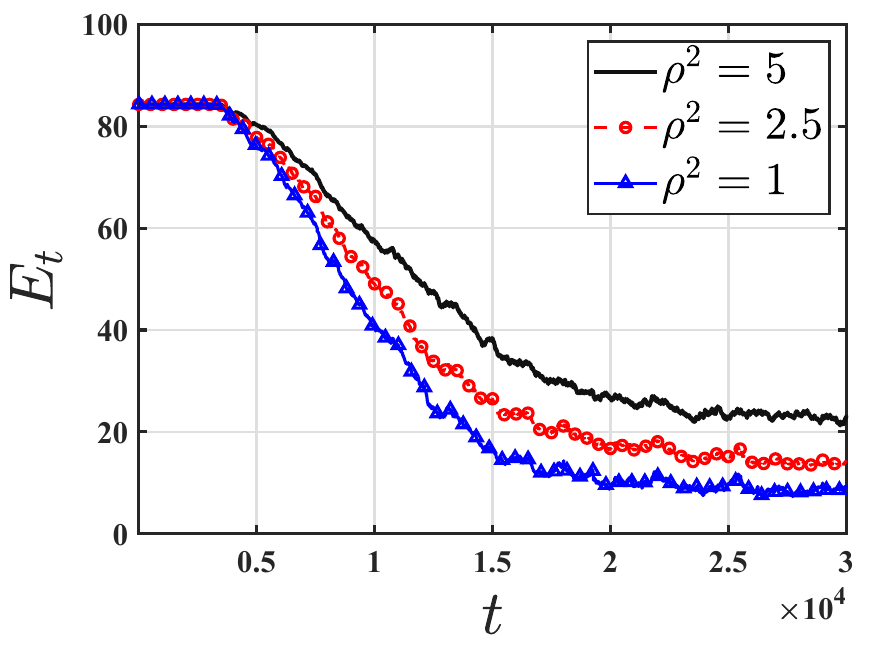}
\end{tabular}
\caption{Plots of the mean-square error $E_t=\Vert \theta_T-\theta^* \Vert^2_2$ for \texttt{Robust-TD} under the Huber-contaminated reward model with a biasing attack, where the attack signal is $100/\epsilon.$
(\textbf{Left}) Effect of varying the corruption probability $\epsilon$. (\textbf{Right}) Effect of varying the noise variance $\rho^2$.}  
\label{fig:sim2}
\end{figure}
\section{Simulation Study}
\label{app:Sims}
In this section, we report some synthetic experiments to support the theory developed in this paper. All simulations are performed on an HP Spectre x$360$ personal laptop with $11$th Gen Intel(R) $4$-Core Processor. 

\textbf{Basic Setup.} We construct an MDP with $100$ states, and use a feature matrix $\Phi$ with $K=10$ independent basis vectors. Using this MDP, we generate the state transition matrix $P_{\mu}$ and reward vector $R_{\mu}$ associated with a fixed policy $\mu$. For all our simulations, the discount factor $\gamma$ is set to $0.5$, and the rewards are generated uniformly at random from the interval $(0,5)$. Unless specified, the step size $\alpha$ is chosen to be $0.1$. We perform $10$ independent trials per simulation and average the errors from each trial to report the mean-square error $E_t=\Vert \theta_T-\theta^* \Vert^2_2$. With this basic setup, we now report various experiments below.

\begin{enumerate}
\item \textbf{Vulnerability of \texttt{TD}(0)}. The purpose of the first simulation is to reveal the vulnerability of the basic \texttt{TD}(0) algorithm to adversarial reward contamination. To that end, we consider a scenario where the corruption fraction is $\epsilon=0.001$, and the rewards are noiseless. In each state $s$, with probability $\epsilon$, the adversary injects a biasing signal of magnitude $100/\epsilon.$ The outcome of this experiment is reported in  Fig.~\ref{fig:sim1}. When the step size is held constant at $\alpha=0.1$, convergence is to a ball centered around a perturbed parameter $\tilde{\theta}^*$, where $\tilde{\theta}^*$ is as in Theorem~\ref{thm:vulnerability}. The size of this ball scales with the effective reward magnitude that depends on the bias $100/\epsilon$. Hence, in the left display of Fig.~\ref{fig:sim1}, we see large oscillations that depend on the bias magnitude. With a diminishing step-size of the form $\alpha_t =1/t$, Theorem~\ref{thm:vulnerability} suggests exact convergence to the perturbed point $\tilde{\theta}^*$. This is reflected in the right display of Fig.~\ref{fig:sim1}, where the mean-square error (MSE) converges to a steady-state value that is bounded far away from $0$. 

\item \textbf{Performance of \texttt{Robust-TD}}. In our next simulation, we assess the performance of \texttt{Robust-TD}. For our first experiment, the noise model comprises a zero-mean Gaussian distribution with a variance of 1. We consider the same biasing attack as before, where the attacker injects a constant bias of $100/\epsilon$. We vary the corruption probability $\epsilon$, and report our findings in the left display of Fig.~\ref{fig:sim2}. For each value of $\epsilon$, the MSE converges to a small ball around $0$. In the next experiment, we fix the corruption probability to $0.01$ and vary the noise variance level. We consider three different values of variance: $5$, $2.5$, and $1$. As expected, with a constant step size, the MSE settles down to a ball around the origin, where the size of the ball depends on the noise variance. Notably, complying with Theorem~\ref{thm:main_result}, the MSE of \texttt{Robust-TD} is unaffected by the magnitude of the adversarial bias input. 
\end{enumerate}
\end{document}